\crefname{section}{Section}{Sections}
\crefname{theorem}{Theorem}{Theorems}
\crefname{lemma}{Lemma}{Lemmas}
\crefname{equation}{Equation}{Equations}
\crefname{proposition}{Proposition}{Propositions}
\crefname{claim}{Claim}{Claims}
\crefname{appendix}{Appendix}{Appendices}
\crefname{algorithm}{Algorithm}{Algorithms}
\crefname{figure}{Figure}{Figures}
\crefname{table}{Table}{Tables}
\crefname{remark}{Remark}{Remarks}
\crefname{definition}{Def.}{Definitions}
\crefname{corollary}{Corollary}{Corollaries}
\definecolor{cite_color}{HTML}{114083}
\definecolor{link_color}{RGB}{0,102,102}
\definecolor{link_color}{RGB}{153, 0,0}  
\definecolor{url_color}{RGB}{153, 102,  0}
\definecolor{emp_color}{RGB}{0,0,255}
\DeclarePairedDelimiterX{\infdivx}[2]{(}{)}{%
  #1\;\delimsize\|\;#2%
  }
\newcommand{\tx}{\tilde{x}}
\newcommand*\dif{\mathop{}\!\mathrm{d}}
\providecommand{\customgenericname}{}
\newcommand{\newcustomtheorem}[2]{%
  \newenvironment{#1}[1]
  {%
   \renewcommand\customgenericname{#2}%
   \renewcommand\theinnercustomgeneric{##1}%
   \innercustomgeneric
  }
  {\endinnercustomgeneric}
}
\DeclareMathOperator*{\argmin}{argmin}
\newcommand{\norm}[1]{\left\lVert#1\right\rVert}
\newtheorem{theorem}{Theorem}
\newtheorem{lemma}{Lemma}
\newtheorem*{remark}{Remark}
\title{Improving Probabilistic Diffusion Models With Optimal Diagonal Covariance Matching}
\author{%
  Zijing Ou$^{1}$\thanks{Equal contribution. Code is available at: \url{https://github.com/J-zin/OCM_DPM}.}, \ Mingtian Zhang$^{2}$\footnotemark[1], \ Andi Zhang$^{3}$, \ Tim Z. Xiao$^{456}$, \ Yingzhen Li$^{1}$, \ David Barber$^{2}$ \\
  $^1$Imperial College London, \ $^2$University College London, \ $^3$University of Manchester, \\
  $^4$Max Planck Institute for Intelligent Systems, Tubingen \ $^5$University of Tubingen, \ $^6$IMPRS-IS. \\
  \texttt{z.ou22@imperial.ac.uk \ m.zhang@cs.ucl.ac.uk } \\
}
\begin{document}

\maketitle

\begin{abstract}
The probabilistic diffusion model has become highly effective across various domains. Typically, sampling from a diffusion model involves using a denoising distribution characterized by a Gaussian with a learned mean and either fixed or learned covariances. In this paper, we leverage the recently proposed covariance moment matching technique and introduce a novel method for learning the diagonal covariance. Unlike traditional data-driven diagonal covariance approximation approaches, our method involves directly regressing the optimal diagonal analytic covariance using a new, unbiased objective named Optimal Covariance Matching (OCM). This approach can significantly reduce the approximation error in covariance prediction. We demonstrate how our method can substantially enhance the sampling efficiency, recall rate and likelihood of commonly used diffusion models.
\end{abstract}

\section{Introduction}

Diffusion models~\citep{sohl2015deep,ho2020denoising,song2019generative} have achieved remarkable success in modeling complex data across various domains~\citep{rombach2022high,li2022diffusion,poole2022dreamfusion,ho2022video,hoogeboom2022equivariant,liu2023audioldm}. A conventional diffusion model operates in two stages: a forward noising process, indexed by $t \in [1, T]$, which progressively corrupts the data distribution into a Gaussian distribution via Gaussian convolution, and a reverse denoising process, which generates images by gradually transforming Gaussian noise back into coherent data samples. In traditional diffusion models, the generation process typically predicts only the mean of the denoising distribution while using a fixed, pre-defined variance~\citep{ho2020denoising}. This approach often requires a very large number of steps, $T$, to produce high-quality, diverse samples or to achieve reasonable model likelihoods, leading to inefficiencies during inference.

To address this inefficiency, several works have proposed estimating the diagonal covariance of denoising distributions rather than relying on pre-defined variance values. For instance, \cite{bao2022analytic} introduces an analytical form of isotropic, state-independent covariance that can be estimated from the data. This analytical solution achieves an optimal form under the constraints of isotropy and state independence. A more flexible approach involves learning a state-dependent diagonal covariance. \cite{nichol2021improved} propose learning this form by optimizing the variational lower bound (VLB). \cite{bao2022estimating} explore learning a state-dependent diagonal covariance directly from data, also examining its analytical form. These methods have demonstrated improved image quality with fewer denoising steps. Learning the covariance through VLB optimization~\citep{nichol2021improved} has become a widely adopted strategy in state-of-the-art image and video generative models.

Building on this line of research, the goal of our paper is to develop an improved denoising covariance strategy that enhances both the generation quality and likelihood evaluation while reducing the number of total time steps. Recently, \cite{zhang2024moment} derived the optimal state-dependent full covariance for the denoising distribution. While this method offers greater flexibility than state-dependent diagonal covariance, it requires $O(D^2)$ storage for the Hessian matrix and $O(D)$ network evaluations per denoising step, where $D$ is the data dimension. This makes it impractical for high-dimensional applications, such as image generation.
To address this limitation, we propose a novel, unbiased covariance matching objective that enables a neural network to match the diagonal of the optimal state-dependent diagonal covariance. Unlike previous methods~\citep{bao2022estimating, nichol2021improved}, which learn the diagonal covariance directly from the data, our approach estimates the diagonal covariance from the learned score function. We show that this method significantly reduces covariance estimation errors compared to existing techniques. Moreover, we demonstrate that our approach can be applied to both Markovian (DDPM) and non-Markovian (DDIM) diffusion models, as well as latent diffusion models. This results in improvements in generation quality, recall, and likelihood evaluation, while also reducing the number of function evaluations (NFEs).

\section{Background of Probabilistic Diffusion Models}
We first introduce two classes of diffusion models that will be explored further in our paper.

\subsection{Markovian Diffusion Model: DDPM} 
Let $q(x_0)$ be the true data distribution, denoising diffusion probabilistic models (DDPM) \citep{ho2020denoising} constructs the following Markovian process
\begin{align}\label{eq:forward}
    q(x_{0:T})=q(x_0)\prod_{t=1}^T q(x_{t}|x_{t-1}), 
\end{align}
where $q(x_t|x_{t-1})=\mathcal{N}(\sqrt{1-\beta_t} x_{t-1},\beta_tI)$ and $\beta_{1:T}$ is the pre-defined noise schedule. We can also derive the following skip-step noising distribution in a closed form
\begin{align}
    q(x_t|x_0)=\mathcal{N}(\sqrt{\bar{\alpha}_t}x_0,(1-\bar{\alpha}_t)I),
\end{align}
where $\bar{\alpha}_t\equiv\prod_{s=1}^t(1-\beta_s)$. When $T$ is sufficiently large, we have the marginal distribution $q(x_T)\rightarrow \mathcal{N}(0,I)$. The generation process utilizes an initial distribution $q(x_T)$ and the denoising distribution $q(x_{t-1}|x_t)$. We assume for a large $T$ and variance preversing schedule discussed in Equation~\ref{eq:forward}, we can approximate $q(x_T)\approx p(x_T)=\mathcal{N}(0,I)$. The true $q(x_{t-1}|x_t)$ is intractable and is approximated as a variational distribution $p_\theta (x_{t-1}|x_t)$ within a  Gaussian family, which defines the  reverse denoising process  as
\begin{align}
p_\theta (x_{t-1}|x_t)=\mathcal{N}(x_{t-1}|\mu_{t-1}(x_t;\theta),\Sigma_{t-1}(x_t;\theta)).
\end{align}
With Tweedie's Lemma \citep{efron2011tweedie,robbins1992empirical}, we can have the following score representation
\begin{align}
\mu_{t-1}(x_t;\theta)=(x_t+\beta_t\nabla_{x_t}\log p_\theta (x_t))/\sqrt{1-\beta_t},
\end{align}
where the approximated score function $\nabla_{x_t}\log p_\theta (x_t)\approx \nabla_{x_t}\log q(x_t)$ can be learned by the denoising score matching (DSM)~\citep{vincent2011connection,song2019generative}.  

For the covariance $\Sigma_{t-1}(x_t;\theta)$, two heuristic choices were proposed in the original DDPM paper: 1. $\Sigma_{t-1}(x_t;\theta)=\beta_t$, which is equal to the variance of $q(x_{t}|x_{t-1})$ and 2. $\Sigma_{t-1}(x_t;\theta)=\tilde{\beta}_t$  where $\tilde{\beta}_t=(1-\bar{\alpha}_{t-1})/(1-\bar{\alpha}_t)\beta_t$ is the variance of $q(x_{t-1}|x_t,x_0)$. Although heuristic, \citet{ho2020denoising} show that when $T$ is large, both options yield similar generation quality.

\subsection{Non-Markovian Diffusion Model: DDIM}  
In additional to the Markovian diffusion process, denoising diffusion implicit models (DDIM)~\citep{song2020denoising} only defines the  condition $q(x_t|x_0)=\mathcal{N}(\sqrt{\bar{\alpha}_t}x_0,(1-\bar{\alpha}_t)I)$ and  let
\begin{align}
q(x_{t-1}|x_t) \approx \int q(x_{t-1}|x_t,x_0) p_\theta(x_0|x_t)dx_0,
\end{align}
where $q(x_{t-1}|x_t,x_0)=\mathcal{N}(\mu_{t-1}, \sigma^2_{t-1})$ 
 with
\begin{align}
\mu_{t{-}1}=\sqrt{\Bar{\alpha}_{t{-}1}}x_0+ \sqrt{1{-}\Bar{\alpha}_{t{-}1}{-}\sigma_{t{-}1}^2}(x_t{-}\sqrt{\Bar{\alpha}_t}x_0)/\sqrt{1-\Bar{\alpha}_t}. \label{eq:ddim}
\end{align}
When $\sigma_{t-1}=\sqrt{(1-\bar{\alpha}_{t-1})/(1-\bar{\alpha}_t)\beta}_t$, the diffusion process becomes Markovian and equivalent to DDPM. Specifically, DDIM chooses $\sigma\rightarrow 0$, which implicitly defines a non-Markovian diffusion process. In the original paper~\citep{song2020denoising}, the $q(x_0|x_t)$ is heuristically chosen as a delta function $p_\theta (x_0|x_t)=\delta(x_0-\mu_{0}(x_t;\theta))$ where
\begin{align}
\mu_{0}(x_t;\theta)=(x_t+(1-\bar{\alpha}_t)\nabla_{x_t}\log p_\theta(x_t))/\sqrt{\bar{\alpha}_t}.
\end{align}
In both DDPM and DDIM, the covariance of $p_\theta(x_{t-1}|x_t)$ or $p_\theta(x_0|x_t)$  are chosen based on heuristics. \cite{nichol2021improved} have shown that the choice of covariance makes a big impact when $T$ is small. Therefore, for the purpose of accelerating the diffusion sampling, our paper will focus on how to improve the covariance estimation quality in these cases. We will first introduce our method in the next section and then compare it with other methods in  Section~\ref{sec:related}.

\section{Diffusion Models with Optimal Covariance Matching}
Recently, \cite{zhang2024moment} introduce the optimal covariance form of the denoising distribution $q(x|\tx)\propto q(\tx|x)q(x)$ for the Gaussian convolution $q(\tx|x)=\mathcal{N}(x,\sigma^2I)$, which can be seen as a high-dimensional extension of the second-order Tweedie's Lemma~\citep{efron2011tweedie, robbins1992empirical}. We further extend the formula to \textbf{scaled} Gaussian convolutions in the following theorem.
\begin{restatable}[Generalized Analytical Covariance Identity]{theorem}{restatheoremone}\label{theo:cov}
Given a joint distribution $q(\tx,x)=q(\tx|x)q(x)$ with $q(\tx|x)=\mathcal{N}(\alpha x,\sigma^2 I)$, 
 then the covariance of the true posterior $q(x|\tx)\propto q(x)q(\tx|x)$, 
 which is defined as $\Sigma(\tx)=\mathbb{E}_{q(x|\tx)} [x^2] - \mathbb{E}_{q(x|\tx)} [x]^2$, has a closed form:
\begin{align}
\Sigma(\tx)=\left(\sigma^4\nabla_{\tilde{x}}^2\log q(\tilde{x})+\sigma^2I\right)/\alpha^2.\label{eq:fullcov}
\end{align}
\end{restatable}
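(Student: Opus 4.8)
The plan is to obtain the identity by differentiating the marginal $q(\tx)=\int q(\tx\mid x)q(x)\,\dif x$ twice in $\tx$ and exploiting the Gaussian form of the kernel $q(\tx\mid x)=\mathcal{N}(\alpha x,\sigma^2 I)$; this is a scaled, multivariate version of the second-order Tweedie argument. First I would record the elementary derivative $\nabla_{\tx}q(\tx\mid x)=-\sigma^{-2}(\tx-\alpha x)\,q(\tx\mid x)$, push it through the integral (justified below), and divide by $q(\tx)$ to get the first-order identity
\begin{align}
\nabla_{\tx}\log q(\tx)=-\tfrac{1}{\sigma^2}\bigl(\tx-\alpha\,\E_{q(x\mid\tx)}[x]\bigr),
\label{eq:plan-first}
\end{align}
equivalently $\alpha\,\E_{q(x\mid\tx)}[x]=\tx+\sigma^2\nabla_{\tx}\log q(\tx)$.

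For the second order I would use the general identity $\nabla_{\tx}^2\log q(\tx)=\nabla_{\tx}^2 q(\tx)/q(\tx)-\nabla_{\tx}\log q(\tx)\,\nabla_{\tx}\log q(\tx)^\top$. Differentiating the Gaussian kernel once more gives the closed form $\nabla_{\tx}^2 q(\tx\mid x)=\bigl(\sigma^{-4}(\tx-\alpha x)(\tx-\alpha x)^\top-\sigma^{-2}I\bigr)q(\tx\mid x)$, so after interchanging derivative and integral and dividing by $q(\tx)$,
\begin{align}
\frac{\nabla_{\tx}^2 q(\tx)}{q(\tx)}=\frac{1}{\sigma^4}\,\E_{q(x\mid\tx)}\!\bigl[(\tx-\alpha x)(\tx-\alpha x)^\top\bigr]-\frac{1}{\sigma^2}I.
\label{eq:plan-second}
\end{align}
Expanding the expectation about the posterior mean $m(\tx):=\E_{q(x\mid\tx)}[x]$ gives $\E_{q(x\mid\tx)}[(\tx-\alpha x)(\tx-\alpha x)^\top]=(\tx-\alpha m)(\tx-\alpha m)^\top+\alpha^2\Sigma(\tx)$, and substituting \eqref{eq:plan-first} turns the first term into $\sigma^4\,\nabla_{\tx}\log q(\tx)\,\nabla_{\tx}\log q(\tx)^\top$. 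Inserting this into \eqref{eq:plan-second} and subtracting $\nabla_{\tx}\log q(\tx)\,\nabla_{\tx}\log q(\tx)^\top$, the rank-one score term cancels exactly, leaving $\nabla_{\tx}^2\log q(\tx)=\tfrac{\alpha^2}{\sigma^4}\Sigma(\tx)-\tfrac{1}{\sigma^2}I$; solving for $\Sigma(\tx)$ yields \eqref{eq:fullcov}.

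The only genuine subtlety is analytic rather than algebraic: justifying the two interchanges of differentiation and integration. Here the plan is to invoke dominated convergence, noting that $q(\tx\mid x)$ and its first two $\tx$-derivatives are dominated, locally uniformly in $\tx$, by integrable envelopes of the form (polynomial in $x$)$\times$(Gaussian in $x$), which is finite provided $q(x)$ has at most Gaussian-dominated tails — the standing assumption in this setting. The matrix bookkeeping in the expansion of $\E_{q(x\mid\tx)}[(\tx-\alpha x)(\tx-\alpha x)^\top]$ is the other place to be careful, but it is routine. A quicker alternative route, useful as a cross-check, is the substitution $y=\alpha x$: then $q(\tx\mid y)=\mathcal{N}(y,\sigma^2 I)$ is an unscaled Gaussian convolution with the \emph{same} marginal $q(\tx)$, so the original identity of \citet{zhang2024moment} gives $\mathrm{Cov}_{q(y\mid\tx)}[y]=\sigma^4\nabla_{\tx}^2\log q(\tx)+\sigma^2 I$, and $x=y/\alpha$ rescales the covariance by $1/\alpha^2$, recovering \eqref{eq:fullcov}.
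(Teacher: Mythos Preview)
Your proof is correct and follows essentially the same second-order Tweedie computation as the paper: both differentiate the Gaussian kernel twice, use the first-order identity $\alpha\,\E_{q(x\mid\tx)}[x]=\tx+\sigma^2\nabla_{\tx}\log q(\tx)$, and extract $\alpha^2\Sigma(\tx)$ from the resulting expression for $\nabla_{\tx}^2\log q(\tx)$. The only organizational difference is that the paper differentiates the score-as-integral directly via the quotient rule, whereas you first split $\nabla_{\tx}^2\log q(\tx)=\nabla_{\tx}^2 q/q-(\nabla_{\tx}\log q)(\nabla_{\tx}\log q)^\top$ and compute $\nabla_{\tx}^2 q/q$ separately; the algebra is equivalent, and your decomposition about the posterior mean makes the cancellation of the score outer product more transparent. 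Your alternative route via the substitution $y=\alpha x$ is not in the paper and is a genuinely shorter argument: it reduces the scaled case to the unscaled identity of \citet{zhang2024moment} in one line, at the cost of citing that result rather than rederiving it. Your explicit remarks on dominated convergence for the differentiation-under-the-integral step are also absent from the paper's proof.
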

See Appendix~\ref{app:derivation} for a proof. This covariance form can also be shown as the optimal covariance form under the KL divergences~\citep{bao2022analytic,zhang2024moment}. We can see that the exact covariance in this case only depends on the score function, which indicates the exact covariance can be derived from the score function. In general, the score function already contains all the information of the denoising distribution $q(x|\tx)$,  see~\citet{zhang2024moment} for further discussion. We here only consider the covariance for simplicity.

\begin{wraptable}{R}{0.6\linewidth}
\vspace{-4.5mm}
\caption{Comparison of different covariance choices on CIFAR10 (CS). We use Rademacher estimator ($M=100$) for the diagonal covariance approximation in OCM-DDPM.}
\label{tab:ocm_rademacher_estimate_results}
\vspace{-2mm}
\centering
\setlength{\tabcolsep}{1.6pt}
\resizebox{1.0\linewidth}{!}{
\begin{tabular}{lcccccccc}
    \toprule
    & \multicolumn{4}{c}{FiD $\downarrow$} & \multicolumn{4}{c}{NLL $\downarrow$} \\
    \cmidrule(lr){2-5} \cmidrule(lr){6-9} 
    \# timesteps & 5 & 10 & 15 & 20 & 5 & 10 & 15 & 20 \\
    \midrule
    DDPM, $\tilde{\beta}$ & 58.28 & 34.76 & 24.02 & 19.00 & 203.29 & 74.95 & 44.94 & 32.20 \\
    DDPM, ${\beta}$ & 254.07 & 205.31 & 149.67 & 109.81 & 7.33 & 6.51 & 6.06 & 5.77 \\
    \rowcolor{gray!20}
    \cellcolor{white}OCM-DDPM & \textbf{38.88} & \textbf{21.60} & \textbf{13.35} & \textbf{9.75} & \textbf{6.82} & \textbf{4.98} & \textbf{4.62} & \textbf{4.43} \\
\bottomrule
\end{tabular}
}
\vspace{-3mm}
\end{wraptable}
In the case of the diffusion model, we use the learned score function as a plug-in approximation in \cref{eq:fullcov}. 
Although the optimal covariance can be directly calculated from the learned score function, but it requires calculating the Hessian matrix, which is the Jacobian of the score function. This requires $O(D^2)$ storage and $D$ number of network evaluation~\citep{martens2012estimating} for each denoising step at the time $t$.  \citet{zhang2024moment} propose to use the following consistent diagonal 
approximation~\citep{bekas2007estimator} to remove the $O(D^2)$ storage requirement
\begin{align}
    \mathrm{diag}(H(\tx)) \approx 1/M \sum\nolimits_{m=1}^M v_m \odot H(\tx)v_m, \label{eq:diag:approx}
\end{align}
where $H(\tx)\equiv \nabla_{\tilde{x}}^2\log q(\tilde{x})$ and $v_m{\sim}p(v)$ is a Rademacher random variable~\citep{hutchinson1990stochastic} with entries $\pm 1$ and $\odot$ denotes the element-wise product.
In \cref{tab:ocm_rademacher_estimate_results}, we compare the generation quality of DDPM using different covariance choices. The results demonstrate that generation quality improves significantly when using the Rademacher estimator to estimate the optimal covariance, compared to the heuristic choices of $\beta$ and $\tilde{\beta}$.
However, as shown in \cref{fig:est_hessian_matrix} in the Appendix, achieving a desirable approximation on the CIFAR10 dataset necessitates $M \geq 100$ Rademacher samples.
Each calculation of $ v_m \odot H(\tx)v_m$ requires a forward pass and a backward propagation, leading to roughly $2M$ network evaluations in total. This significantly slows down the generation speed, making it impractical for diffusion models.
Inspired by \citet{nichol2021improved,bao2022estimating} and also the amortization technique used in variational inference~\citep{kingma2013auto,dayan1995helmholtz}, \emph{we propose to use a network to match the diagonal Hessian, which only requires one network pass to predict the diagonal Hessian in the generation time and can be done in parallel with the score/mean predictions with no extra time cost.} In the next section, we introduce a novel unbiased objective to learn the diagonal Hessian from the learned score, which improves the covariance estimation accuracy and leads to better generation quality and higher likelihood estimations.

\subsection{Unbiased Optimal Covariance Matching}
To train a network $h_\phi(\tx)$ to match the Hessian diagonal $\text{diag}(H(\tx))$, a straightforward solution is to directly regress Equation~\ref{eq:diag:approx} for all the noisy data
\begin{align} \label{eq:ocm-base-obj}
   \min_{\phi}\mathbb{E}_{q(\tx)} ||h_\phi(\tx) \!-\! \frac{1}{M}\sum_{m=1}^M v_m \!\odot\! H(\tx)v_m ||_2^2,
\end{align}
where $v_m\sim p(v)$. Although this objective is consistent when $M\rightarrow \infty$, it will introduce additional bias when $M$ is small. To avoid the  bias, we propose the following unbiased optimal covariance matching  (OCM) objective
\begin{align}
L_{\text{ocm}}(\phi)=\mathbb{E}_{q(\tx)p(v)} ||h_\phi(\tx) -  v \odot H(\tx)v ||_2^2, \label{eq:ocm}
\end{align} 
which does not include an expectation within the non-linear L2 norm.
The following theorem shows the validity of the proposed OCM objective.
\begin{restatable}[Validity of the OCM objective]{theorem}{restatheoremtwo}\label{theo:validity-of-cov}
    The objective in \cref{eq:ocm} upper bounded the base objective (i.e., \cref{eq:ocm-base-obj} with $M \rightarrow \infty$). Moreover, it attains optimal when $h_\phi(\tx)=\text{diag}(H(\tx))$ for all $\tx\sim q(\tx)$.
\end{restatable}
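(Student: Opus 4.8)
The plan is to prove both claims at once via a bias--variance decomposition applied pointwise in $\tx$, after first establishing the unbiasedness of the single-sample Hutchinson--Bekas estimator. Writing $H\equiv H(\tx)$ and $v=(v_1,\dots,v_D)$ with i.i.d.\ Rademacher entries, the $i$-th coordinate of $v\odot Hv$ equals $v_i\sum_j H_{ij}v_j=\sum_j H_{ij}v_iv_j$, so taking expectations and using $\mathbb{E}_{p(v)}[v_iv_j]=\delta_{ij}$ gives $\mathbb{E}_{p(v)}[v\odot Hv]=\mathrm{diag}(H)$. Hence $v\odot H(\tx)v$ is an unbiased estimator of $\mathrm{diag}(H(\tx))$ for every $\tx$, and by the strong law of large numbers the $M$-sample average in \cref{eq:diag:approx} converges to $\mathrm{diag}(H(\tx))$; consequently, interchanging limit and expectation under mild moment assumptions on $H$, the base objective in the limit $M\to\infty$ is exactly $L_{\mathrm{base}}(\phi)=\mathbb{E}_{q(\tx)}\|h_\phi(\tx)-\mathrm{diag}(H(\tx))\|_2^2$.

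Next I would fix $\tx$ and invoke the elementary identity that for any deterministic vector $a$ and any square-integrable random vector $Z$, $\mathbb{E}\|a-Z\|_2^2=\|a-\mathbb{E}Z\|_2^2+\mathbb{E}\|Z-\mathbb{E}Z\|_2^2$ (expand the inner product; the cross term $\langle a-\mathbb{E}Z,\ \mathbb{E}Z-Z\rangle$ has zero mean). Applying this with $a=h_\phi(\tx)$ and $Z=v\odot H(\tx)v$, using $\mathbb{E}Z=\mathrm{diag}(H(\tx))$ from the first step, and then taking the expectation over $q(\tx)$ yields
\begin{align*}
L_{\mathrm{ocm}}(\phi) &= \underbrace{\mathbb{E}_{q(\tx)}\|h_\phi(\tx)-\mathrm{diag}(H(\tx))\|_2^2}_{=\,L_{\mathrm{base}}(\phi)} \\
&\quad + \underbrace{\mathbb{E}_{q(\tx)}\,\mathbb{E}_{p(v)}\|v\odot H(\tx)v-\mathrm{diag}(H(\tx))\|_2^2}_{=\,C\,\ge\,0},
\end{align*}
where the second term $C$ is the expected total variance (trace of the covariance) of the single-sample estimator and does not depend on $\phi$.

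From this decomposition both conclusions follow immediately. Since $C\ge 0$, we get $L_{\mathrm{ocm}}(\phi)\ge L_{\mathrm{base}}(\phi)$ for every $\phi$, which is the claimed upper bound. And since $L_{\mathrm{base}}(\phi)\ge 0$ with equality if and only if $h_\phi(\tx)=\mathrm{diag}(H(\tx))$ for $q$-almost every $\tx$, while $C$ is a $\phi$-independent constant, $L_{\mathrm{ocm}}$ is minimized exactly at $h_\phi(\tx)=\mathrm{diag}(H(\tx))$ (assuming the network is expressive enough to realize this map), attaining the value $C$.

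I do not expect a genuine obstacle here: the only step demanding care is the first one --- the unbiasedness identity together with the measure-theoretic bookkeeping (square-integrability of $v\odot H(\tx)v$ and the limit/expectation interchange used to identify the $M\to\infty$ base objective), which hold under mild boundedness assumptions on the Hessian. Everything downstream is the standard bias--variance argument.
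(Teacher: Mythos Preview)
Your proof is correct and essentially matches the paper's: both rely on the unbiasedness identity $\mathbb{E}_{p(v)}[v\odot H(\tx)v]=\mathrm{diag}(H(\tx))$ and an expansion of the squared norm. The only difference is organizational: the paper invokes Jensen's inequality for the upper bound and then separately expands $L_{\mathrm{ocm}}$ to show it equals $\mathbb{E}_{q(\tx)}\|h_\phi(\tx)-\mathrm{diag}(H(\tx))\|_2^2+c'$, whereas your single bias--variance decomposition yields both conclusions at once and additionally identifies the constant gap $C$ explicitly as the estimator's variance.
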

See the \cref{app:ocm} for a proof. The integration over $v$  in \cref{eq:ocm} can be unbiasedly approximated by the Monte-Carlo integration given $M$ Rademacher samples $v_m\sim p(v)$. In practice, we found $M=1$ works well (see \cref{tab:diff-M} in the appendix for the ablation study on varying $M$ values), which also shows the training efficiency of the proposed OCM objective.
The learned $h_\phi$ can form the covariance approximation 
\begin{align}
    \Sigma(\tx; \phi)= (\sigma^4 h_\phi(\tx)+\sigma^2I)/\alpha^2.
\end{align}

We then discuss how to apply the learned covariance to diffusion models.

\begin{figure*}[!t]
\vspace{-4mm}
\centering
    \begin{minipage}[t]{0.45\linewidth}
    \centering
    \begin{algorithm}[H]
    \caption{Sampling procedure from $t\rightarrow t'$ in OCM-DDPM} \small
    \label{alg:OCM-DDPM} 
    \setstretch{1.33}
    \begin{algorithmic}[1] 
        \STATE Compute $\mu_{t'}(x_t)$ with \cref{eq:skip-ddpm-mean};
        \STATE Compute  $\Sigma_{t'}(x_t)$ with \cref{eq:skip-ddpm-variance};
        \STATE Sample  $\epsilon  \!\sim\! \mathcal{N}(0,I)$;
        \STATE Let $x_{t'} \leftarrow \mu_{t'}(x_t) + \Sigma_{t'}(x_t)^{1/2} \epsilon$.
    \end{algorithmic}
    \end{algorithm}
    \end{minipage}
    \quad
    \begin{minipage}[t]{0.45\linewidth}
    \centering
    \begin{algorithm}[H]
    \setstretch{1.25}
    \caption{Sampling procedure from $t\rightarrow t'$ in OCM-DDIM} \small
    \label{alg:OCM-DDIM} 
    \begin{algorithmic}[1] 
        \STATE Compute $\mu_{0}(x_t)$  with \cref{eq:skip-ddim-mean};
        \STATE Compute  $\Sigma_{0}(x_t)$ with \cref{eq:skip-ddim-cov};
        \STATE Sample $x_0 \!\sim \! \mathcal{N}(\mu_{0}(x_t),\!\Sigma_{0}(x_t))$; 
        \STATE Let $x_{t'} \leftarrow\sqrt{\bar{\alpha}}_{t'}x_0\!+\sqrt{1-\bar{\alpha}_{t'}}\frac{x_t-\sqrt{\bar{\alpha}_t}x_0}{\sqrt{1-\bar{\alpha}_t}}$.
    \end{algorithmic}
    \end{algorithm}
    \end{minipage} 
\end{figure*}

\subsection{Diffusion Models Applications\label{sec:app:diffusion}}
Given access to a learned score function, $\nabla_{x_t}\log p_\theta(x_t)$, from any pre-trained diffusion model, we denote the Jacobian of the score as $H_t(x_t)$. Assuming $M=1$ in the OCM training objective, the covariance learning objective for diffusion models can be expressed as follows:
\begin{align} \label{eq:diffusion:loss}
   \min_{\phi} \frac{1}{T} \sum_{t=1}^T \mathbb{E}_{q(x_t,x_0)p(v)} \lVert h_\phi(x_t) - v {\odot} H_t(x_t)v \rVert_2^2 ,
\end{align}
where $v\sim p(v)$ and $h_\phi(x_t)$ is a network that conditioned on the state $x_t$ and time $t$. After training this objective, the learned $h_\phi(x_t)$ can be used to form the diagonal Hessian approximation
$h_\phi(\tx)\approx \mathrm{diag}(H(\tx))$ which further forms our approximation of covariance. We then derive its use cases in skip-step DDPM and DDIM for Diffusion acceleration.

\textbf{Skip-Step DDPM:}
For the general skip-step DDPM with denoising distribution $q(x_{t'}|x_t)$ with $t'<t$. When $t'=t-1$, this becomes the classic one-step DDPM. We further denote $\Bar{\alpha}_{t':t}=\prod_{s=t'}^{t} \alpha_s$, and thus $\bar{\alpha}_{0:t} = \bar{\alpha}_{t}$,  $\Bar{\alpha}_{t':t}=\bar{\alpha}_{t}/\bar{\alpha}_{t'}$. We can write the forward process as
 \begin{align}
    q(x_{t}|x_{t'})=\mathcal{N}(x_{t}|\sqrt{\bar{\alpha}_{t':t}} x_{t'}, (1-\bar{\alpha}_{t':t})I).
 \end{align}
 The corresponding Gaussian denoising distribution $p_{\theta,\phi}(x_{t'}|x_t)= \mathcal{N}(\mu_{t'}(x_t;\theta),\Sigma_{t'}(x_t;\phi))$ has the following mean and covariance functions:
\begin{align}
\mu_{t'}(x_t;\theta)&\!=\!(x_t+(1-\bar{\alpha}_{t':t})\nabla_{x_t}\log p_\theta (x_t))/\sqrt{\bar{\alpha}_{t':t}}, \label{eq:skip-ddpm-mean}\\ 
\Sigma_{t'}(x_t;\phi)&\!=\!({(1\!-\!\bar{\alpha}_{t':t})}^2 h_\phi(x_t)\!+\!(1\!-\!\bar{\alpha}_{t':t})I)/\bar{\alpha}_{t':t}. \label{eq:skip-ddpm-variance}
\end{align}
The skip-step denoising sample $x_{t'}\sim p_{\theta,\phi}(x_{t'}|x_t)$ can be obtained by $x_{t'}=\mu_{t'}(x_t;\theta)+\epsilon \Sigma^{1/2}_{t'}(x_t;\phi)$.

\textbf{Skip-Step DDIM:}
Similarly, we give the skip-step formulation of DDIM, where we set the $\sigma_{t-1}=0$ in \cref{eq:ddim} which is also used in the original paper~\cite{song2020denoising}. We can use the approximated covariance of $p_{\theta,\phi}(x_0|x_t)$ to replace the delta function used in the vanilla DDIM,  
which gives the skip-steps DDIM sample
\begin{align} \label{eq:skip-ddim-posterior-mean}
x_{t'}&=\sqrt{\bar{\alpha}}_{t'}x_0+\sqrt{1-\bar{\alpha}_{t'}}/\sqrt{1-\bar{\alpha}_t}\cdot (x_t-\sqrt{\bar{\alpha}_t}x_0), 
\end{align}
where $x_0\sim p_{\theta,\phi}(x_0|x_t)=\mathcal{N}(\mu_0(x_t;\theta),\Sigma_0(x_t;\phi))$ and 
\begin{align}
\mu_{0}(x_t;\theta)&=(x_t+(1-\bar{\alpha}_{t})\nabla_{x_t}\log p_\theta (x_t))/\sqrt{\bar{\alpha}_{t}}, \label{eq:skip-ddim-mean} \\
\Sigma_{0}(x_t;\phi)&=({(1-\bar{\alpha}_{t})}^2 h_\phi(x_t,t)+(1-\bar{\alpha}_{t})I)/\bar{\alpha}_{t}.\label{eq:skip-ddim-cov}
\end{align}
The sample $x_0\sim p_{\theta,\phi}(x_0|x_t)$ can be obtained by $x_{0}=\mu_{0}(x_t;\theta)+\epsilon \Sigma^{1/2}_{0}(x_t;\phi)$.
In the next section, we will compare the proposed method to other covariance estimation methods in practical examples.

\begin{table*}[t]
\centering
\caption{Overview of different covariance estimation methods. Methods are ranked by increasing modeling capability from top to bottom. We also include the intuition of the methods and how many additional network passes are required for estimating the covariance.}
\vspace{-4mm}
\label{tab:methods}
\begin{tikzpicture}
    \draw[thick,<-] (0.1,0.5) -- (0.1,4.5); \node at (-0.2, 2.5) [rotate=90] {Modeling Capability};
    \node[right] at (0.5, 2.5) {
    \resizebox{.91\linewidth}{!}{
        \begin{tabular}{l c c l}
        \hline
        \textbf{Covariance Type}  & \textbf{+\#Passes} & \textbf{Intuition}\\
        \hline
        $x_t$-independent Isotropic $\beta$~\citep{ho2020denoising}  & 0 & Cov. of $q(x_{t}|x_{t-1})$\\
        $x_t$-independent Isotropic $\tilde{\beta}$~\citep{ho2020denoising}   & 0 & Cov. of $q(x_{t}|x_{t-1}, x_0)$\\
        $x_t$-independent Isotropic Estimation~\citep{bao2022analytic}  & 0 & Estimate from data\\
        \hline
        $x_t$-dependent Diagonal VLB~\citep{nichol2021improved}  & 1 & Learn from data\\
        $x_t$-dependent Diagonal NS~\citep{bao2022estimating}  & 1 & Learn from data\\
        $x_t$-dependent Diagonal OCM (Ours)  & 1 & Learn from score\\
        $x_t$-dependent Diagonal Estimation~\citep{zhang2024moment}  & $2M$ &Estimate from score\\
        $x_t$-dependent Diagonal Analytic~\citep{zhang2024moment}  & $D$ & Calculate from score\\
        \hline
        $x_t$-dependent Full Analytic~\citep{zhang2024moment} & $D$ &Calculate from score\\
        \hline
        \end{tabular}}
    };
\end{tikzpicture}
\vspace{-8mm}
\end{table*}

\subsection{Details of Training and Inference}
Our model comprises two components: a score prediction network $s_\theta$ and a diagonal Hessian prediction network $h_\phi$. 
In line with \cite{bao2022estimating}, we parameterize the score prediction network using a pretrained diffusion model, and the Hessian prediction network is parameterized by sharing parameters as follows:
\begin{align}
    s_\theta (x_t) = \mathrm{NN}_1 (\mathrm{BaseNet}(x_t, t; \theta_1); \theta_2), \quad
    h_\phi (x_t) = \mathrm{NN}_2 (\mathrm{BaseNet}(x_t, t; \theta_1); \phi )
\end{align}
where $\mathrm{BaseNet}$ represents the commonly used architecture in diffusion models, such as UNet and DiT \citep{ho2020denoising,peebles2023scalable}. 
This parameterization approach only requires an additional small neural network, $\mathrm{NN}_2$, resulting in negligible extra computational and memory costs compared to the original diffusion models (see \cref{sec:appendix-details-architectures} for more details). 
In our experiment, we fix the parameter $\theta = \{\theta_1, \theta_2\}$ and train the Hessian prediction network exclusively with the proposed OCM objective.
After training, samples can be generated using \cref{alg:OCM-DDPM,alg:OCM-DDIM}.
\section{Related Covariance Estimation Methods\label{sec:related}}
We then discuss different choices of $\Sigma_{t'}(x_t)$ used in the diffusion model literature, see also Table~\ref{tab:methods} for an overview.
For brevity, we mainly focus on DDPM here, in which the mean $\mu_{t'}(x_t)$ can be computed as in \cref{eq:skip-ddpm-mean}. 

\paragraph{1. $x_t$-independent isotropic covariance: $\beta$/$\tilde{\beta}$-DDPM~\citep{ho2020denoising}.}
The $\beta$-DDPM  uses the variance of $p(x_{t'}|x_{t})$, which is $\Sigma_{t'}(x_t)=(1-\bar{\alpha}_t/\bar{\alpha}_{t'})I$, when $t'=t-1$, we have $\Sigma_{t-1}(x_t)=\beta_t$. The $\tilde{\beta}$-DDPM uses the covariance of $p(x_{t'}|x_0,x_t)$, which is $\frac{(1-\bar{\alpha}_{t'})}{(1-\bar{\alpha}_{t})}(1 - \bar{\alpha}_{t':t})$. 

\paragraph{2. $x_t$-independent isotropic covariance: A-DDPM~\citep{bao2022analytic}.}
A-DDPM assumes a state-independent isotropic covariance $\Sigma_{t'}(x_t)=\sigma_{t'}^2 I$ with the following analytic form of
\begin{align}
\sigma_{t'}^{2}=\frac{1 \!-\! \bar{\alpha}_{t':t}}{\bar{\alpha}_{t':t}} \!-\! \frac{(1 \!-\! \bar{\alpha}_{t':t})^2}{d\bar{\alpha}_{t':t}} \mathbb{E}_{q(x_t)}\left[\norm{\nabla_{\!x_t} \!\log p_\theta (x_t)}_2^2\right],\nonumber
\end{align}
where the integration of $q(x_t)$ requires a Monte Carlo estimation before conducting generation. This variance choice is optimal under the KL divergence within a constrained isotropic state-independent posterior family.

\paragraph{3. $x_t$-dependent diagonal covariance: I-DDPM~\citep{nichol2021improved}.} In I-DDPM, the diagonal covariance matrix is modelled as the interpolation between $\beta_t$ and $\tilde{\beta_t}$
\begin{align}
    \Sigma_{t-1} (x_t;\psi) = \exp (v_\psi (x_t) \log \beta_t + (1 - v_\psi (x_t)) \log \tilde{\beta}_t),
\end{align}
where $v_\psi$ is parametrized via a neural network. The covariance is learned with the variational lower bound (VLB).
In the optimal training, the covariance learned by VLB will recover the true covariance in \eqref{eq:fullcov}.
Notably, \cite{nichol2021improved} heuristically obtain the covariance of skip sampling $\Sigma_{t'}(x_t;\psi)$ by rescaling $\beta_t$ and $\tilde{\beta_t}$ accordingly: $\beta_t \rightarrow 1 - \bar{\alpha}_{t':t}, \tilde{\beta}_t \rightarrow \frac{(1-\bar{\alpha}_{t'})}{(1-\bar{\alpha}_{t})}(1 - \bar{\alpha}_{t':t})$. However, when $t' = 0$, $\Sigma_{0}(x_t;\psi)$ is ill-defined; thus, iDDPM is inapplicable within the DDIM framework.

\paragraph{4. $x_t$-dependent diagonal covariance: SN-DDPM~\citep{bao2022estimating}.}
SN-DDPM learns the covariance by training a neural network $g_\psi$ to estimate the second moment of the noise $\epsilon_t = (x_t - \sqrt{\bar{\alpha}_t} x_0) / \sqrt{1 - \bar{\alpha}_t}$:
\begin{align} \label{eq:sn-ddpm-obj}
    \min_\psi \mathbb{E}_{t, q(x_0, x_t)} \lVert \epsilon^2_t - g_\psi (x_t) \rVert_2^2.
\end{align}
After training, the covariance $\Sigma_{t'}(x_t;\psi)$ can be estimated via 
\begin{align} \label{eq:sn-cov}
    \!\!\!\!\Sigma_{t'}(x_t;\psi) \!=\! \frac{(1\!-\!\bar{\alpha}_{t'})}{(1-\bar{\alpha}_{t})}\bar{\beta}_{t':t} I \!+\! \frac{\beta_t^2}{\alpha_t} \!\left(\! \frac{g_{\psi}(x_t)}{1\!-\!\bar{\alpha}_t} \!-\! \nabla_{x_t} \log \!p_\theta (x_t)^2 \!\right)\!.
\end{align}
where $\bar{\beta}_{t':t} = 1 - \bar{\alpha}_{t':t}$. 
In optimal, $\Sigma_{t'}(x_t;\psi)$ in \cref{eq:sn-cov} will recover the true covariance in \cref{eq:fullcov}. We demonstrate the equivalence between OCM-DDPM and SN-DDPM in \cref{app:equiv-ocm-sn}.
However, due to the appearance of the quadratic term in \cref{eq:sn-cov}, SN-DDPM tends to amplify the estimation error as $t \rightarrow 0$, leading to suboptimal solutions. To mitigate this issue, \cite{bao2022estimating} also propose NPR-DDPM, which models the noise prediction residual instead. We recommend referring to their paper for detailed explanations.

Notably, almost all these methods can be applied within the DDIM framework by setting \(t' = 0\).
Specifically, \(p(x_{t'} | x_t)\) can be sampled using \cref{eq:skip-ddim-posterior-mean} with \(x_0 \sim \mathcal{N}(\mu_0 (x_t), \Sigma_0 (x_t))\), where \(\mu_0 (x_t)\) is the same as in \eqref{eq:skip-ddim-mean}, but \(\Sigma_0 (x_t)\) differs for various methods as discussed previously.

\section{Experimental Results}
To support our theoretical discussion, we first evaluate the performance of optimal covariance matching by training diffusion probabilistic models on 2D toy examples. We then demonstrate its effectiveness in enhancing image modelling in both pixel and latent spaces, focusing on the comparison between optimal covariance matching and other covariance estimation methods, and showing that the proposed approach has the potential to scale to large image generation tasks.

\subsection{Toy Demonstration}

We first demonstrate the effectiveness of our method by considering the data distribution as a two-dimensional mixture of forty Gaussians (MoG) with means uniformly distributed over $[-40,40] \otimes [-40,40]$ and a standard deviation of $\sigma=\sqrt{40}$, where $\otimes$ denotes Cartesian product (see \cref{fig:cov:appendix-data} for visualization).
In this case, both the true score $\nabla_{x_t}\log p(x_t)$ and the optimal covariance $\Sigma_{t'}(x_t)$ are available, allowing us to compare the covariance estimation error.
We then learn the covariance using the true scores by different methods to conduct DDPM and DDIM sampling for this MoG problem. For evaluation, we employ the Maximum Mean Discrepancy (MMD)~\citep{gretton2012kernel}, which utilizes five kernels with bandwidths $\{2^{-2}, 2^{-1}, 2^{0}, 2^{1}, 2^{2}\}$. In \cref{fig:toy:ddpm,fig:toy:ddim}, we show the MMD comparison among different methods. Specifically, we choose to conduct different diffusion steps with the skip-step scheme as discussed in \cref{sec:app:diffusion}.

\begin{wrapfigure}{r}{0.68\linewidth}
    \centering
    \begin{subfigure}{0.22\textwidth}
        \includegraphics[width=\linewidth]{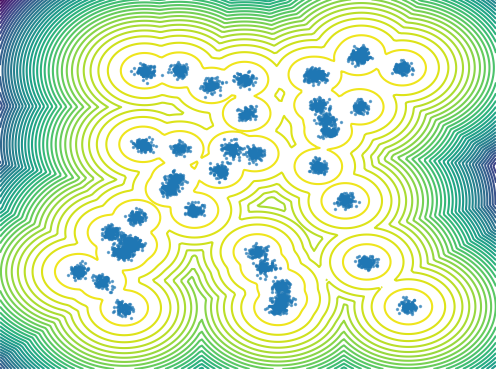}
        \caption{\scriptsize Training Data and Density}
        \label{fig:cov:appendix-data}
    \end{subfigure}
    \hfill 
    \begin{subfigure}{0.22\textwidth}
        \includegraphics[width=\linewidth]{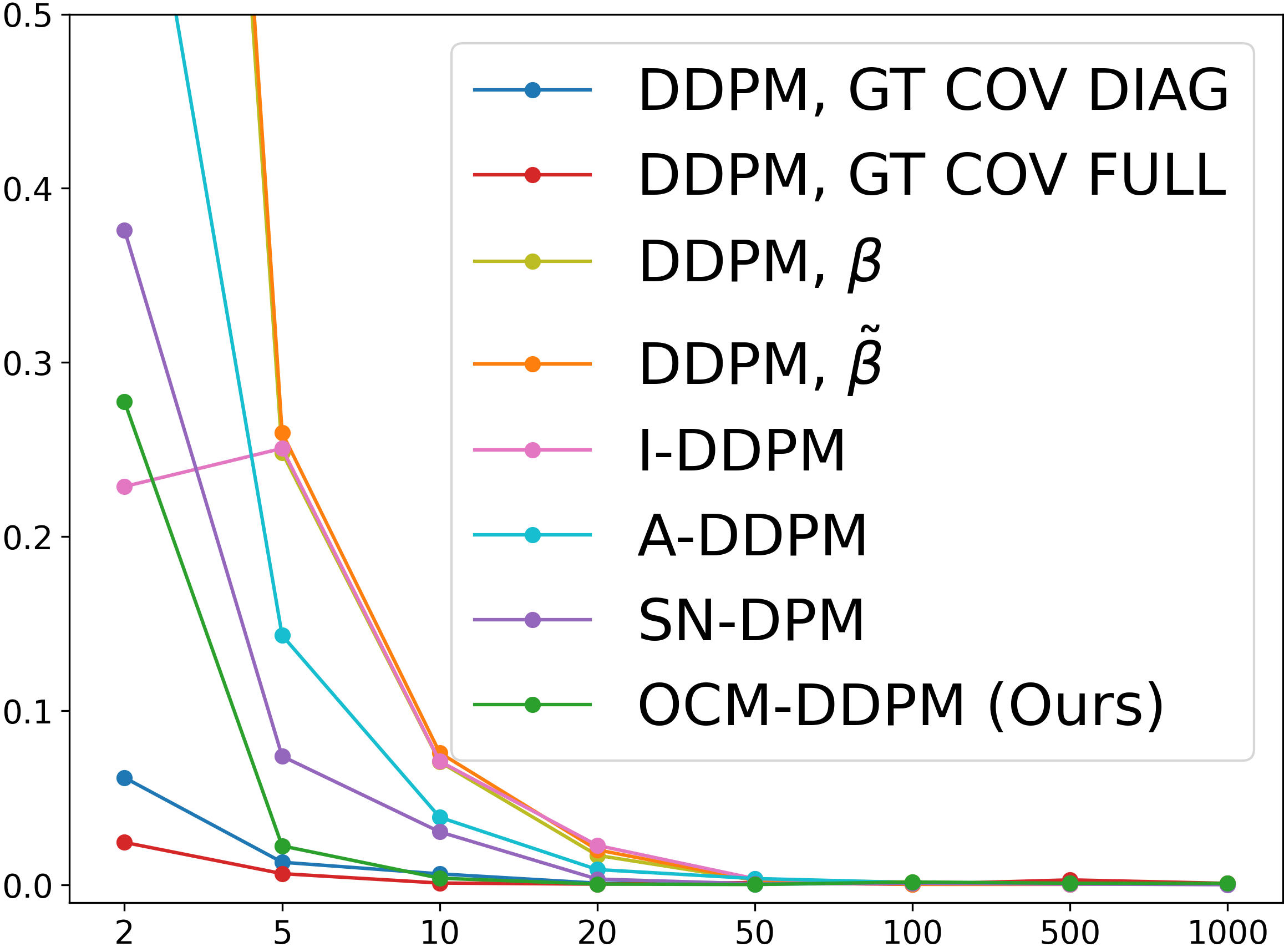}
        \caption{\scriptsize DDPM MMD v.s. Steps}
        \label{fig:toy:appendix-ddpm}
    \end{subfigure}
    \hfill 
    \begin{subfigure}{0.22\textwidth}
        \includegraphics[width=\linewidth]{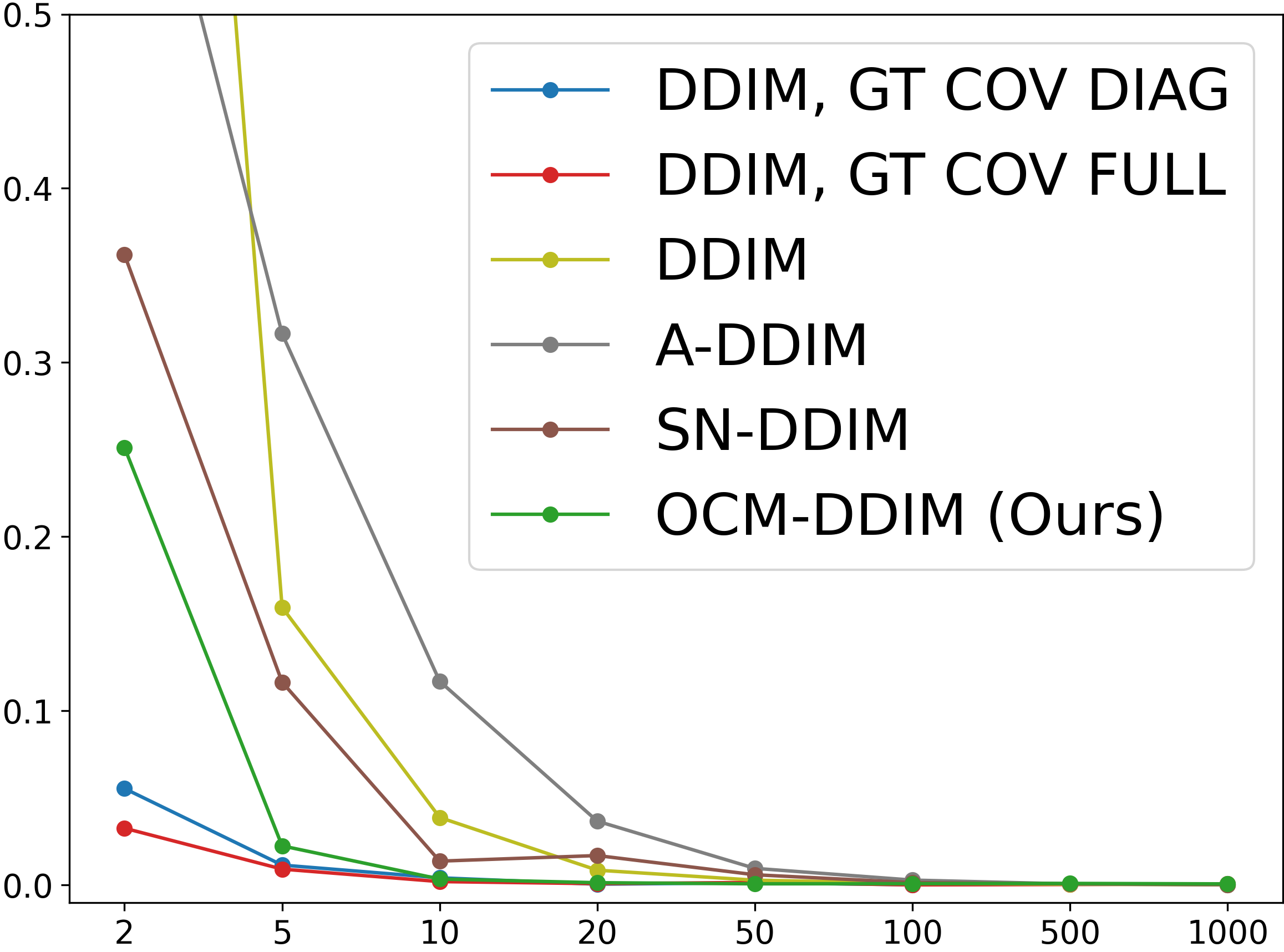}
        \caption{\scriptsize DDIM MMD v.s. Steps}
        \label{fig:toy:appendix-ddim}
    \end{subfigure}
    \vspace{-2mm}
    \caption{Comparisons of different covariance estimation methods. Figure (a) demonstrates the training data and the ground truth density. Figures (b) and (c) present the MMD evaluation against the total sampling steps in the DDPM (b) and DDIM (c) settings.}
    \label{fig:appendix-toy}
\end{wrapfigure}
The results, shown in \cref{fig:toy:appendix-ddpm,fig:toy:appendix-ddim}, demonstrate that the proposed method outperforms other covariance learning approaches. Additionally, we include two methods utilizing the true diagonal and full covariance, serving as benchmarks for the best achievable performance. Notably, in this case, using the full covariance yields better performance compared to the diagonal covariance. This observation highlights the importance of accurate covariance approximation, suggesting that improved methods for learning the off-diagonal terms could lead to even better results.
For further analysis, we present an additional toy experiment in \cref{sec:appendix-additional-toy} to demonstrate the efficacy of our approach.

\begin{figure}[!t]
\centering
\includegraphics[width=\linewidth]{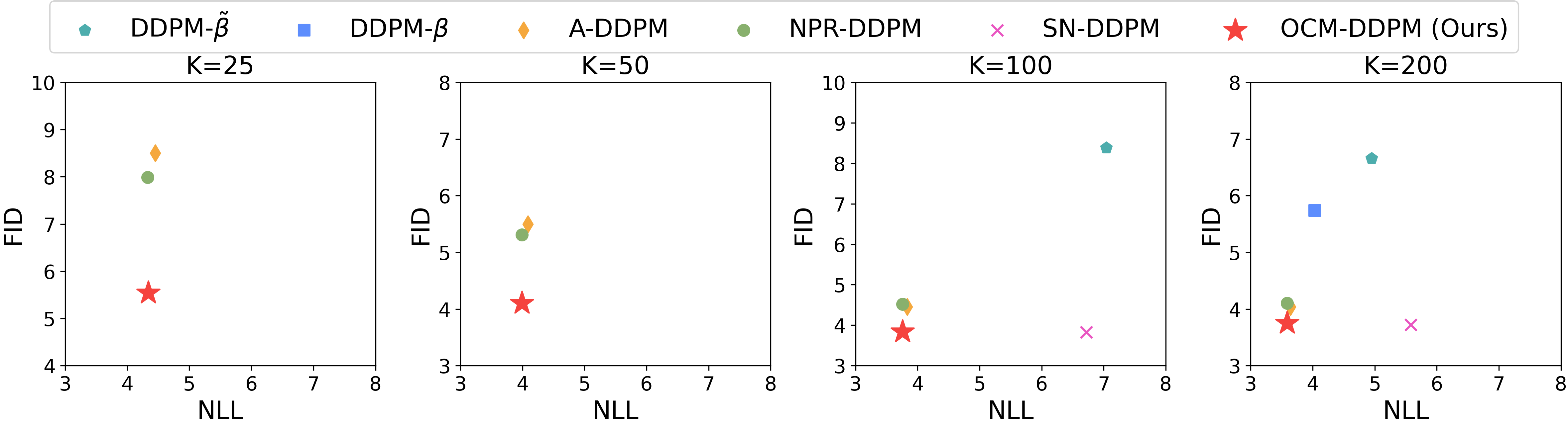}
\vspace{-5mm}
\caption{The results of FID v.s. NLL for different methods with varying numbers of sampling steps on CIFAR10 (CS). Our method consistently achieves the best trade-off between FID and NLL.}
\label{fig:fid-nll-tradeoff}
\vspace{-4mm}
\end{figure}

\subsection{Image Modelling with Diffusion Models}
Following the experimental setting in \cite{bao2022estimating}, we then evaluate our method across varying pre-trained score networks provided by previous works \citep{ho2020denoising,nichol2021improved,song2020denoising,bao2022analytic,peebles2023scalable}. 
In this experiment, we mainly focus on four datasets: CIFAR10 \citep{krizhevsky2009learning} with the linear schedule (LS) of $\beta_t$ \citep{ho2020denoising} and the cosine schedule (CS) of $\beta_t$ \citep{nichol2021improved}; CelebA \citep{liu2015deep}; LSUN Bedroom \citep{yu2015lsun}.
The details of the experimental setting can be found in \cref{sec:appendix-details-exp}.

\subsubsection*{Improving Likelihood Results}
We first evaluate the negative log-likelihood (NLL) of the diffusion models by calculating the negative evidence lower bound (ELBO): $-\log q(x_0) \leq \mathbb{E}_q \log \frac{p(x_{0:T})}{q(x_{1:T} | x_0)} \equiv -L_{\mathrm{elbo}}(x_0)$ with the same mean yet varying covariances. As per \cite{bao2022analytic}, we report results only for the DDPM forward process, as $-L_{\mathrm{elbo}} = \infty$ under the DDIM forward process. As shown in \cref{tab:bpd}, our OCM-DDPM demonstrates the best or second-best performance in most scenarios, highlighting the effectiveness of our covariance estimation approach. 
We also note that SN-DDPM performs poorly on likelihood results in small-scale datasets like CIFAR10 and CELEBA, likely due to the amplified error of the quadratic term in \cref{eq:sn-ddpm-obj}. 
Although NPR-DDPM achieves slightly better performance in certain scenarios, it falls short in terms of sample quality, as measured by FID, shown in \cref{tab:fid}.
In \cref{sec:appendix-compared-to-iddpm}, we also compare our method to Improved DDPM \citep{nichol2021improved} in terms of likelihood on ImageNet 64x64. The results show that our method achieves better likelihood estimation with a small number of timesteps, while achieving comparable results with full timesteps.

\subsubsection*{Improving Sample Quality}
Next, we compare sample quality quantitatively using the FID score, with results reported for both the DDPM and DDIM forward processes. 
As shown in \cref{tab:fid}, our OCM-DPM consistently achieves either the best or second-best results across most settings. The qualitative results of the samples generated by our method can be found in \cref{sec:appendix-gen-samples}.

Notably, while SN-DDPM often demonstrates slightly better FID performance in certain cases, it struggles with likelihood estimation, as shown in \cref{tab:bpd}. Conversely, as discussed in the previous section, NPR-DDPM sometimes achieves superior likelihood results in \cref{tab:bpd}, but its image generation quality is significantly worse than OCM-DDPM. Therefore, the proposed OCM-DDPM provides a more balanced model, offering a better trade-off between generation quality and likelihood estimation. 
To better highlight this benefit, we include a visualization in \cref{fig:fid-nll-tradeoff}.

\begin{table*}[t]
\vspace{-6mm}
\centering
\caption{The NLL (bits/dim) $\downarrow$ across various datasets using different sampling steps.}
\label{tab:bpd}
\vskip -0.1in
\begin{small}
\begin{sc}
\setlength{\tabcolsep}{3.4pt}{
\begin{tabular}{lrrrrrrrrrrrrr}
\toprule
& \multicolumn{6}{c}{CIFAR10 (LS)} & \multicolumn{6}{c}{CIFAR10 (CS)} \\
\cmidrule(lr){2-7} \cmidrule(lr){8-13}
{\# timesteps $K$} & 
10 & 25 & 50 & 100 & 200 & 1000 &
10 & 25 & 50 & 100 & 200 & 1000 \\
\midrule
DDPM, $\tilde{\beta}$ & 
74.95 & 24.98 & 12.01 & 7.08 & 5.03 & 3.73 & 
75.96 & 24.94 & 11.96 & 7.04 & 4.95 & 3.60 \\
DDPM, $\beta$ & 
6.99 & 6.11 & 5.44 & 4.86 & 4.39 & 3.75 & 
6.51 & 5.55 & 4.92 & 4.41 & 4.03 & 3.54 \\
A-DDPM & 
\underline{5.47} & 4.79 & 4.38 & 4.07 & 3.84 & 3.59 & 
5.08 & 4.45 & 4.09 & 3.83 & 3.64 & 3.42 \\
NPR-DDPM & 
{5.40} & \underline{4.64} & \textbf{4.25} & \underline{3.98} & \underline{3.79} & \textbf{3.57} & 
\underline{5.03} & \textbf{4.33} & \textbf{3.99} & \textbf{3.76} & \textbf{3.59} & \textbf{3.41} \\
SN-DDPM & 
30.79 & 11.83 & 7.13 & 5.24 & 4.39 & 3.74 & 
90.85 & 19.81 & 9.72 & 6.72 & 5.58 & 4.73 \\
\rowcolor{gray!20}
\cellcolor{white}OCM-DDPM & 
\textbf{5.32} & \textbf{4.63} & \textbf{4.25} & \textbf{3.97} & \textbf{3.78} & \textbf{3.57} & 
\textbf{4.99} & \underline{4.34} & \textbf{3.99} & \textbf{3.76} & \textbf{3.59} & \textbf{3.41}  \\
\arrayrulecolor{black}\midrule
\end{tabular}}
\end{sc}
\end{small}
\vskip -0.01in
\begin{small}
\begin{sc}
\setlength{\tabcolsep}{3.4pt}{
\begin{tabular}{lrrrrrrrrrrrrr}
& \multicolumn{6}{c}{CelebA 64x64} & \multicolumn{6}{c}{ImageNet 64x64} \\
\cmidrule(lr){2-7} \cmidrule(lr){8-13}
{\# timesteps $K$} & 
10 & 25 & 50 & 100 & 200 & 1000 &
25 & 50 & 100 & 200 & 400 & 4000 \\
\midrule
DDPM, $\tilde{\beta}$ &
33.42 & 13.09 & 7.14 & 4.60 & 3.45 & 2.71 & 
105.87 & 46.25 & 22.02 & 12.10 & 7.59 & 3.89 \\
DDPM, $\beta$ & 
6.67 & 5.72 & 4.98 & 4.31 & 3.74 & 2.93 & 
5.81 & 5.20 & 4.70 & 4.31 & 4.04 & 3.65 \\
A-DDPM & 
4.54 & 3.89 & 3.48 & 3.16 & 2.92 & \underline{2.66} &
4.78 & 4.42 & 4.15 & 3.95 & 3.81 & 3.61 \\
NPR-DDPM &
\textbf{4.46} & \textbf{3.78} & \textbf{3.40} & \textbf{3.11} & \textbf{2.89} & \textbf{2.65} &
{4.66} & {4.22} & {3.96} & \underline{3.80} & \underline{3.71} & \underline{3.60} \\
SN-DDPM & 
18.09 & 8.05 & 5.29 & 4.05 & 3.40 & 2.84 & 
\underline{4.56} & \underline{4.18} & \underline{3.95} & \underline{3.80} & \underline{3.71} &  3.63\\
\rowcolor{gray!20}
\cellcolor{white}OCM-DDPM & 
\underline{4.69} & \underline{3.86} & \underline{3.43} & \underline{3.13} & \underline{2.90} & \underline{2.66} &
\textbf{4.45} & \textbf{4.15} & \textbf{3.93} & \textbf{3.79} &\textbf{3.70} & \textbf{3.59} \\
\arrayrulecolor{black}\bottomrule
\end{tabular}}
\end{sc}
\end{small}
\vspace{-4mm}
\end{table*}

\subsection{Image Modelling with Latent Diffusion Models}

\begin{wrapfigure}{r}{0.6\linewidth}
    \vspace{-4mm}
    \centering
    \begin{subfigure}{0.49\linewidth}
        \includegraphics[width=\linewidth]{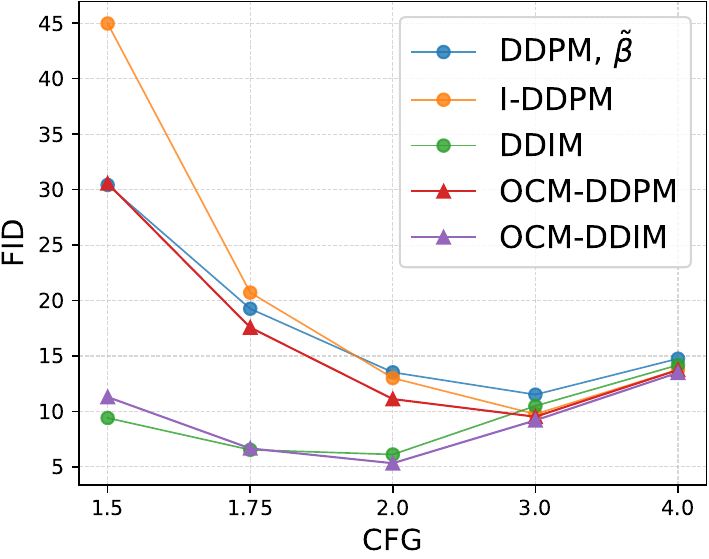}
    \end{subfigure}
    \hfill 
    \begin{subfigure}{0.49\linewidth}
        \includegraphics[width=\linewidth]{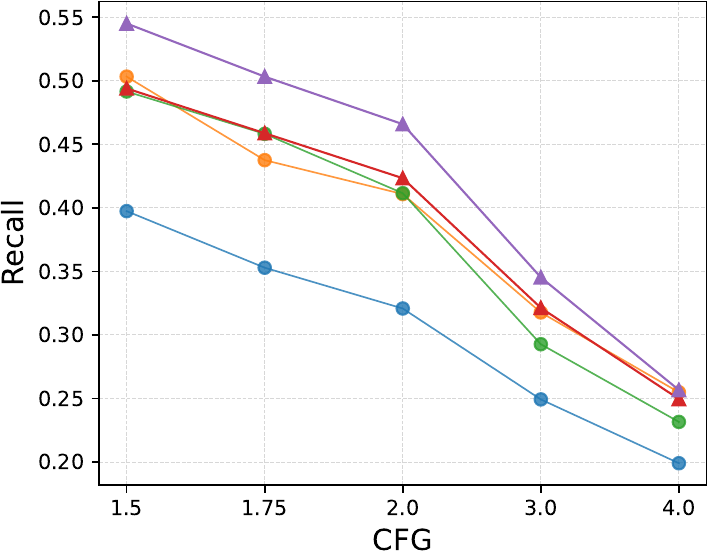}
    \end{subfigure}
    \vspace{-3mm}
    \caption{Results of DiT training on ImageNet 256x256. We generate samples using 10 timesteps with varying CFG coefficients (see \cref{tab:dit-fid-recall} for exact numerical values).}
    \vspace{-4mm}
    \label{fig:fid_recall_imagenet256}
\end{wrapfigure}
In this section, we apply our methods to latent diffusion models \citep{vahdat2021score,rombach2022high} and with ImageNet $256{\times}256$ to demonstrate the scalability of our method. Specifically, we compare our methods to other approaches within the DiT architecture \citep{peebles2023scalable}, evaluating sample quality using the FID score and diversity using the Recall score \citep{sajjadi2018assessing}. We focus on conditional generation with classifier-free guidance (CFG), which is consistent to~\cite{peebles2023scalable}. 

It is important to note that DiT was initially trained using the Improved DDPM (I-DDPM) algorithm \citep{nichol2021improved}. Therefore, the results of I-DDPM reflect the original performance of the pre-trained DiT model\footnote{We use the DiT XL/2 checkpoint released in \url{https://github.com/facebookresearch/DiT}.} provided by \cite{peebles2023scalable}. For DDPM and DDIM sampling, we discard the learned covariance in DiT and use only the learned noise prediction neural network for the posterior mean. In contrast, our OCM-DPM retains the same mean function while learning the covariance using the optimal covariance matching objective.
We then generate samples with 10 sampling steps and report FID and Recall scores across different CFG coefficients on the ImageNet 256x256 dataset. The results are displayed in \cref{fig:fid_recall_imagenet256}, with DDPM-$\beta$ excluded due to its poor performance.
It shows that our OCM-DPM method achieves the best FID performance with $\text{CFG}=2.0$. Although DDPM-$\tilde{\beta}$ and DDIM perform better in terms of FID at $\text{CFG}=1.5$, their Recall scores are lower than those of the proposed OCM methods.
This discrepancy arises partly because DDPM and DDIM model 
the inverse process with deterministic variance, whereas OCM methods explicitly learn the variance from data, allowing for more accurate density estimation and generating more diverse results.
As \cite{peebles2023scalable} report that DiT achieves the best FID performance with $\text{CFG}=1.5$, we further evaluate performance with different sampling steps under $\text{CFG}=1.5$, as shown in \cref{tab:dit-fid-different-steps}. The results indicate that our OCM-DPM methods offer a great improvement in sample quality and diversity.

\begin{table*}[t]
\centering
\vspace{-3mm}
\caption{The least number of timesteps $\downarrow$ required to achieve an FID around 6 (along with the corresponding FID). To account for the additional time cost incurred by the covariance prediction network, we multiply the results by the ratio of the time cost per single timestep, reflecting the extra computational overhead (see \cref{sec:appendix-details-architectures} for details).}
\label{tab:minimum-timesteps}
\vskip -0.1in
\begin{small}
\begin{sc}
\setlength{\tabcolsep}{3.4pt}{
\begin{tabular}{lrrrr}
\toprule
    Method & CIFAR10 & CelebA 64x64 & LSUN Bedroom & ImageNet 256x256 \\
\midrule
   DDPM & 90 (6.12) & $>$ 200 & 130 (6.06) & 21 (5.89) \\
   DDIM & 30 (5.85) & $>$ 100 &  Best FID > 6 & 11 (5.58) \\
   Improved DDPM & 45 (5.96) & Missing model & \textbf{90} (6.02) & 22 (6.08) \\
   Analytic-DPM & 25 (5.81) & 55 (5.98) & 100 (6.05) & Missing model \\
   NPR-DPM  & 1.002$\times$23 (5.76) & 1.013$\times$50 (6.04) & 1.021$\times$\textbf{90} (6.01) & Missing model \\
   SN-DPM & 1.005$\times${17} (5.81) & 1.019$\times${22} (5.96) &  1.114$\times$92 (6.02) & Missing model \\
    \rowcolor{gray!20}
    \cellcolor{white}OCM-DPM (ours) & 1.003$\times$\textbf{16} (5.83) & 1.015$\times$\textbf{21} (5.94) & 1.112$\times$\textbf{90} (6.04) & 1.007$\times$\textbf{10} (5.33)\\
\bottomrule
\end{tabular}}
\end{sc}
\end{small}
\vspace{-4mm}
\end{table*}

To provide an overview of the superiority of our method, we compare the minimum sampling step to achieve the FID score \citep{heusel2017gans} close to $6$ across different approaches (see \cref{sec:appendix-details-architectures} for details). As shown in \cref{tab:minimum-timesteps}, our method requires the fewest denoising steps in most settings.
Moreover, 
we emphasize that compared to recent advanced methods like SN-DPM and NPR-DPM, \textit{our approach is the only one that consistently delivers both competitive likelihood and FID}.

\section{Related Work and Future Directions}

In this paper, we show that improving diagonal covariance estimation can significantly enhance the performance of diffusion models. This raises a natural question: can more flexible covariance structures further improve these models? In Figure~\ref{fig:appendix-toy}, we demonstrate that a full covariance structure achieves better generation quality with fewer NFEs in a toy 2D problem. However, for high-dimensional problems, the quadratic growth in parameter scale makes a full covariance approach computationally impractical. To address this, low-rank or block-diagonal covariance approximations offer promising alternatives. Developing effective training objectives for flexible covariance structures remains a compelling direction for future research.

In addition to the covariance estimation methods discussed in \cref{sec:related}, there are other approaches to accelerate sampling in diffusion models.
One approach involves using faster numerical solvers for differential equations with continuous timesteps \citep{jolicoeur2021gotta,liu2022pseudo,lu2022dpm}. Another strategy, inspired by Schr{\"o}dinger bridge \citep{wang2021deep,de2021diffusion}, is to introduce a nonlinear, trainable forward diffusion process. 
Additionally, replacing Gaussian modelling of $p_\theta (x_0 | x_t)$ with more expressive alternatives, such as GANs \citep{xiao2021tackling,wang2022diffusion}, distributional models \citep{debortoli2025distributionaldiffusionmodelsscoring}, latent variable models \citep{yu2024hierarchical}, or energy-based models \citep{xu2024energy}, can also accelerate the sampling of diffusion models.

Recently, distillation techniques have gained popularity, achieving state-of-the-art in one-step generation \citep{zhou2024score}. There are two prominent types of distillation methods. The first is trajectory distillation~\citep{salimans2022progressive,berthelot2023tract,song2023consistency,heek2024multistep,kim2023consistency,li2024bidirectional}, which focuses on accelerating the process of solving differential equations. The second type involves distillation techniques that utilize a one-step implicit latent variable model as the student model~\citep{luo2024diff,salimans2024multistep,xie2024distillation,zhou2024score, zhang2025onestep} and distills the diffusion process into the student model by minimizing the spread divergence family~\citep{zhang2020spread,zhang2019variational} through score estimation~\citep{poole2022dreamfusion,wang2024prolificdreamer}.

Although these distillation methods typically offer faster generation speeds (fewer NFEs) compared to covariance estimation methods, they often lack tractable density or likelihood estimation. This presents a challenge for generative modeling applications where likelihood or density estimation is crucial. For example, in AI4Science problems involving energy-based data, accurate model density estimation is essential for enabling importance sampling to correct sample bias, as discussed in~\citep{zhang2024efficient,vargas2023denoising,chen2024diffusive, akhound2024iterated}. Another example is diffusion model-based data compression, where better likelihood corresponds to better compression rates~\citep{townsend2019practical,zhang2021out,ho2020denoising,kingma2021variational}. In these cases, the proposed OCM-DDPM can be used to provide better likelihood estimates, resulting in improved task performance. 
Additionally, a recent paper~\citep{salimans2024multistep} shows how moment-matching improves distillation, achieving state-of-the-art diffusion quality with first-order matching. Our method could extend this to higher-order moment matching, potentially accelerating distillation training. We leave it as a promising direction for future work.

Beyond image modelling, our method can be straightforwardly applied to accelerate large-scale video diffusion models \citep{blattmann2023stable,chen2023videocrafter1}, which are based on latent diffusion models. A recent study \citep{zhao2024identifying} shows that covariance estimation is crucial in mitigating the image-leakage issue in image-to-video generation problems, presenting another promising application of our method.
Moreover, our method can be applied to solve inverse problems, where the optimal covariance can enhance the accuracy of posterior sampling \citep{chung2022diffusion,rozet2024learning}. 
While this work focuses on the DDPM and DDIM sampler, incorporating the improved covariance into the general stochastic and deterministic differential solver \citep{song2021scorebased,karras2022elucidating} also represents exciting directions for future research.

\begin{table}[t]
\centering
\vspace{-3mm}
\caption{FID score $\downarrow$ across various datasets using different sampling steps.}
\label{tab:fid}
\vskip -0.1in
\begin{small}
\begin{sc}
\setlength{\tabcolsep}{2.4pt}{
\begin{tabular}{lrrrrrrrrrrrr}
\toprule
& \multicolumn{6}{c}{CIFAR10 (LS)} & \multicolumn{6}{c}{CIFAR10 (CS)} \\
\cmidrule(lr){2-7} \cmidrule(lr){8-13}
\# timesteps $K$ & 
10 & 25 & 50 & 100 & 200 & 1000 &
10 & 25 & 50 & 100 & 200 & 1000 \\
\midrule
DDPM, $\tilde{\beta}$ & 
44.45 & 21.83 & 15.21 & 10.94 & 8.23 & 5.11 & 
34.76 & 16.18 & 11.11 & 8.38 & 6.66 & 4.92 \\
DDPM, $\beta$ & 
233.41 & 125.05 & 66.28 & 31.36 & 12.96 & \textbf{3.04} &
205.31 & 84.71 & 37.35 & 14.81 & 5.74 & \textbf{3.34} \\
A-DDPM & 
34.26 & 11.60 & 7.25 & 5.40 & 4.01 & 4.03 &
22.94 & 8.50 & 5.50 & 4.45 & 4.04 & 4.31 \\
NPR-DDPM  & 
32.35 & 10.55 & 6.18 & 4.52 & 3.57 & 4.10 &
19.94 & 7.99 & 5.31 & 4.52 & 4.10 & 4.27 \\
SN-DDPM &
\textbf{24.06} & \textbf{6.91} & \textbf{4.63} & \textbf{3.67} & \textbf{3.31} & \underline{3.65} & 
\underline{16.33} & \underline{6.05} & \underline{4.17} & \textbf{3.83} & \textbf{3.72} & 4.07 \\
\rowcolor{gray!20}
\cellcolor{white}OCM-DDPM & 
\underline{24.94} & \underline{9.19} & \underline{5.95} & \underline{4.36} & \underline{3.48} & 3.98 & 
\textbf{14.32} & \textbf{5.54} & \textbf{4.10} & \underline{3.84} & \underline{3.75} & \underline{4.18} \\
\arrayrulecolor{black!30}\midrule
DDIM & 
21.31 & 10.70 & 7.74 & 6.08 & 5.07 & 4.13 &
34.34 & 16.68 & 10.48 & 7.94 & 6.69 & 4.89 \\
A-DDIM & 
14.00 & 5.81 & 4.04 & 3.55 & 3.39 & 3.74 &
26.43 & 9.96 & 6.02 & 4.88 & 4.92 & 4.66 \\
NPR-DDIM  & 
13.34 & 5.38 & 3.95 & 3.53 & 3.42 & \underline{3.72} &
22.81 & 9.47 & 6.04 & 5.02 & 5.06 & 4.62 \\
SN-DDIM & 
\underline{12.19} & \textbf{4.28} & \textbf{3.39} & \textbf{3.23} & \textbf{3.22} & \textbf{3.65} &
\underline{17.90} & \underline{7.36} & \underline{5.16} & \underline{4.63} & \underline{4.63} & \textbf{4.51} \\
\rowcolor{gray!20}
\cellcolor{white}OCM-DDIM & 
\textbf{10.66} & \underline{4.35} & \underline{3.48} & \underline{3.27} & \underline{3.29} & 3.74 & 
\textbf{16.70} & \textbf{6.71} &\textbf{4.72} & \textbf{4.30} & \textbf{4.54} & \underline{4.53} \\
\arrayrulecolor{black}\midrule
\end{tabular}}
\setlength{\tabcolsep}{2.1pt}{
\begin{tabular}{lrrrrrrrrrrrr}
& \multicolumn{6}{c}{CelebA 64x64} & \multicolumn{6}{c}{ImageNet 64x64} \\
\cmidrule(lr){2-7} \cmidrule(lr){8-13}
\# timesteps $K$ & 
10 & 25 & 50 & 100 & 200 & 1000 &
25 & 50 & 100 & 200 & 400 & 4000 \\
\midrule
DDPM, $\tilde{\beta}$ & 
36.69 & 24.46 & 18.96 & 14.31 & 10.48 & 5.95 & 
29.21 & 21.71 & 19.12 & 17.81 & 17.48 & 16.55 \\
DDPM, $\beta$ & 
294.79 & 115.69 & 53.39 & 25.65 & 9.72 & \textbf{3.16} &
170.28 & 83.86 & 45.04 & 28.39 & 21.38 & 16.38 \\
A-DDPM & 
28.99 & 16.01 & 11.23 & 8.08 & 6.51 & 5.21 &
32.56 & 22.45 & 18.80 & 17.16 & 16.40 & 16.34 \\
NPR-DDPM & 
28.37 & 15.74 & 10.89 & 8.23 & 7.03 & 5.33 &
28.27 & 20.89 & 18.06 & 16.96 & \textbf{16.32} & 16.38 \\
SN-DDPM & 
\textbf{20.60} & \textbf{12.00} & \textbf{7.88} & \textbf{5.89} & \textbf{5.02} & \underline{4.42} &
\textbf{27.58} & \textbf{20.74} & \underline{18.04} & \textbf{16.61} & 16.37 & \textbf{16.22}
\\
\rowcolor{gray!20}
\cellcolor{white}OCM-DDPM & 
\underline{21.55} & \underline{12.71} & \underline{9.24} & \underline{6.97} & \underline{5.92} & 5.04 & 
\underline{28.02} & \underline{20.81} & \textbf{17.98} & \underline{16.74} & \textbf{16.32} & \underline{16.31} \\
\arrayrulecolor{black!30}\midrule
DDIM & 
20.54 & 13.45 & 9.33 & 6.60 & 4.96 & 3.40 &
\underline{26.06} & 20.10 & 18.09 & 17.84 & 17.74 & 19.00 \\
A-DDIM & 
15.62 & 9.22 & 6.13 & 4.29 & 3.46 & 3.13 &
\textbf{25.98} & \textbf{19.23} & 17.73 & 17.49 & 17.44 & 18.98 \\
NPR-DDIM & 
14.98 & 8.93 & 6.04 & 4.27 & 3.59 & 3.15 &
28.84 & 19.62 & \underline{17.63} & \underline{17.42} & 17.30 & \underline{18.91} \\
SN-DDIM &
\textbf{10.20} & \textbf{5.48} & \textbf{3.83} & \textbf{3.04} & \textbf{2.85} & \textbf{2.90} &
28.07 & \underline{19.38} & \textbf{17.53} & \textbf{17.23} & \textbf{17.23} & \textbf{18.89} \\
\rowcolor{gray!20}
\cellcolor{white}OCM-DDIM & 
\underline{10.28} & \underline{5.72} & \underline{4.42} & \underline{3.54} & \underline{3.17} & \underline{3.03} & 
28.28 & 19.62 & 17.71 & \underline{17.42} & \underline{17.26} & 19.02 \\
\arrayrulecolor{black}\bottomrule
\end{tabular}}
\end{sc}
\end{small}
\vspace{-4mm}
\end{table}

\section{Conclusion}
 In this paper, we proposed a new method for learning the diagonal covariance of the denoising distribution, which offers improved accuracy compared to other covariance learning approaches. Our results demonstrate that enhanced covariance estimation leads to better generation quality and diversity, all while reducing the number of required generation steps. We validated the scalability of our method across different problem domains and scales and discussed its connection to various acceleration techniques and highlighted several promising application areas for future exploration.

\clearpage

\section{Acknowledgments}
ZO is supported by the Lee Family Scholarship.  MZ and DB acknowledge funding from AI Hub in Generative Models, under grant EP/Y028805/1 and funding from
the Cisco Centre of Excellence. We want to thank Wenlin Chen for the useful discussions.
\bibliography{main}
\bibliographystyle{sections-iclr/icml}

\clearpage
\appendix
\newpage 
\appendix

\begin{center}
\LARGE
\textbf{Appendix for ``Improved Diffusion Models with Optimal Covariance Matching"}
\end{center}

\etocdepthtag.toc{mtappendix}
\etocsettagdepth{mtchapter}{none}
\etocsettagdepth{mtappendix}{subsection}
{\small \tableofcontents}

\section{Abstract Proof and Derivations}

\subsection{Derivations of the Analytical Full Covariance Identity \label{app:derivation}}

\restatheoremone*
\begin{proof} This proof generalizes the original analytical covariance identity proof discussed in~\citet{zhang2024moment}.
Using the fact that $\nabla_{\tx} q(\tx | x) = - \frac{1}{\sigma^2} (\tx - \alpha x) q(\tx | x) $ and the Tweedie's formula $\nabla_{\tx} \log q(\tx) = \frac{1}{\sigma^2}\left( \alpha \mathbb{E}_{q(x|\tx)} [x] - \tx \right)$, we can expand the hessian of the $\log q_\theta(\tx)$:
\begin{align*}
    \nabla^2_{\tx} \log q (\tx)
    &= -\frac{1}{\sigma^2} \int\nabla_{\tx}\left( (\tx - \alpha x)\frac{q(\tx|x)q(x)}{q(\tx)}\right)\dif{x} \\
    =-\frac{1}{\sigma^2}\int \frac{q(\tx|x)q(x)}{q(\tx)} \dif{x} &- \frac{1}{\sigma^2} \int (\tx - \alpha x) \frac{\nabla_{\tx}q(\tx|x)q(\tx) q(x)-\nabla_{\tx}q(\tx)q(\tx|x)q(x)}{q^2(\tx)} \dif{x} \\
    \Longrightarrow\sigma^2\nabla_{\tx}^2\log q(\tx) + 1 &= -\int (\tilde{x} - \alpha x) \frac{\nabla_{\tx}q(\tx|x)q(x)-\nabla_{\tx}\log q(\tx)q(\tx|x)q(x)}{q(\tx)}\dif{x} \\
    =- \int (\tilde{x} - \alpha x)  &\frac{-\frac{1}{\sigma^2}(\tilde{x} - \alpha x)q(\tx|x)q(x)+ \frac{1}{\sigma^2}(\tx - \alpha \mathbb{E}_{q(x|\tx)} [x] )q(\tx|x)q(x)}{q(\tx)}\dif{x} \\
     \Longrightarrow \sigma^4\nabla_{\tx}^2\log q (\tx) + \sigma^2I &= \int  \left((\tx - \alpha x)^2 - (\tx - \alpha x)(\tx - \alpha\mathbb{E}_{q(x|\tx)} [x])\right) q(x|\tilde{x})\dif{x} \\
     &= \alpha^2 \mathbb{E}_{q(x | \tx)} [x^2] - \alpha^2 \mathbb{E}_{q(x | \tx)} [x]^2 \equiv \alpha^2 \Sigma(\tx)
\end{align*}
Therefore, we obtain the analytical full covariance identity:
$
    \Sigma_q(\tx)= \left(\sigma^4 \nabla^2_{\tx} \log q(\tx) + \sigma^2 I \right) / \alpha^2.
$
\end{proof}

\subsection{Validity of the OCM Objective\label{app:ocm}}

\restatheoremtwo*
\begin{proof}
    Recall that to learn the optimal covariance, we can minimize the following loss function
    \begin{align}
        \min_{\phi} \mathbb{E}_{q(\tx)} \left\lVert h_\phi (\tx) - \mathbb{E}_{q(v)} [v \odot H(\tx)v]  \right\rVert_2^2
    \end{align}
    We call this the grounded objective because it remains unbiased and consistent, i.e., $h_{\phi^*} = \text{diag}(H(\tx))$, when the inner expectation is approximated with infinite Monte Carlo samples.  Using Jensen's inequality, we can show that the OCM objective defined in \eqref{eq:ocm} provides an upper bound for the grounded objective
    \begin{align}
        \mathbb{E}_{q(\tx)} \left\lVert h_\phi (\tx) - \mathbb{E}_{q(v)} [v \odot H(\tx)v]  \right\rVert_2^2 
        &= \mathbb{E}_{q(\tx)}  \left\lVert \mathbb{E}_{q(v)} [h_\phi (\tx) -  v \odot H(\tx)v]  \right\rVert_2^2 \nonumber \\
        &\leq  \mathbb{E}_{q(\tx)} \mathbb{E}_{q(v)} \left\lVert  h_\phi (\tx) -  v \odot H(\tx)v  \right\rVert_2^2 \nonumber \\
        &= L_{\text{ocm}}(\phi). \nonumber 
    \end{align}
    Thus, minimizing the OCM objective also minimizes the grounded objective, leading to a more accurate approximation of the diagonal Hessian.
    We then show that these two objectives are equivalent when attaining their optimal.
    To see this, we can expand the OCM objective
    \begin{align*}
        L_{\text{ocm}}(\phi)
        &=\mathbb{E}_{p(v)q(\tx)} ||h_\phi(\tx) -  v \odot H(\tx)v ||_2^2 \\
        &= \mathbb{E}_{q(\tx)} ||h_\phi(\tx) ||_2^2  - 2 \mathbb{E}_{p(v)q(\tx)} [ h_\phi(\tx)^T (v \odot H(\tx)v) ] + c \\
        &= \mathbb{E}_{q(\tx)} ||h_\phi(\tx) ||_2^2 - 2 \mathbb{E}_{q(\tx)} [ h_\phi(\tx)^T \text{diag}(H(\tx)) ] + c \\
        &= \mathbb{E}_{q(\tx)} ||h_\phi(\tx) -  \text{diag}(H(\tx)) ||_2^2 + c',
    \end{align*}
    where $c, c'$ are constants w.r.t. the parameter $\phi$, and line 2 to line 3 follows from the fact that $\mathbb{E}_{p(v)} [v \odot H(x)v] = \text{diag}(H(x))$. Therefore, $L_{\text{ocm}}(\phi)$ attains optimal when $h_\phi (\tx) = \text{diag}(H(\tx))$ for all $\tx \sim q(\tx)$.
\end{proof}

\subsection{Connection to SN-DDPM \label{app:equiv-ocm-sn}}
In this section, we showcase the connection between OCM-DDPM and SN-DDPM through the second-order Tweedie's formula.
Before delving into it, we present the following lemmas, which are essential for our derivation.

\begin{lemma}[First order Tweedie's formula \citep{efron2011tweedie}] \label{lemma:first-order-tweedie}
    Let $q(\tx | x) = \mathcal{N}(\tx | \sqrt{\alpha} x, \beta I)$, we have the mean of the inverse density equals 
    \begin{align}
        \mathbb{E}_{q(x | \tx)}[x] = \frac{1}{\sqrt{\alpha}} (\tx + \beta \nabla_{\tx} \log q(\tx)).
    \end{align}
\end{lemma}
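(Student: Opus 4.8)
The plan is to prove this by direct differentiation of the marginal $q(\tx)=\int q(\tx\mid x)q(x)\dif x$, exploiting the special structure of the Gaussian transition kernel. The key observation is that for $q(\tx\mid x)=\mathcal{N}(\tx\mid\sqrt{\alpha}x,\beta I)$ we have the elementary identity $\nabla_{\tx}q(\tx\mid x)=-\tfrac{1}{\beta}(\tx-\sqrt{\alpha}x)\,q(\tx\mid x)$, which turns a gradient of the likelihood into an affine multiple of the likelihood itself. This is exactly the algebraic hook that makes Tweedie's formula work.

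First I would write $\nabla_{\tx}q(\tx)=\int\nabla_{\tx}q(\tx\mid x)\,q(x)\dif x$ (differentiation under the integral sign, justified by the smoothness and integrability of the Gaussian), and substitute the identity above to get $\nabla_{\tx}q(\tx)=-\tfrac{1}{\beta}\int(\tx-\sqrt{\alpha}x)\,q(\tx\mid x)q(x)\dif x=-\tfrac{1}{\beta}\bigl(\tx\,q(\tx)-\sqrt{\alpha}\int x\,q(\tx\mid x)q(x)\dif x\bigr)$. Next I would divide both sides by $q(\tx)$ and recognize $\int x\,\tfrac{q(\tx\mid x)q(x)}{q(\tx)}\dif x=\mathbb{E}_{q(x\mid\tx)}[x]$ by Bayes' rule, yielding $\nabla_{\tx}\log q(\tx)=-\tfrac{1}{\beta}\bigl(\tx-\sqrt{\alpha}\,\mathbb{E}_{q(x\mid\tx)}[x]\bigr)$. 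Finally, solving this linear relation for $\mathbb{E}_{q(x\mid\tx)}[x]$ gives $\mathbb{E}_{q(x\mid\tx)}[x]=\tfrac{1}{\sqrt{\alpha}}\bigl(\tx+\beta\nabla_{\tx}\log q(\tx)\bigr)$, which is the claimed formula.

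This proof has no real obstacle — it is a short, standard computation — so the only thing to be careful about is stating the regularity conditions that license swapping $\nabla_{\tx}$ with the integral over $x$; since $q(\tx\mid x)$ is Gaussian and $q(x)$ is a probability density, dominated convergence applies on compact neighborhoods of any $\tx$, so the interchange is valid. I would also note for consistency that this is precisely the $d=1$-style scalar version of the Tweedie identity already used (with $\alpha\mapsto\alpha^2$, $\beta\mapsto\sigma^2$) in the proof of Theorem~\ref{theo:cov}, so it could alternatively be quoted as a special case; but giving the self-contained two-line derivation is cleaner for the subsequent SN-DDPM connection.
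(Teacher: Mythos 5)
Your derivation is correct. Note that the paper does not actually prove this lemma --- it is stated as a known result with a citation to Efron --- but your argument is exactly the computation the paper performs inline in the proof of Theorem~\ref{theo:cov}, where the identity $\nabla_{\tx}q(\tx\mid x)=-\tfrac{1}{\sigma^2}(\tx-\alpha x)q(\tx\mid x)$ and the resulting relation $\nabla_{\tx}\log q(\tx)=\tfrac{1}{\sigma^2}(\alpha\,\mathbb{E}_{q(x\mid\tx)}[x]-\tx)$ are both invoked, so your proof is fully consistent with the paper's conventions. The only alternative route visible in the paper is the exponential-family reparametrization used to prove Lemma~\ref{lemma:second-order-tweedie}, which yields the first-order formula as the byproduct $\mathbb{E}[\eta\mid\tx]=\nabla_{\tx}\lambda(\tx)$; that machinery is heavier than needed here, and your two-line direct differentiation is the cleaner choice for this statement.
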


\begin{lemma}[Second order Tweedie's formula] \label{lemma:second-order-tweedie}
    Let $q(\tx | x) = \mathcal{N}(\tx | \sqrt{\alpha} x, \beta I)$, we have the second moment of the inverse density equals 
    \begin{align}
        \mathbb{E}_{q(x | \tx)}[xx^T] = \frac{1}{\alpha} \left( \tx \tx^T + \beta s_1 (\tx) \tx^T + \beta \tx s_1(\tx)^T + \beta^2 s_2 (\tx) + \beta^2 s_1(\tx) s_1(\tx)^T + \beta I \right),
    \end{align}
    where $s_1(\tx) \equiv \nabla_{\tx} \log q(\tx)$ and $s_2(\tx) \equiv \nabla_{\tx}^2 \log q(\tx)$.
\end{lemma}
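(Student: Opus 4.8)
\textbf{Proof proposal for Lemma~\ref{lemma:second-order-tweedie} (Second order Tweedie's formula).}

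The plan is to mimic the derivation of the first-order Tweedie's formula, but carry the differentiation one order further. First I would record the basic Gaussian identity that drives everything: for $q(\tx\mid x)=\mathcal{N}(\tx\mid\sqrt\alpha x,\beta I)$ we have $\nabla_{\tx}q(\tx\mid x)=-\tfrac1\beta(\tx-\sqrt\alpha x)q(\tx\mid x)$, hence, writing $q(\tx)=\int q(\tx\mid x)q(x)\dif x$ and using Bayes' rule $q(x\mid\tx)=q(\tx\mid x)q(x)/q(\tx)$, one gets the well-known rearrangement $\sqrt\alpha\,\mathbb{E}_{q(x\mid\tx)}[x]=\tx+\beta\,\nabla_{\tx}\log q(\tx)$, i.e. Lemma~\ref{lemma:first-order-tweedie}. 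The key step for the second order is to differentiate this relation once more in $\tx$: the left side becomes $\sqrt\alpha\,\nabla_{\tx}\mathbb{E}_{q(x\mid\tx)}[x]$, and I would compute $\nabla_{\tx}\mathbb{E}_{q(x\mid\tx)}[x]$ directly from the definition, using $\nabla_{\tx}q(x\mid\tx)=q(x\mid\tx)\big(\nabla_{\tx}\log q(\tx\mid x)-\nabla_{\tx}\log q(\tx)\big)=q(x\mid\tx)\big(-\tfrac1\beta(\tx-\sqrt\alpha x)-s_1(\tx)\big)$ where $s_1(\tx)\equiv\nabla_{\tx}\log q(\tx)$.

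Carrying this out, $\nabla_{\tx}\mathbb{E}_{q(x\mid\tx)}[x]=\mathbb{E}_{q(x\mid\tx)}\!\big[x\big(-\tfrac1\beta(\tx-\sqrt\alpha x)-s_1(\tx)\big)^{\!T}\big]$, which expands into a term proportional to $\tfrac{\sqrt\alpha}\beta\mathbb{E}_{q(x\mid\tx)}[xx^T]$ plus lower-order terms involving $\mathbb{E}_{q(x\mid\tx)}[x]$, $\tx$, and $s_1(\tx)$. On the right side, differentiating $\tfrac1{\sqrt\alpha}(\tx+\beta s_1(\tx))$ gives $\tfrac1{\sqrt\alpha}(I+\beta\,\nabla_{\tx}s_1(\tx))=\tfrac1{\sqrt\alpha}(I+\beta\,s_2(\tx))$ with $s_2(\tx)\equiv\nabla_{\tx}^2\log q(\tx)$. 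Equating the two sides and solving for $\mathbb{E}_{q(x\mid\tx)}[xx^T]$, I would then substitute the first-order identity $\mathbb{E}_{q(x\mid\tx)}[x]=\tfrac1{\sqrt\alpha}(\tx+\beta s_1(\tx))$ to eliminate the remaining $\mathbb{E}_{q(x\mid\tx)}[x]$ factors; collecting terms should produce exactly $\tfrac1\alpha\big(\tx\tx^T+\beta s_1(\tx)\tx^T+\beta\tx s_1(\tx)^T+\beta^2 s_2(\tx)+\beta^2 s_1(\tx)s_1(\tx)^T+\beta I\big)$.

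The main obstacle is purely bookkeeping: keeping the outer-product (non-symmetric) terms straight, since $\nabla_{\tx}\mathbb{E}_{q(x\mid\tx)}[x]$ is a matrix whose $(i,j)$ entry is $\partial_{\tx_j}\mathbb{E}[x_i]$, so one must be careful about which index the score vector attaches to and not prematurely symmetrize. A clean way to avoid sign and placement errors is to treat the identity $\sqrt\alpha\,\mathbb{E}_{q(x\mid\tx)}[x]-\tx=\beta s_1(\tx)$ as a vector equation and apply the gradient operator $\nabla_{\tx}$ (producing the Jacobian / a matrix) to both sides, tracking each product via the rule $\nabla_{\tx}\mathbb{E}_{q(x\mid\tx)}[f(x)]=\mathbb{E}_{q(x\mid\tx)}\!\big[f(x)\,\big(\nabla_{\tx}\log q(x\mid\tx)\big)^{T}\big]$ where $\nabla_{\tx}\log q(x\mid\tx)=-\tfrac1\beta(\tx-\sqrt\alpha x)-s_1(\tx)$. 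Everything else is routine algebra, and one may optionally sanity-check the result by taking the trace or by specializing to $\alpha=1$ and comparing against the covariance identity $\Sigma(\tx)=\mathbb{E}[xx^T]-\mathbb{E}[x]\mathbb{E}[x]^T=\beta^2 s_2(\tx)+\beta I$ implied by Theorem~\ref{theo:cov}.
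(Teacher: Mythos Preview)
Your proposal is correct and the bookkeeping you outline goes through cleanly: differentiating the first-order Tweedie identity $\sqrt{\alpha}\,\mathbb{E}_{q(x\mid\tx)}[x]=\tx+\beta s_1(\tx)$ in $\tx$, using $\nabla_{\tx}\log q(x\mid\tx)=-\tfrac1\beta(\tx-\sqrt\alpha x)-s_1(\tx)$, and substituting the first-order formula back yields exactly the stated second moment.

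The paper takes a somewhat different route. Rather than differentiating Lemma~\ref{lemma:first-order-tweedie} directly, it reparametrizes the Gaussian likelihood as an exponential family in the natural parameter $\eta=\tfrac{\sqrt\alpha}{\beta}x$, writes the posterior as $q(\eta\mid\tx)=e^{\eta^T\tx-\psi(\eta)-\lambda(\tx)}p(\eta)$ with $\lambda(\tx)=\log q(\tx)-\log q_0(\tx)$, and differentiates the normalization condition $\int q(\eta\mid\tx)\dif\eta=1$ twice in $\tx$ to obtain $\mathbb{E}[\eta\mid\tx]=\nabla_{\tx}\lambda(\tx)$ and $\mathbb{E}[\eta\eta^T\mid\tx]=\nabla_{\tx}^2\lambda(\tx)+\nabla_{\tx}\lambda(\tx)\nabla_{\tx}\lambda(\tx)^T$; translating back from $\eta$ to $x$ gives the lemma. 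Both arguments are ``differentiate twice'' at heart, but the paper's version factors through the exponential-family structure (following \cite{meng2021estimating}), which makes the cumulant-generating role of $\lambda$ explicit and would generalize verbatim to non-Gaussian exponential families. Your approach is more elementary and self-contained for the Gaussian case---no change of variables, no auxiliary $\lambda$---and has the pedagogical advantage of showing the second-order statement as a literal one-derivative extension of the first-order one.
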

\begin{proof}
    The proof follows that in \citep[Appendix B]{meng2021estimating} with the generalization to the scaled Gaussian convolutions. Specifically, we first reparametrized  $q(\tx | x)$ as a exponential distribution
    \begin{align}
        q(\tx | \eta) = e^{\eta^T \tx - \psi(\eta)} q_0 (\tx),
    \end{align}
    where $\eta = \frac{\sqrt{\alpha}}{\beta} x, q_0(\tx) \propto e^{-\frac{1}{2\beta} \tx^T \tx}$, and $\psi(\eta)$ denotes the partition function.  By applying the Bayes rule $q(\eta | \tx) = \frac{q(\tx | \eta) p(\eta)}{q(\tx)}$, we have the corresponding posterior
    \begin{align}
        q(\eta | \tx) = e^{\eta^T \tx - \psi(\eta) - \lambda(\tx)} p(\eta),
    \end{align}
    where $\lambda(\tx) \equiv \log q(\tx) - \log q_0 (\tx)$. Since $\int q(\eta | \tx) \mathrm{d} \eta = 1$, by taking the derivative w.r.t. $\tx$ on both sides, we have
    \begin{align}
        \int \left( \eta - \nabla_{\tx} \lambda(\tx) \right)^T q(\eta | \tx) = 0,
    \end{align}
    which implies that $\mathbb{E}[\eta | \tx] = \nabla_{\tx} \lambda(\tx)$. Taking the derivative w.r.t. $\tx$ on both sides again, we have
    \begin{align}
        \int \eta \left( \eta - \nabla_{\tx} \lambda(\tx) \right)^T q(\eta | \tx) = \nabla_{\tx}^2 \lambda(\tx),
    \end{align}
    which implies that $\mathbb{E}[\eta \eta^T | \tx] = \nabla_{\tx}^2 \lambda(\tx) + \nabla_{\tx} \lambda(\tx) \nabla_{\tx}^T \lambda(\tx)$. By substituting $\eta = \frac{\sqrt{\alpha}}{\beta} x, \nabla_{\tx} \lambda(\tx) = s_1 (\tx) + \frac{1}{\beta}\tx$, and $\nabla_{\tx}^2 \lambda(\tx) = s_2 (\tx) + \frac{1}{\beta}I$, we get the result as desired.
\end{proof}

\begin{lemma}[Convert the covariance of $q(\tx | x)$ to the hessian of $q(\tx)$] \label{lemma:cov0_and_hessian}
    Let $q(\tx | x) = \mathcal{N}(\tx | \sqrt{\alpha} x, \beta I)$,  we have the covariance of the inverse density equals
    \begin{align} \label{eq:cov0-ocmddpm}
        \mathrm{Cov}_{q(x|\tx)}[x] = \frac{\beta}{\alpha} (I + \beta \nabla_{\tx}^2 \log q(\tx)).
    \end{align}
\end{lemma}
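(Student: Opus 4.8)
The plan is to observe that this lemma is simply Theorem~\ref{theo:cov} rewritten under the parametrization $q(\tx\mid x)=\mathcal{N}(\tx\mid\sqrt{\alpha}\,x,\beta I)$, i.e.\ with the scale $\alpha$ of the theorem replaced by $\sqrt{\alpha}$ and the noise level $\sigma^2$ replaced by $\beta$. Substituting these directly into \cref{eq:fullcov} yields $\Sigma(\tx)=(\beta^2\nabla_{\tx}^2\log q(\tx)+\beta I)/(\sqrt{\alpha})^2=\frac{\beta}{\alpha}(I+\beta\nabla_{\tx}^2\log q(\tx))$, which is exactly the claim. So one valid option is a one-line proof that cites Theorem~\ref{theo:cov} and performs this change of variables.

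Alternatively --- and this is the route I would actually write out, since it keeps this subsection self-contained and reuses the two Tweedie formulas just established --- I would compute the covariance directly as $\mathrm{Cov}_{q(x\mid\tx)}[x]=\mathbb{E}_{q(x\mid\tx)}[xx^T]-\mathbb{E}_{q(x\mid\tx)}[x]\,\mathbb{E}_{q(x\mid\tx)}[x]^T$. Writing $s_1\equiv\nabla_{\tx}\log q(\tx)$ and $s_2\equiv\nabla_{\tx}^2\log q(\tx)$ as in \cref{lemma:second-order-tweedie}, I would first plug in the second-order formula for $\mathbb{E}[xx^T]$, contributing $\frac1\alpha(\tx\tx^T+\beta s_1\tx^T+\beta\tx s_1^T+\beta^2 s_2+\beta^2 s_1 s_1^T+\beta I)$, and then square the first-order formula of \cref{lemma:first-order-tweedie} to get $\mathbb{E}[x]\mathbb{E}[x]^T=\frac1\alpha(\tx+\beta s_1)(\tx+\beta s_1)^T=\frac1\alpha(\tx\tx^T+\beta\tx s_1^T+\beta s_1\tx^T+\beta^2 s_1 s_1^T)$. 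Subtracting, the $\tx\tx^T$ term, the two cross terms $\beta s_1\tx^T$ and $\beta\tx s_1^T$, and the $\beta^2 s_1 s_1^T$ term all cancel, leaving precisely $\frac1\alpha(\beta^2 s_2+\beta I)=\frac{\beta}{\alpha}(I+\beta\nabla_{\tx}^2\log q(\tx))$, as desired.

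There is essentially no hard step here: the only care required is the bookkeeping of the four outer-product cross terms so that the cancellation is transparent, and checking that \cref{lemma:first-order-tweedie,lemma:second-order-tweedie} are indeed stated for the same convolution kernel $\mathcal{N}(\sqrt{\alpha}\,x,\beta I)$ as in the lemma (they are). The one point worth a remark is that $s_1 s_1^T$ enters with the identical coefficient $\beta^2/\alpha$ in both $\mathbb{E}[xx^T]$ and $\mathbb{E}[x]\mathbb{E}[x]^T$; this is what makes the nonlinear term in the score drop out and leaves a clean expression that is linear in the Hessian of $\log q(\tx)$.
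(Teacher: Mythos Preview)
Your proposal is correct. Both of your routes are valid: the first is a one-line reduction to Theorem~\ref{theo:cov}, and the second is a direct expansion via \cref{lemma:first-order-tweedie,lemma:second-order-tweedie}, with the cancellation of the four outer-product terms exactly as you describe.

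The paper's own proof also invokes \cref{lemma:first-order-tweedie,lemma:second-order-tweedie}, but rather than subtracting $\mathbb{E}[x]\mathbb{E}[x]^T$ from $\mathbb{E}[xx^T]$ directly, it first passes to the standardized residual $(\tx-\sqrt{\alpha}\,x)/\beta$, writes $\mathrm{Cov}_{q(x\mid\tx)}[x]=\tfrac{\beta^2}{\alpha}\,\mathrm{Cov}_{q(x\mid\tx)}\!\big[(\tx-\sqrt{\alpha}\,x)/\beta\big]$, and expands from there. The algebra is essentially the same as yours, just reorganized around the noise variable. The payoff of the paper's detour is that an intermediate line of the computation is precisely the SN-DDPM identity
\[
\mathrm{Cov}_{q(x\mid\tx)}[x]=\tfrac{\beta^2}{\alpha}\Big(\tfrac{1}{\beta}\,\mathbb{E}_{q(x\mid\tx)}[\epsilon\epsilon^T]-s_1(\tx)s_1(\tx)^T\Big),
\]
which is reused immediately afterward in the proof of \cref{theorem:connection-ocm-sn}. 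Your direct subtraction is cleaner and arrives at the result in fewer steps, but it does not surface this intermediate formula; if you take your route, you would need to derive that identity separately when establishing the OCM--SN connection.
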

\begin{proof}
    Let $s_1(\tx) \equiv \nabla_{\tx} \log q(\tx)$ and $s_2(\tx) \equiv \nabla_{\tx}^2 \log q(\tx)$. We have
    \begin{align}
        \mathrm{Cov}_{q(x|\tx)}[x]
        &= \frac{\beta^2}{\alpha} \mathrm{Cov}_{q(x|\tx)}\left[ \frac{\tx - \sqrt{\alpha}x}{\beta} \right] \nonumber \\
        &= \frac{\beta^2}{\alpha} \left( \mathbb{E}_{q(x|\tx)} \left( \frac{\tx \!-\! \sqrt{\alpha}x}{\beta} \right)\left( \frac{\tx \!-\! \sqrt{\alpha}x}{\beta} \right)^{\!T} \!\!\!\!- \mathbb{E}_{q(x|\tx)} \left( \frac{\tx \!-\! \sqrt{\alpha}x}{\beta} \right) \mathbb{E}_{q(x|\tx)} \left( \frac{\tx \!-\! \sqrt{\alpha}x}{\beta} \right)^{\!T} \right) \nonumber \\
        &= \frac{\beta^2}{\alpha} \left( \frac{1}{\beta^2} \mathbb{E}_{q(x|\tx)} \left( \tx - \sqrt{\alpha} x \right)\left( \tx - \sqrt{\alpha} x \right)^T - s_1(\tx)s_1(\tx)^T \right) \nonumber \\
        &= \frac{\beta^2}{\alpha} \left( \frac{1}{\beta^2}  \left( \tx \tx^T - 2\tx (\tx + \beta s_1(\tx))^T + \alpha \mathbb{E}_{q(x|\tx)} [xx^T] \right) - s_1(\tx)s_1(\tx)^T \right) \nonumber \\
        &= \frac{\beta^2}{\alpha} \left( \frac{1}{\beta^2}  \left( \beta^2 s_2(\tx) + \beta^2 s_1 (\tx) s_1(\tx)^T + \beta I \right) - s_1(\tx)s_1(\tx)^T \right) \nonumber \\
        &= \frac{\beta^2}{\alpha} \left(\frac{1}{\beta}I + s_2 (\tx) \right) \equiv \frac{\beta}{\alpha} \left(I + \beta \nabla_{\tx}^2 \log q(\tx) \right), \nonumber
    \end{align}
    where line 2 to line 3 follows \cref{lemma:first-order-tweedie} and line 4 to line 5 follows \cref{lemma:second-order-tweedie}.
\end{proof}
 It is noteworthy that line 3 in the proof of \cref{lemma:cov0_and_hessian} also showcases the connection between the covariance of $q(\tx | x)$ and the score of $q(\tx)$:
\begin{align} \label{eq:cov0-snddpm}
    \mathrm{Cov}_{q(x|\tx)}[x] = \frac{\beta^2}{\alpha} \left( \frac{1}{\beta} \mathbb{E}_{q(x|\tx)} [\epsilon \epsilon^T] - \nabla_{\tx} \log q(\tx) \nabla_{\tx} \log q(\tx)^T \right),
\end{align}
where $\epsilon = (\tx - \sqrt{\alpha}x) / \sqrt{\beta}$. 
Now we can apply \cref{eq:cov0-ocmddpm,eq:cov0-snddpm} to establish the connection between OCM-DDPM and SN-DDPM, as demonstrated in the following theorem.
\begin{theorem}[Connection between OCM-DDPM and SN-DDPM] \label{theorem:connection-ocm-sn}
    Suppose $q(x_{0:T})$ is defined as \cref{eq:forward}, and the pre-trained score function is well-learned, i.e., $\nabla_{x_t} \log p_\theta (x_t) = \nabla_{x_t} \log q(x_t), \forall x_t$. Let $h_\phi, g_\psi$ be parameterized neural networks. OCM-DDPM learns $h_\phi$ by minimizing the objective $\mathcal{L}_{\text{OCM}}(\phi)$ as in \cref{eq:diffusion:loss}, and SN-DDPM learns $g_\psi$ by minimizing the objective $\mathcal{L}_{\text{SN}}(\psi)$ as in \cref{eq:sn-ddpm-obj}. Then in optimal training with $\phi^* = \argmin \mathcal{L}_{\text{OCM}}(\phi)$ and $\psi^* = \argmin \mathcal{L}_{\text{SN}}(\psi)$, we have the optimal diagonal covariance of OCM-DDPM $\Sigma_{t-1}(x_t;\phi^*) = \frac{(1-\alpha_t)^2 h_{\phi^*}(x_t) + (1-\alpha_t)I}{{\alpha}_t}$ and that of SN-DDPM $\Sigma_{t-1}(x_t;\psi^*)= \frac{1-\Bar{\alpha}_{t-1}}{1 - \Bar{\alpha}_t} \beta_t + \frac{\beta_t^2}{\alpha_t} \!\left(\! \frac{g_{\psi^*}(x_t)}{1\!-\!\bar{\alpha}_t} \!-\! \nabla_{x_t} \log \!p_\theta (x_t)^2 \!\right)$ are identical.
\end{theorem}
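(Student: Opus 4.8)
The plan is to evaluate both optimal covariances in closed form in terms of the true marginal score $s_1(x_t)\equiv\nabla_{x_t}\log q(x_t)$ and Hessian $s_2(x_t)\equiv\nabla_{x_t}^2\log q(x_t)$, and then show the two resulting expressions coincide using the two equivalent representations of $\mathrm{Cov}_{q(x_0|x_t)}[x_0]$ supplied by \cref{lemma:cov0_and_hessian} and \cref{eq:cov0-snddpm}. The well-learned-score hypothesis is what lets us treat the pretrained score as $s_1$ everywhere it appears.

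\emph{The OCM side.} By \cref{theo:validity-of-cov} the minimizer of $\mathcal{L}_{\text{OCM}}$ satisfies $h_{\phi^*}(x_t)=\mathrm{diag}(s_2(x_t))$ for $q(x_t)$-almost every $x_t$. Since the one-step DDPM kernel is $q(x_t|x_{t-1})=\mathcal{N}(\sqrt{\alpha_t}x_{t-1},(1-\alpha_t)I)$, applying \cref{theo:cov} with scale parameter $\sqrt{\alpha_t}$ and noise variance $1-\alpha_t$ identifies the true posterior covariance of $q(x_{t-1}|x_t)$ with $\big((1-\alpha_t)^2 s_2(x_t)+(1-\alpha_t)I\big)/\alpha_t$, whose diagonal (a linear, elementwise operation) is exactly the stated $\Sigma_{t-1}(x_t;\phi^*)$. \emph{The SN side.} The objective \cref{eq:sn-ddpm-obj} is an $L_2$ regression of $\epsilon_t\odot\epsilon_t$ onto functions of $x_t$, whose unique minimizer is the conditional mean, so $g_{\psi^*}(x_t)=\mathbb{E}_{q(x_0|x_t)}[\epsilon_t\odot\epsilon_t]=\mathrm{diag}\big(\mathbb{E}_{q(x_0|x_t)}[\epsilon_t\epsilon_t^\top]\big)$ with $\epsilon_t=(x_t-\sqrt{\bar{\alpha}_t}x_0)/\sqrt{1-\bar{\alpha}_t}$; substituting this together with $\nabla_{x_t}\log p_\theta(x_t)=s_1(x_t)$ into \cref{eq:sn-cov} at $t'=t-1$ (so $\bar{\alpha}_{t':t}=\alpha_t$, $\bar{\beta}_{t':t}=\beta_t$) gives the stated $\Sigma_{t-1}(x_t;\psi^*)$.

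\emph{Matching.} Applying \cref{lemma:cov0_and_hessian} and \cref{eq:cov0-snddpm} to the skip kernel $q(x_t|x_0)=\mathcal{N}(\sqrt{\bar{\alpha}_t}x_0,(1-\bar{\alpha}_t)I)$ (i.e. with $\alpha\leftarrow\bar{\alpha}_t$, $\beta\leftarrow1-\bar{\alpha}_t$), equating the diagonals of the two resulting expressions for $\mathrm{Cov}_{q(x_0|x_t)}[x_0]$ and dividing through, one obtains the identity
\[
\mathrm{diag}(s_2(x_t))+s_1(x_t)\odot s_1(x_t)=\frac{g_{\psi^*}(x_t)}{1-\bar{\alpha}_t}-\frac{1}{1-\bar{\alpha}_t}I .
\]
Substituting $\mathrm{diag}(s_2(x_t))$ from this identity into the OCM expression splits $\Sigma_{t-1}(x_t;\phi^*)$ into the term $\frac{\beta_t^2}{\alpha_t}\big(\frac{g_{\psi^*}(x_t)}{1-\bar{\alpha}_t}-s_1(x_t)\odot s_1(x_t)\big)$ plus a scalar multiple of $I$; collecting the $I$-terms and simplifying with $\bar{\alpha}_t=\alpha_t\bar{\alpha}_{t-1}$ (so that $(1-\bar{\alpha}_t)-(1-\alpha_t)=\alpha_t(1-\bar{\alpha}_{t-1})$) shows the scalar equals $\frac{1-\bar{\alpha}_{t-1}}{1-\bar{\alpha}_t}\beta_t=\tilde{\beta}_t$, which is precisely the remaining term of $\Sigma_{t-1}(x_t;\psi^*)$. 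Hence $\Sigma_{t-1}(x_t;\phi^*)=\Sigma_{t-1}(x_t;\psi^*)$.

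I expect the only real work to be in the matching step; everything else is a direct substitution of results already in hand. Three points need care: (i) identifying $g_{\psi^*}$ as the \emph{uncentered} conditional second moment of $\epsilon_t$ and checking that $\mathrm{diag}(\cdot)$ commutes with all the matrix manipulations, which is legitimate because $\mathrm{diag}(vv^\top)=v\odot v$; (ii) invoking the well-learned-score hypothesis in \emph{both} places it is used — to replace $\nabla_{x_t}\log p_\theta$ by $s_1$ in \cref{eq:sn-cov}, and to make $g_{\psi^*}$ the genuine posterior second moment; and (iii) the noise-schedule bookkeeping when collecting the $I$-terms, which ultimately reduces to $\bar{\alpha}_t=\alpha_t\bar{\alpha}_{t-1}$.
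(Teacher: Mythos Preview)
Your argument is correct, and it takes a genuinely different route from the paper's proof. The paper proceeds by invoking an external decomposition (Lemma~13 of \cite{bao2022analytic}),
\[
\mathrm{Cov}_{q(x_{t-1}|x_t)}[x_{t-1}]=\lambda_t^2 I+\gamma_t^2\,\mathrm{Cov}_{q(x_0|x_t)}[x_0],
\]
and then substitutes the two representations of $\mathrm{Cov}_{q(x_0|x_t)}[x_0]$ from \cref{lemma:cov0_and_hessian} and \cref{eq:cov0-snddpm} into this common template, showing that each plug-in reproduces the claimed $\Sigma_{t-1}(x_t;\phi^*)$ and $\Sigma_{t-1}(x_t;\psi^*)$ respectively. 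You instead apply \cref{theo:cov} directly to the one-step kernel $q(x_t|x_{t-1})$ to pin down the OCM side, then equate the two $\mathrm{Cov}_{q(x_0|x_t)}[x_0]$ expressions to obtain a scalar identity for $\mathrm{diag}(s_2)$, which you substitute and simplify via $\bar{\alpha}_t=\alpha_t\bar{\alpha}_{t-1}$. Your route is fully self-contained within the paper's own results and avoids the external lemma; the paper's route has the minor expository advantage of exhibiting both covariances as two realizations of the same formula in $\mathrm{Cov}_{q(x_0|x_t)}[x_0]$, so the equality is structural rather than the outcome of an algebraic cancellation. The three care-points you flag (the uncentered second-moment interpretation of $g_{\psi^*}$, the two uses of the well-learned-score hypothesis, and the schedule bookkeeping) are exactly the places where either proof could slip, and you handle them correctly.
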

\begin{proof}
    In optimal training, we know that $h_{\phi^*} (x_t) = \mathrm{diag}(\nabla_{x_t}^2 \log q (x_t))$ and $g_{\psi^*} (x_t) = \mathbb{E}_{q(x_0 | x_t)}[\epsilon_t^2]$. \cite{bao2022analytic} show that the covariance of the denoising density $q(x_{t-1}|x_t)$ has a closed form (see Lemma 13 in \cite{bao2022analytic})
    \begin{align} \label{eq:covq_tm1_t}
        \mathrm{Cov}_{q(x_{t-1} | x_t)}[x_{t-1}] = \lambda_t^2 I + \gamma_t^2 \mathrm{Cov}_{q(x_0 | x_t)}[x_0], 
    \end{align}
    where $\lambda_t^2 = \frac{1-\Bar{\alpha}_{t-1}}{1 - \Bar{\alpha}_t} \beta_t$ and $\gamma_t = \sqrt{\bar{\alpha}_{t-1}}-\sqrt{1 - \bar{\alpha}_{t-1} - \lambda_t^2} \sqrt{\frac{\bar{\alpha}_t}{1 - \bar{\alpha}_t}} = \sqrt{\bar{\alpha}_{t-1}} \frac{\beta_t}{1 - \bar{\alpha}_t}$. Since $q(x_t | x_0) = \mathcal{N}(\sqrt{\bar{\alpha}_t} x_0, (1 - \bar{\alpha}_t)I)$, applying \cref{lemma:cov0_and_hessian} gives
    \begin{align}
        \mathrm{diag}(\mathrm{Cov}_{q(x_0 | x_t)}[x_0]) = \frac{( 1 - \bar{\alpha}_t )}{\bar{\alpha}_t} (I + (1 - \bar{\alpha}_t) h_{\phi^*}(x_t)). \nonumber
    \end{align}
    Substituting it into \cref{eq:covq_tm1_t}, we have 
    \begin{align}
        \mathrm{diag}(\mathrm{Cov}_{q(x_{t-1} | x_t)}[x_{t-1}]) 
        &= \frac{(1-\alpha_t)^2 h_{\phi^*}(x_t) + (1-\alpha_t)I}{{\alpha}_t} = \Sigma_{t-1}(x_t;\phi^*) \nonumber
    \end{align}
    as desired. Alternatively, applying \cref{eq:cov0-snddpm}, we have
    \begin{align}
        \mathrm{diag}(\mathrm{Cov}_{q(x_{0} | x_t)}[x_0]) = \frac{( 1 - \bar{\alpha}_t )^2}{\bar{\alpha}_t} \left( \frac{g_{\psi^*}(x_t)}{1-\bar{\alpha}_t} - \nabla_{x_t} \log \!p_\theta (x_t)^2 \right). \nonumber
    \end{align}
    Substituting it into \cref{eq:covq_tm1_t} gives
    \begin{align}
        \mathrm{diag}(\mathrm{Cov}_{q(x_{t-1} | x_t)}[x_{t-1}]) = \frac{1-\Bar{\alpha}_{t-1}}{1 - \Bar{\alpha}_t} \beta_t + \frac{\beta_t^2}{\alpha_t} \!\left(\! \frac{g_{\psi^*}(x_t)}{1\!-\!\bar{\alpha}_t} \!-\! \nabla_{x_t} \log \!p_\theta (x_t)^2 \!\right) = \Sigma_{t-1}(x_t;\psi^*). \nonumber
    \end{align}
    Therefore, $\mathrm{diag}(\mathrm{Cov}_{q(x_{t-1} | x_t)}[x_{t-1}]) = \Sigma_{t-1}(x_t;\phi^*) = \Sigma_{t-1}(x_t;\psi^*)$ as desired.
\end{proof}

\begin{remark}
    Theorem 3 establishes the connection between OCM-DDPM and SN-DDPM using the second-order Tweedie's formula. This connection allows OCM-DDPM to utilize the same covariance clipping trick as in SN-DDPM (see \cref{sec:appendix-details-training-inference} for details). Additionally, as highlighted in \cite{bao2022estimating}, SN-DDPM suffers from error amplification in the quadratic term, a limitation not present in OCM-DDPM. 
    As shown in \cref{fig:combined,tab:bpd}, OCM-DDPM demonstrates superior covariance estimation accuracy and likelihood performance compared to SN-DDPM, underscoring the advantages of the proposed optimal covariance matching objective.
\end{remark}

\section{Details of Experiments} \label{sec:appendix-details-exp}

\subsection{Details of Model Architectures} \label{sec:appendix-details-architectures}
In this section, we describe our model architectures in detail. Following the approach in \cite{bao2022estimating}, our model comprises a pretrained score neural network with fixed parameters, along with a trainable diagonal Hessian prediction network built on top of it.

\begin{wraptable}{r}{0.55\linewidth}
\centering
\vspace{-4mm}
\caption{Source of pretrained score prediction networks used in our experiments.}
\label{tab:pretrained_models}
\vskip -0.1in
\begin{small}
\begin{sc}
\setlength{\tabcolsep}{.55pt}{
\begin{tabular}{lc}
\toprule 
     & Provided By \\
\midrule 
    CIFAR10 (LS) & \cite{bao2022analytic} \\
    CIFAR10 (CS) & \cite{bao2022analytic} \\
    CelebA 64X64 & \cite{song2020denoising} \\
    ImageNet 64X64 & \cite{nichol2021improved} \\
    LSUN Bedroom & \cite{ho2020denoising} \\
    ImageNet 256x256 & \cite{peebles2023scalable} \\
\bottomrule
\end{tabular}}
\end{sc}
\end{small}
\vspace{-4mm}
\end{wraptable}
\textbf{Details of Pretrained Score Prediction Networks.}
\cref{tab:pretrained_models} lists the pretrained neural networks utilized in our experiments. These models parameterize the noise prediction $\epsilon_\theta(x)$, allowing us to derive the score prediction as $s_\theta (x_t) = \nabla_x \log p_\theta(x_t) = - \frac{\epsilon_\theta(x_t)}{\sqrt{1 - \bar{\alpha}_t}}$, following the forward process defined in \cref{eq:forward}. 
It is important to note that the pretrained networks for ImageNet 64x64 and 256x256 include both noise prediction networks and covariance networks. In our model architectures, we utilize only the noise prediction networks.

\begin{wraptable}{r}{0.55\linewidth}
\centering
\vspace{-5mm}
\caption{Architecture details of our models.}
\label{tab:details-hessian-network}
\vskip -0.1in
\begin{small}
\begin{sc}
\setlength{\tabcolsep}{1.pt}{
\begin{tabular}{lcc}
\toprule 
     & $\mathrm{NN}_1$ & $\mathrm{NN}_2$ \\
\midrule 
    CIFAR10 (LS) & $\mathrm{Conv}$ & $\mathrm{Conv}$ \\
    CIFAR10 (CS) & $\mathrm{Conv}$ & $\mathrm{Conv}$ \\
    CelebA 64X64 & $\mathrm{Conv}$ & $\mathrm{Conv}$ \\
    ImageNet 64X64 & $\mathrm{Conv}$ & $\mathrm{Res}$+$\mathrm{Conv}$ \\
    LSUN Bedroom & $\mathrm{Conv}$ & $\mathrm{Res}$+$\mathrm{Conv}$ \\
    ImageNet 256x256 & $\mathrm{AdaLN}$+$\mathrm{Linear}$ & $\mathrm{AdaLN}$+$\mathrm{Linear}$ \\
\bottomrule
\end{tabular}}
\end{sc}
\end{small}
\vspace{-4mm}
\end{wraptable}
\textbf{Details of Diagonal Hessian Prediction Networks.}
For fair comparisons, we follow the parameterization as per \cite{bao2022estimating} for all models excluding the one on ImageNet 256x256, which was not explored in their paper.
The architecture details of $\mathrm{NN}_1$ and $\mathrm{NN}_2$ are provided in \cref{tab:details-hessian-network}, where $\mathrm{Conv}$ denotes the convolutional layer,  $\mathrm{Res}$ denotes the residual block \citep{he2016deep}, $\mathrm{AdaLN}$ denotes the adaptive layer norm block \citep{peebles2023scalable}, and $\mathrm{Linear}$ denotes the linear layer.

\begin{table}[t]
\centering
\caption{The number of parameters and the averaged time (ms) to run a model function evaluation. All are evaluated with a batch size of 64 on one A100-80GB GPU.}
\label{tab:mem_time}
\vskip -0.1in
\begin{small}
\begin{sc}
\scalebox{.925}{
\setlength{\tabcolsep}{.5pt}{
\begin{tabular}{lccc}
\toprule
    & \makecell{Score prediction \\ network } & \makecell{Score \& SN prediction \\ networks } & \makecell{Score \& Diagonal Hessian \\ Prediction networks } \\
\midrule
    CIFAR10 (LS) & 52.54 M / 44.37 ms & 52.55 M / 44.61 ms (+0.5\%) & 52.55 M / 44.51 ms (+0.3\%) \\
    CIFAR10 (CS) & 52.54 M / 45.13 ms & 52.55 M / 45.24 ms (+0.2\%) & 52.55 M / 45.23 ms (+0.2\%) \\
    CelebA 64x64 & 78.70 M / 67.88 ms & 78.71 M / 69.15 ms (+1.9\%) & 78.71 M / 68.89 ms (+1.5\%) \\
    ImageNet 64x64 & 121.06 M / 106.58 ms & 121.49 M / 112.53 ms (+5.6\%) & 121.49 M / 112.23 ms (+5.3\%) \\
    LSUN Bedroom & 113.67 M / 692.58 ms & 114.04 M / 771.55 ms (+11.4\%) & 114.04 M / 770.73 ms (+11.2\%) \\
    ImageNet 256x256 & 675.13 M / 22.84 ms & Missing model & 677.80 M / 23.01 ms (+0.7\%) \\
\bottomrule
\end{tabular}}}
\end{sc}
\end{small}
\end{table}

\textbf{Cost of Memory and Inference Time.}
In \cref{tab:mem_time}, we present the number of parameters and inference time for various models. It is evident that the additional memory cost of the diagonal Hessian prediction network is minimal compared to the original diffusion models, which only include the score prediction. Regarding the extra inference cost, it is negligible for CIFAR10, CelebA 64x64, and ImageNet 256x256. The additional time is at most $5.3\%$ on ImageNet 64x64, and $11.2\%$ on LSUN Bedroom, but this is offset by the benefit of requiring fewer sampling steps to achieve an FID around 6, as shown in \cref{tab:minimum-timesteps}.
Notably, although the model size on ImageNet 256x256 is the largest, it has the shortest inference time because DiT learns the diffusion model in the latent space.

\textbf{Details of \cref{tab:minimum-timesteps}.}
In \cref{tab:minimum-timesteps}, the FID results of baselines are taken from \cite{bao2022estimating}. 
Notably, the time cost for a single neural function evaluation is identical across DDPM, DDIM, and Analytic-DPM, all of which use a fixed isotropic covariance in the backward Markov process. 
The ratio of the time cost to the baselines is based on \cref{tab:mem_time}.
For the FID results of our model, OCM-DPM, we report performance using the DDPM forward process for LSUN Bedroom and the DDIM forward process for the other datasets.

\subsection{Details of Training, Inference, and Evaluation} \label{sec:appendix-details-training-inference}
Our training and inference recipes largely follow those outlined in \cite{bao2022estimating,peebles2023scalable}. Below, we provide a detailed description.

\textbf{Training Details.}
We use the AdamW optimizer \citep{loshchilov2017decoupled} with a learning rate of 0.0001 and train for 500K iterations across all datasets. The batch sizes are set to 64 for LSUN Bedroom, 128 for CIFAR10, CelebA 64x64, and ImageNet 64x64, and 256 for ImageNet 256x256. During training, checkpoints are saved every 10K iterations, and we select the checkpoint with the best FID on 2048 samples generated with full sampling steps\footnote{We use 2048 samples instead of 1000 to ensure that the covariance of the Inception features has full rank.}.
We train our models using one A100-80G GPU for CIFAR10, CelebA 64x64, and ImageNet 64x64; four A100-80G GPUs for LSUN Bedroom; and eight A100-80G GPUs for ImageNet 256x256.

\textbf{Sampling Details.}
As per \cite{bao2022analytic}, covariance clipping is crucial to the performance in diffusion models with unfixed variance in the backward Markovian. Leveraging the connection between OCM-DPM and SN-DPM established in \cref{theorem:connection-ocm-sn}, we can apply the same clipping strategies outlined in \cite{bao2022estimating,bao2022analytic}. Specifically, we only display the mean of $p(x_0 | x_1)$ at the last sampling and clip the covariance $\Sigma_1 (x_2)$ of $p(x_1 | x_2)$ such that $ \lVert \Sigma_1 (x_2) \rVert_\infty \mathbb{E}|\epsilon| \leq \frac{2}{255} y$, where $\lVert \cdot \rVert_\infty$ denotes the infinity norm and $\epsilon$ is the standard Gaussian noise. Following \cite{bao2022analytic}, we use $y=2$ on CIFAR10 (LS) and CelebA 64x64 under the DDPM forward process, and use $y=1$ for other cases.
For all skip-step sampling methods, we employ the even trajectory \citep{nichol2021improved} for selecting the subset of sampling steps.

\textbf{Evaluation Details.}
The performance is evaluated on the exponential moving average (EMA) model with a rate of $0.9999$.
For computing the negative log-likelihood, we follow \cite{ho2020denoising,bao2022estimating} by discretizing the last sampling step $p(x_0|x_1)$ to obtain the likelihood of discrete image data and report the upper bound of the negative log-likelihood on the entire test set. 
The FID score is computed on 50K generated samples. Following \cite{nichol2021improved,bao2022estimating,peebles2023scalable}, the reference distribution statistics for FID are calculated using the full training set for CIFAR10 and ImageNet, and 50K training samples for CelebA and LSUN Bedroom.
Performance results are reported using the same random seed\footnote{We use the official code from \url{https://github.com/baofff/Extended-Analytic-DPM}.} as in \cite{bao2022estimating}. Additionally, the variance across different random seeds is provided in \cref{sec:appendix-variance}.

\begin{table*}[t]
\centering
\caption{The corresponding mean and standard deviation of NLL and FID reported in \cref{tab:bpd,tab:fid}. Note that the standard deviation is reported under the scale of percentage (\%).}
\label{tab:appendix-mean-var}
\vskip -0.1in
\begin{small}
\begin{sc}
\setlength{\tabcolsep}{3.pt}{
\begin{tabular}{lrrrrrrrrrrrrr}
\toprule
& \multicolumn{6}{c}{CIFAR10 (LS)} & \multicolumn{6}{c}{CIFAR10 (CS)} \\
\cmidrule(lr){2-7} \cmidrule(lr){8-13}
{\# timesteps $K$} & 
10 & 25 & 50 & 100 & 200 & 1000 &
10 & 25 & 50 & 100 & 200 & 1000 \\
\midrule
 Mean-DDPM (NLL) &
5.33 & 4.63 & 4.35 & 3.97 & 3.78 & 3.57 & 
4.99 & 4.34 & 3.99 & 3.76 & 3.59 & 3.41  \\
Std-DDPM (NLL) \% &
0.04 & 0.03 & 0.03 & 0.04 & 0.04 & 0.01 & 
0.02 & 0.01 & 0.03 & 0.03 & 0.02 & 0.01 \\
\arrayrulecolor{gray}\cmidrule(lr){2-13}
Mean-DDPM (FID) &
25.07 & 9.30 & 5.90 & 4.37 & 3.54 & 4.00 & 
14.46 & 5.49 & 4.09 & 3.83 & 3.81 & 4.22 \\
Std-DDPM (FID) \% &
9.05 & 7.54 & 3.48 & 2.77 & 4.25 & 2.30 & 
10.40 & 7.74 & 2.66 & 1.32 & 5.26 & 3.08 \\
\arrayrulecolor{gray}\cmidrule(lr){2-13}
Mean-DDIM (FID) &
10.61 & 4.31 & 3.48 & 3.26 & 3.25 & 3.75 & 
16.85 & 6.70 & 4.73 & 4.33 & 4.56 & 4.59 \\
Std-DDIM (FID) \% &
5.17 & 3.55 & 3.19 & 1.87 & 3.97 & 0.99 & 
13.22 & 4.23 & 2.28 & 2.94 & 1.61 & 5.30 \\
\arrayrulecolor{black}\midrule
\end{tabular}}
\end{sc}
\end{small}
\vskip -0.01in
\begin{small}
\begin{sc}
\setlength{\tabcolsep}{2.pt}{
\begin{tabular}{lrrrrrrrrrrrrr}
& \multicolumn{6}{c}{CelebA 64x64} & \multicolumn{6}{c}{ImageNet 64x64} \\
\cmidrule(lr){2-7} \cmidrule(lr){8-13}
{\# timesteps $K$} & 
10 & 25 & 50 & 100 & 200 & 1000 &
25 & 50 & 100 & 200 & 400 & 4000 \\
\midrule
Mean-DDPM (NLL) &
4.69 & 3.86 & 3.43 & 3.13 & 2.90 & 2.66 & 
4.45 & 4.15 & 3.93 & 3.79 & 3.70 & 3.59 \\
Std-DDPM (NLL) \% &
0.01 & 0.01 & 0.01 & 0.00 & 0.01 & 0.00 & 
0.01 & 0.00 & 0.01 & 0.00 & 0.01 & 0.01 \\
\arrayrulecolor{gray}\cmidrule(lr){2-13}
Mean-DDPM (FID) &
21.58 & 12.65 & 9.24 & 6.96 & 6.00 & 5.02 & 
28.01 & 20.85 & 18.01 & 16.76 & 16.35 & 16.33 \\
Std-DDPM (FID) \% &
2.24 & 7.65 & 2.29 & 2.32 & 5.50 & 1.37 & 
0.77 & 4.66 & 6.82 & 3.28 & 3.04 & 3.21 \\
\arrayrulecolor{gray}\cmidrule(lr){2-13}
Mean-DDIM (FID) &
10.36 & 5.69 & 4.37 & 3.50 & 3.11 & 3.03 & 
28.34 & 19.68 & 17.84 & 17.41 & 17.36 & 18.91 \\
Std-DDIM (FID) \% &
7.40 & 4.04 & 5.12 & 4.88 & 6.46 & 1.55 & 
6.59 & 3.77 & 9.15 & 4.08 & 7.88 & 5.26 \\
\arrayrulecolor{black}\bottomrule
\end{tabular}}
\end{sc}
\end{small}
\vspace{-.2cm}
\end{table*}

\begin{figure*}[!t]
    \centering
    \begin{subfigure}{0.24\textwidth}
        \includegraphics[width=\linewidth]{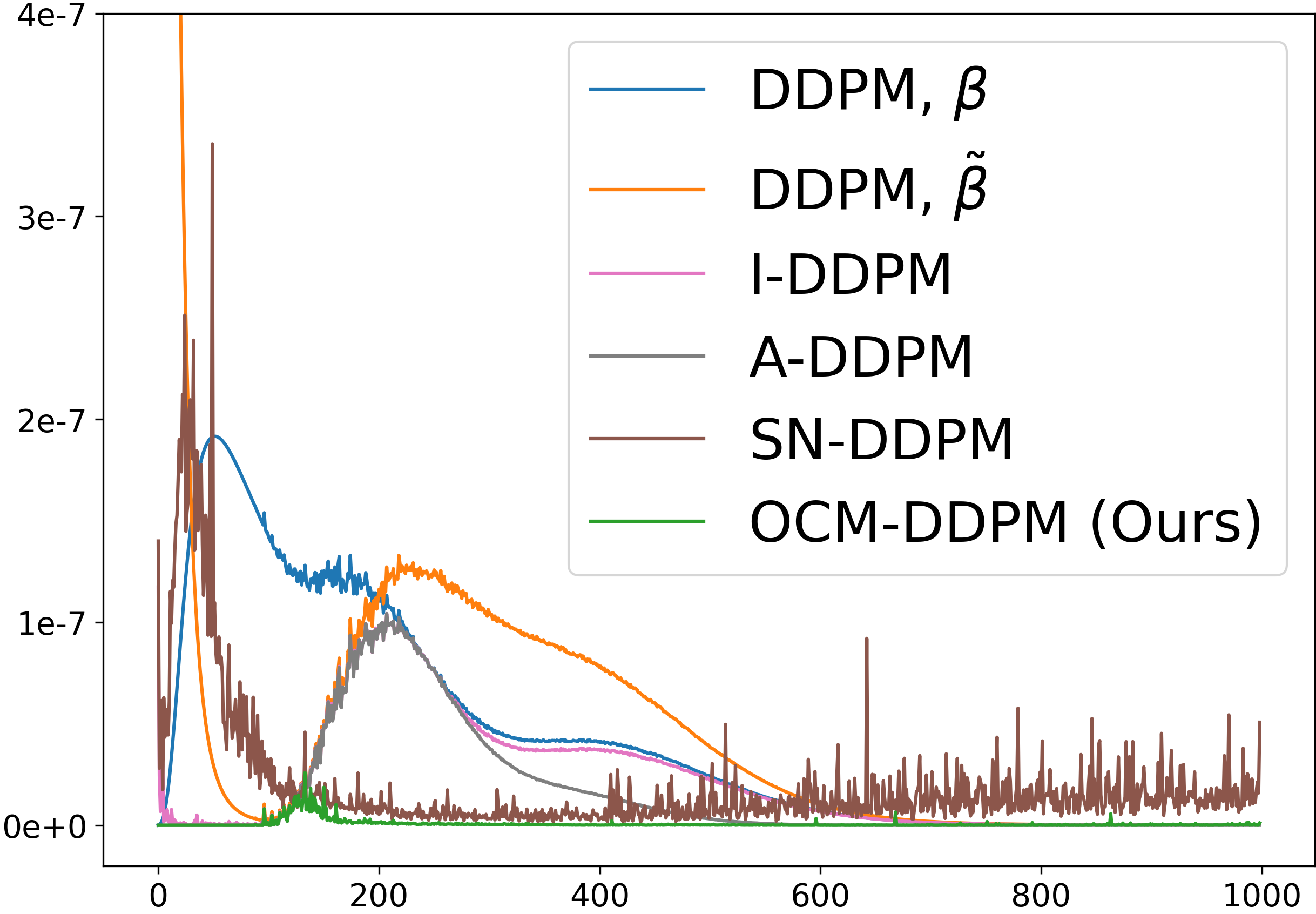}
        \caption{\scriptsize Cov Error with True Score}
        \label{fig:cov:true:score}
    \end{subfigure}
    \hfill 
    \begin{subfigure}{0.24\textwidth}
        \includegraphics[width=\linewidth]{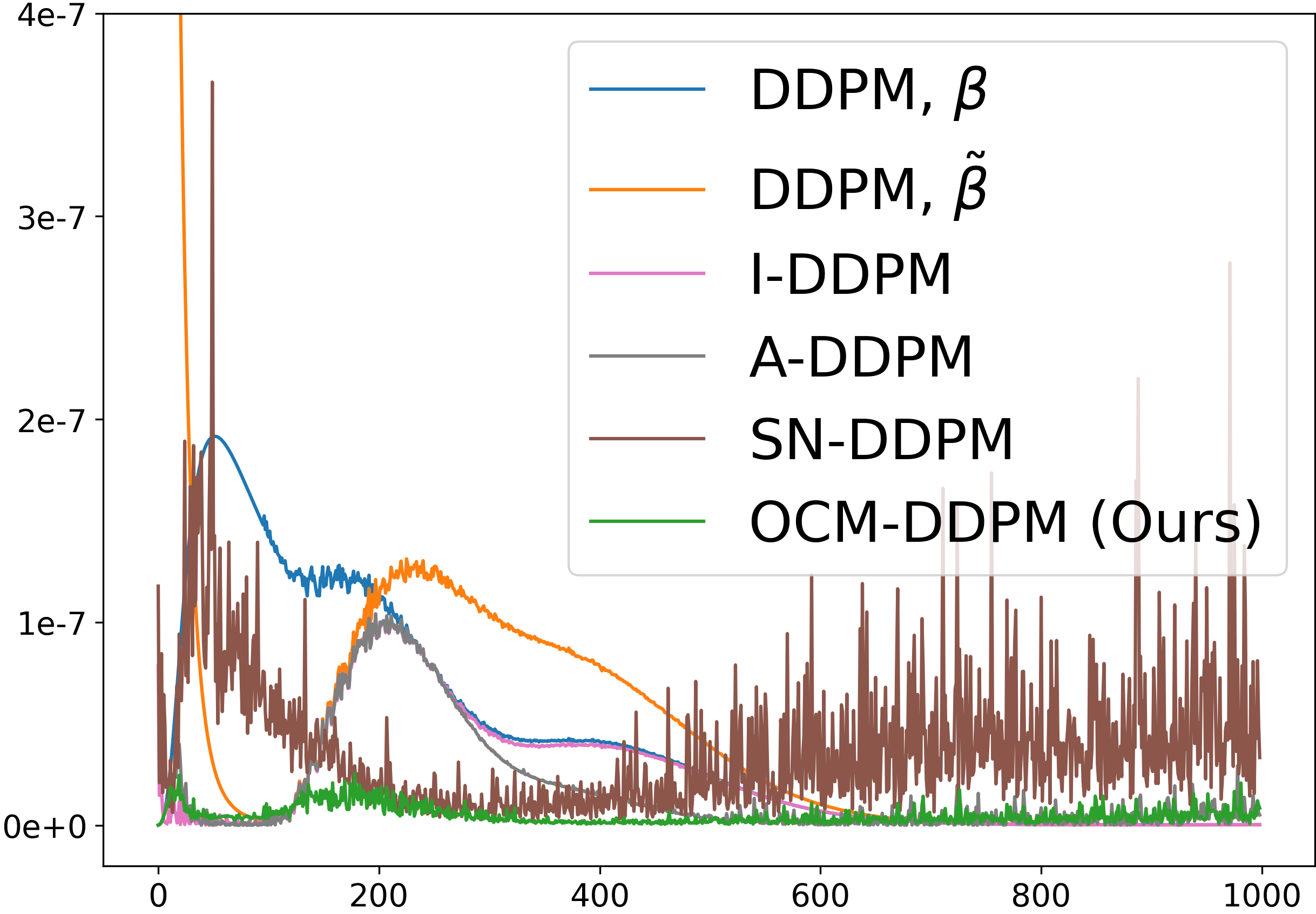}
        \caption{\scriptsize Cov Error with Learned Score}
        \label{fig:cov:learned:score}
    \end{subfigure}
    \hfill 
    \begin{subfigure}{0.2218\textwidth}
        \includegraphics[width=\linewidth]{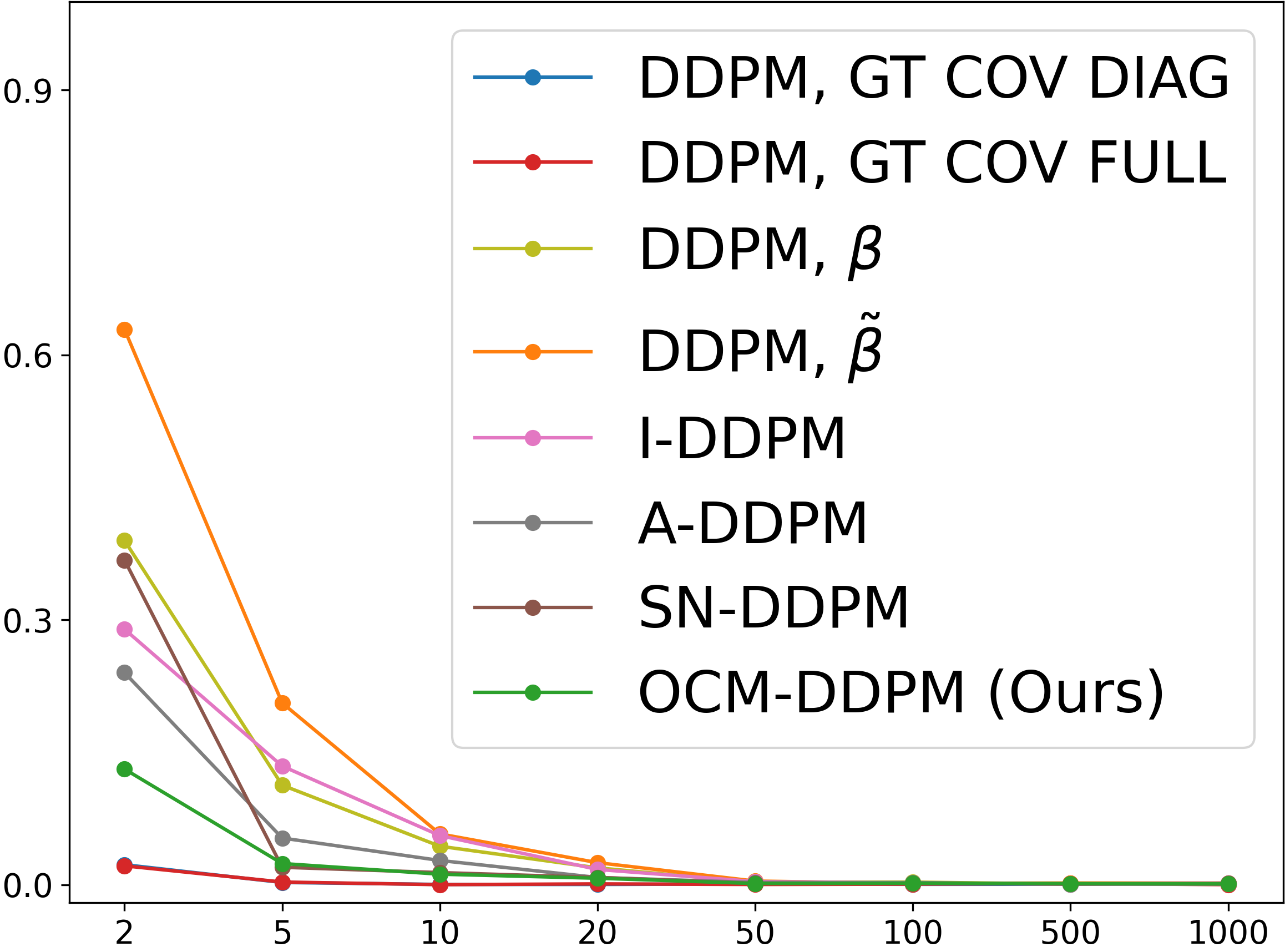}
        \caption{\scriptsize DDPM MMD v.s. Steps}
        \label{fig:toy:ddpm}
    \end{subfigure}
    \hfill 
    \begin{subfigure}{0.2218\textwidth}
        \includegraphics[width=\linewidth]{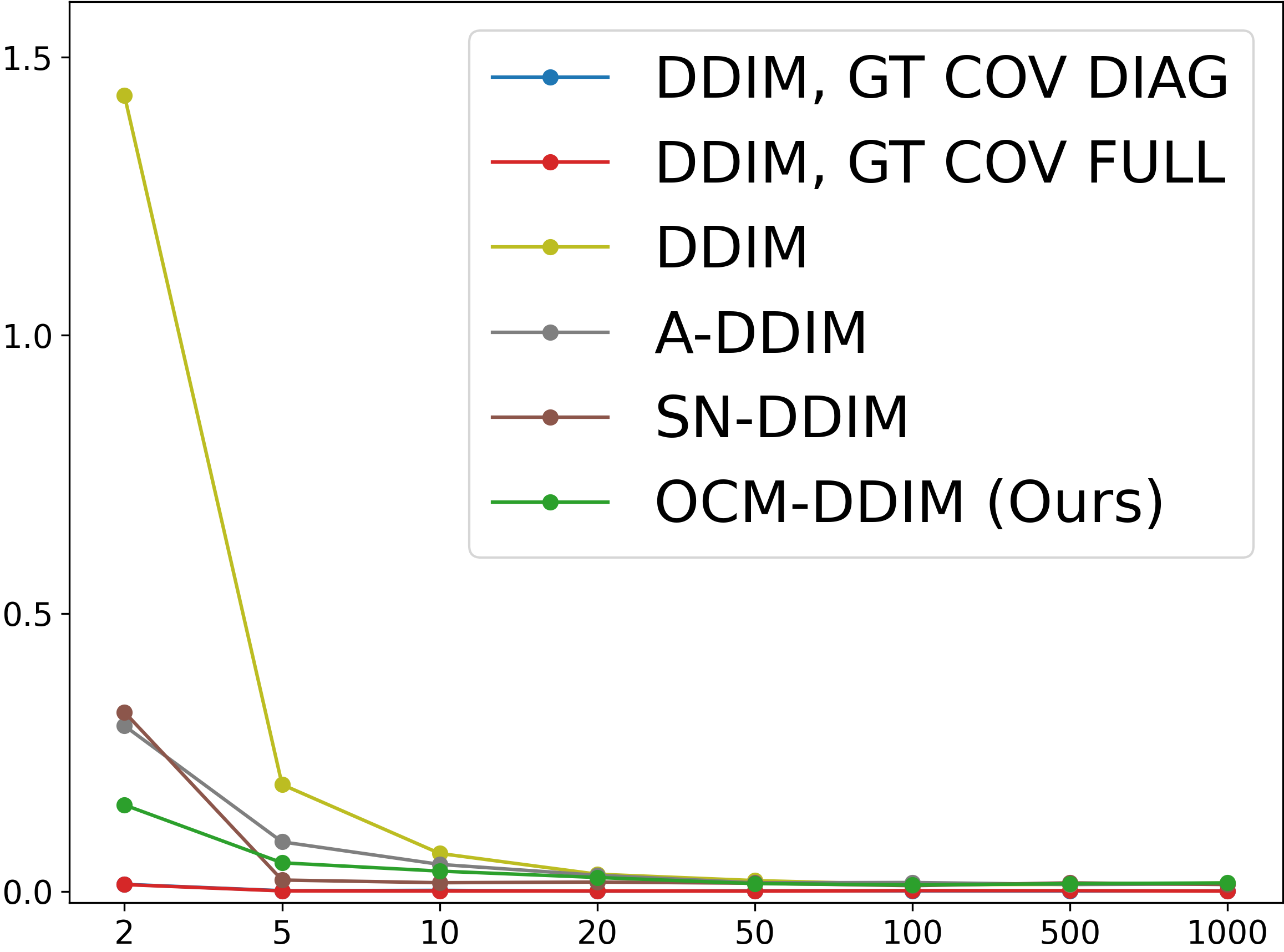}
        \caption{\scriptsize DDIM MMD v.s. Steps}
        \label{fig:toy:ddim}
    \end{subfigure}
    \caption{Comparisons of different covariance estimation methods based on estimation error and sample generation quality. Figures (a) and (b) show the mean square error of the estimated diagonal covariance under the assumptions: (a) access to the true score, and (b) learned score of the data distribution at various noise levels. Figures (c) and (d) present the MMD evaluation against the total sampling steps in the DDPM (c) and DDIM (d) settings. We can find the proposed  OCM method can achieve the lowest estimation error and consistently outperform other baseline methods when fewer generation steps are applied.}
    \label{fig:combined}
    \vspace{-3mm}
\end{figure*}

\section{Additional Experimental Results} \label{sec:appendix-additional-results}
In this section, we present additional experimental results for comparison. First, we present an additional toy experiment in \cref{sec:appendix-additional-toy} and likelihood comparison between I-DDPM, NPR-DDPM, and OCM-DDPM in \cref{sec:appendix-compared-to-iddpm} and present results using different numbers of Rademacher samples in \cref{sec:appendix-radmemacher}. 
We also analyze the performance variance in \cref{sec:appendix-variance}.
Additionally, we include further experimental results on latent diffusion models in \cref{sec:appendix-latent} and conduct qualitative studies by showcasing samples generated by our models in \cref{sec:appendix-gen-samples}.

\subsection{Additional Toy Demonstration} \label{sec:appendix-additional-toy}
To further verify the effectiveness of our method, we include an additional toy example. In this case, we consider another two-dimensional mixture of nine Gaussians (MoG) with means located at $\{-3,0,3\} \otimes \{-3,0,3\}$ and a standard deviation of $\sigma=0.1$.
To assess different approaches, we first learn the covariance using the true scores. Specifically, we train the covariance networks for these methods over 50,000 iterations with a learning rate of $0.001$ using an Adam optimizer \citep{kingma2014adam}. \cref{fig:cov:true:score} shows the $L_2$ error of the estimated diagonal covariance $\Sigma_{t{-}1}(x_t)$, and our method, OCM-DDPM, consistently achieves the lowest error compared to the other methods.
In practice, the true score is not accessible. Therefore, we also conduct a comparison using a score function learned with DSM. We then apply different covariance estimation methods under the same settings. In Figure~\ref{fig:cov:learned:score}, we plot the same $L_2$ error for the learned score setting and find that our method achieves the lowest error for all $t$ values.

We further use the learned score and the covariance estimated by different methods to conduct DDPM and DDIM sampling for this MoG problem. \Cref{fig:toy:ddpm,fig:toy:ddim} presents the MMD comparison across various methods. It shows that our method outperforms the baselines when the total time step is small, demonstrating the importance of accurate covariance estimation in the context of diffusion acceleration.
We also include two methods utilizing the true diagonal and full covariance as benchmarks, representing the best achievable performance.
Notably, these two methods exhibit similar performance because the MoG used in this setting has symmetric components, in which the covariance is dominated by the diagonal entries.

\subsection{Additional Likelihood Comparison} \label{sec:appendix-compared-to-iddpm}

\begin{wraptable}{r}{0.55\linewidth}
\centering
\caption{Upper bound on the negative log-likelihood (bits/dim) on the ImageNet 64x64 dataset.}
\label{tab:appendix-nll-comparison}
\vskip -0.1in
\begin{small}
\begin{sc}
\setlength{\tabcolsep}{3.pt}{
\begin{tabular}{lrrrrrr}
\toprule
    \# timesteps $K$ & 25 & 50 & 100 & 200 & 400 & 4000 \\
\midrule
    I-DDPM &  18.91 & 8.46 & 5.27 & 4.24 & 3.86 & \textbf{3.57} \\
    NPR-DDPM & 4.66 & 4.22 & \underline{3.96} & \underline{3.80} & \underline{3.71} & 3.60 \\
    SN-DDPM & \underline{4.56} & 4.18 & 3.95 & \underline{3.80} & \underline{3.71} & 3.63 \\
    \rowcolor{gray!20}
    \cellcolor{white}OCM-DDPM & \textbf{4.45} & \textbf{4.15} & \textbf{3.93} & \textbf{3.79} & \textbf{3.70} & \underline{3.59} \\
\bottomrule
\end{tabular}}
\end{sc}
\end{small}
\end{wraptable}
Here, we provide additional likelihood comparisons against I-DDPM \citep{nichol2021improved} on the ImageNet 64x64 dataset.
As discussed in \cref{sec:related}, I-DDPM  parameterizes the diagonal covariance by interpolating between $\beta$ and $\title{\beta}$ and learns the covariance by maximizing the variational lower bound.
For ease of comparison, we also include the performance of NPR-DDPM and SN-DDOM, which employ an alternative MSE loss as detailed in \cref{eq:sn-ddpm-obj} to learn the covariance.
\cref{tab:appendix-nll-comparison} presents the results, showing that OCM-DDPM achieves the best likelihood with fewer sampling steps while maintaining performance comparable to I-DDPM when using full sampling steps.
This highlights the superiority of the proposed optimal covariance matching objective in learning a diffusion model for data density estimation.

\subsection{Impact of the Numbers of Rademacher Samples} \label{sec:appendix-radmemacher}

\begin{wraptable}{r}{0.55\linewidth}
\centering
\caption{Results of OCM-DDPM on CIFAR10 (CS) with varying numbers of Rademacher Samples.}
\label{tab:diff-M}
\vskip -0.1in
\begin{small}
\begin{sc}
\setlength{\tabcolsep}{2.pt}{
\begin{tabular}{lrrrrrrrrr}
\toprule
& \multicolumn{4}{c}{FID $\downarrow$} & \multicolumn{4}{c}{NLL (\%) $\uparrow$} \\
\cmidrule(lr){2-5} \cmidrule(lr){6-9}
{$K=$} & 
10 & 25 & 50 & 100 &
10 & 25 & 50 & 100 \\
\midrule
$M=1$ & 14.32 & 5.54 & 4.10	& 3.84 & 4.99 & 4.34 & 3.99 & 3.76 \\
$M=3$ & 14.18 & 5.51 & 4.11	& 3.82 & 4.99 & 4.34 & 3.99 & 3.76 \\
$M=5$ & 14.17 & 5.51 & 4.11	& 3.82 & 4.99 & 4.34 & 3.99 & 3.76 \\
$M=10$ & 14.16 & 5.51 & 4.11	& 3.82 & 4.99 & 4.34 & 3.99 & 3.76 \\
\arrayrulecolor{black}\midrule
\end{tabular}}
\end{sc}
\end{small}
\end{wraptable}
In this section, we include an ablation study examining the effect of varying the number of Rademacher samples $M$. Specifically, we conduct experiments on CIFAR10 (CS) using the proposed OCM-DDPM method and evaluate performance based on FID and NLL.
The empirical results, presented in \cref{tab:diff-M}, indicate that a larger $M$ (e.g., $M=3$) can give a small improvement in FID. This improvement is likely due to the reduced gradient estimation variance during training with a larger $M$. However, in most cases, different values of $M$ yield consistent performance. This is practically desirable as, in practice, setting  $M=1$ allows for efficient training while maintaining strong performance

\subsection{Mean and Variance of Performance} \label{sec:appendix-variance}
In \cref{tab:bpd,tab:fid}, we evaluate our models using the same random seed as \cite{bao2022estimating}. To minimize the impact of randomness, we report the mean and standard deviation in \cref{tab:appendix-mean-var} by repeating the evaluation three times with different seeds.

\subsection{More Results on Latent Diffusion Models} \label{sec:appendix-latent}
We report the performance using 10 sampling steps with varying CFG coefficients in \cref{tab:dit-fid-recall}. The results indicate that our methods perform best at $\text{CFG}=2.0$. While DDPM-$\tilde{\beta}$ and DDIM show strong FID scores, their Recall is lower due to fixed variance, suggesting less diversity in the generated samples.
In \cref{tab:dit-fid-different-steps}, we further compare the performance across different sampling steps at $\text{CFG}=1.5$. The results again demonstrate that our methods, which estimate the optimal covariance from the data, produce more diverse samples while maintaining comparable image quality.


\subsection{Generated Samples} \label{sec:appendix-gen-samples}
In this section, we conduct qualitative studies by showcasing the generated samples from our models using different sampling steps $K$. The results are summarized as follows:
\begin{itemize}
    \item In \cref{fig:cifar10_rademacher_est}, we visualize the generated samples using varying numbers of Monte Carlo samples with the Rademacher estimator (refer to \cref{tab:ocm_rademacher_estimate_results}).
    \item In \cref{fig:diff_K_6fid}, we visualize the training data and the generated samples using the minimum number of sampling steps required to achieve an FID of approximately 6 (refer to \cref{tab:minimum-timesteps}).
    \item In \cref{fig:K_cifar10_ls,fig:K_cifar10_cs,fig:K_celeba64,fig:K_imagenet64}, we visualize the generated samples of our models using different number of sampling steps on CIFAR10 (LS), CIFAR10 (CS), CelebA 64x64, and ImageNet 64x64, respectively (refer to \cref{tab:fid}).
    \item In \cref{fig:K_imagenet256_diffcfg}, we visualize the generated samples on ImageNet 256x256, using 10 sampling steps with different CFG coefficients. (refer to \cref{fig:fid_recall_imagenet256,tab:dit-fid-recall}).
    \item In \cref{fig:K_imagenet256_diffsteps}, we visualize the generated samples on ImageNet 256x256, using varying number of sampling steps with CFG set to $1.5$ (refer to \cref{tab:dit-fid-different-steps}).
\end{itemize}

\begin{table*}[t]
\centering
\caption{Results with varying CFG coefficients using 10 sampling steps on ImageNet 256x256.}
\label{tab:dit-fid-recall}
\vskip -0.1in
\begin{small}
\begin{sc}
\setlength{\tabcolsep}{4.6pt}{
\begin{tabular}{lrrrrrrrrrrr}
\toprule
& \multicolumn{5}{c}{FID $\downarrow$} & \multicolumn{5}{c}{Recall (\%) $\uparrow$} \\
\cmidrule(lr){2-6} \cmidrule(lr){7-11}
{cfg $=$} & 
1.5 & 1.75 & 2.0 & 3.0 & 4.0 &
1.5 & 1.75 & 2.0 & 3.0 & 4.0 \\
\midrule
DDPM, $\tilde{\beta}$ & 
{30.41} & 19.26 & 13.53 & 11.52 & 14.76 & 
39.74 & 35.29 & 32.08 & 24.93 & 19.91  \\
DDPM, $\beta$ & 
210.28 & 182.78 & 158.16 & 89.86 & 58.16 & 
16.98 & 22.28 & 25.13 & 23.07 & 19.48  \\
I-DDPM & 
44.96 & 20.71 & 13.01 & 9.76 & 13.75 & 
50.32 & 43.75 & 41.08 & 31.77 & 25.49  \\
\rowcolor{gray!20}
\cellcolor{white}OCM-DDPM & 
30.55 & 17.57 & 11.12 & 9.52 & 13.74 & 
49.39 & 45.87 & 42.33 & 32.13 & 24.96  \\
\arrayrulecolor{black!30}\midrule
DDIM & 
9.41 & 6.54 & 6.12 & 10.49 & 14.18 &
49.15 & 45.82 & 41.15 & 29.27 & 23.16 \\
\rowcolor{gray!20}
\cellcolor{white}OCM-DDIM & 
11.30 & 6.67 & 5.33 & 9.20 & 13.49 &
54.50 & 50.32 & 46.58 & 34.53 & 25.67 \\
\arrayrulecolor{black}\midrule
\end{tabular}}
\end{sc}
\end{small}
\vspace{-.2cm}
\end{table*}

\begin{table*}[t]
\centering
\caption{Results with CFG=$1.5$ across different sampling steps on ImageNet 256x256.}
\label{tab:dit-fid-different-steps}
\vskip -0.1in
\begin{small}
\begin{sc}
\setlength{\tabcolsep}{2.8pt}{
\begin{tabular}{lrrrrrrrrrrrrr}
\toprule
& \multicolumn{6}{c}{FID $\downarrow$} & \multicolumn{6}{c}{Recall (\%) $\uparrow$} \\
\cmidrule(lr){2-7} \cmidrule(lr){8-13}
{\# timesteps $K$} & 
10 & 25 & 50 & 100 & 200 & 250 &
10 & 25 & 50 & 100 & 200 & 250 \\
\midrule
DDPM, $\tilde{\beta}$ & 
{30.41} & {4.79} & {3.55} & 3.09 & 3.05 & 2.97 & 
39.74 & 49.02 & 51.79 & 53.54 & 54.21 & 54.34 &   \\
DDPM, $\beta$ & 
210.28 & 42.09 & 9.43 & 3.63 & 2.91 & 2.75 & 
16.98 & 46.76 & 51.67 & 55.01 & 55.50 & 55.21 &  \\
I-DDPM & 
44.96 & 9.01 & 3.70 & {2.48} & {2.25} & {2.74} & 
50.32 & 54.78 & 56.45 & 57.88 & 58.76 & 54.69 &  \\
\rowcolor{gray!20}
\cellcolor{white}OCM-DDPM & 
{30.55} & {4.96} & {3.21} & {2.71} & 2.50 & 2.75 & 
49.39 & 53.75 & 54.68 & 55.85 & 54.97 & 54.75 &  \\
\arrayrulecolor{black!30}\midrule
DDIM & 
9.41 & 2.70 & 2.33 & 2.25 & {2.23} & 2.20  &
49.15 & 56.41 & 57.00 & 57.83 & 57.81 & 57.99 \\
\rowcolor{gray!20}
\cellcolor{white}OCM-DDIM & 
11.30 & 3.26 & 2.56 & 2.28 & 2.23 & {2.18}  &
54.50 & 58.33 & 58.09 & 58.80 & 58.16 & 58.56 \\
\arrayrulecolor{black}\midrule
\end{tabular}}
\end{sc}
\end{small}
\end{table*}

\begin{figure}[!t]
\centering
    \begin{minipage}{0.19\textwidth}
    \centering
    \scriptsize
    Data \\
    \includegraphics[width=1.\textwidth]{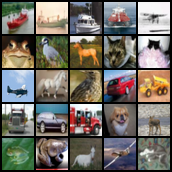}
    \end{minipage}
    \begin{minipage}{0.19\textwidth}
    \centering
    \scriptsize
    ${M}=1$ \\
    \includegraphics[width=1.\textwidth]{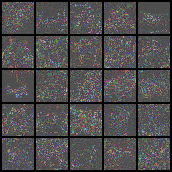}
    \end{minipage}
    \begin{minipage}{0.19\textwidth}
    \centering
    \scriptsize
    ${M}=10$ \\
    \includegraphics[width=1.\textwidth]{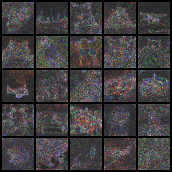}
    \end{minipage}
    \begin{minipage}{0.19\textwidth}
    \centering
    \scriptsize
    ${M}=100$ \\
    \includegraphics[width=1.\textwidth]{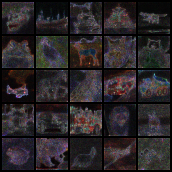}
    \end{minipage}
    \begin{minipage}{0.19\textwidth}
    \centering
    \scriptsize
    ${M}=1000$ \\
    \includegraphics[width=1.\textwidth]{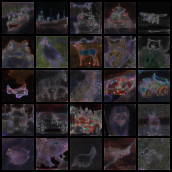}
    \end{minipage}
\vspace{-2mm}
\caption{Diagonal covariance estimation visualisation with different Rademacher sample numbers.}
\label{fig:est_hessian_matrix}
\end{figure}

\begin{figure}[!t]
\begin{center}
\subfloat[\scriptsize OCM-DDPM ($M\!=\!5$)]{\includegraphics[width=0.24\linewidth]{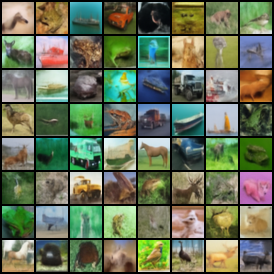}}\ 
\subfloat[\scriptsize OCM-DDPM ($M\!=\!10$)]{\includegraphics[width=0.24\linewidth]{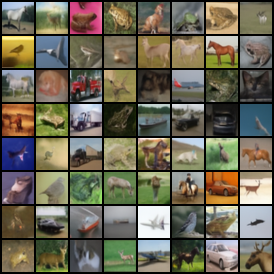}}\ 
\subfloat[\scriptsize OCM-DDPM ($M\!=\!15$)]{\includegraphics[width=0.24\linewidth]{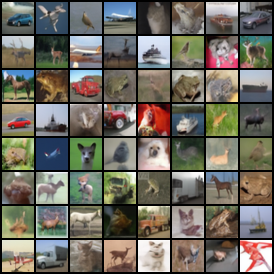}}\ 
\subfloat[\scriptsize OCM-DDPM ($M\!=\!20$)]{\includegraphics[width=0.24\linewidth]{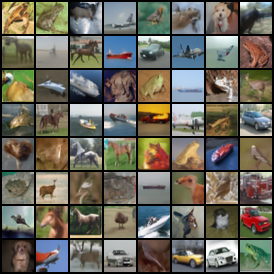}}
\vspace{-.1cm}
\caption{Generated samples with 10 sampling steps using ifferent Rademacher sample numbers on CIFAR10 (CS) (ref: \cref{tab:ocm_rademacher_estimate_results}).}
\label{fig:cifar10_rademacher_est}
\end{center}
\end{figure}

\begin{figure}[!t]
\begin{center}
\subfloat[\scriptsize CIFAR10 Data]{\includegraphics[width=0.24\linewidth]{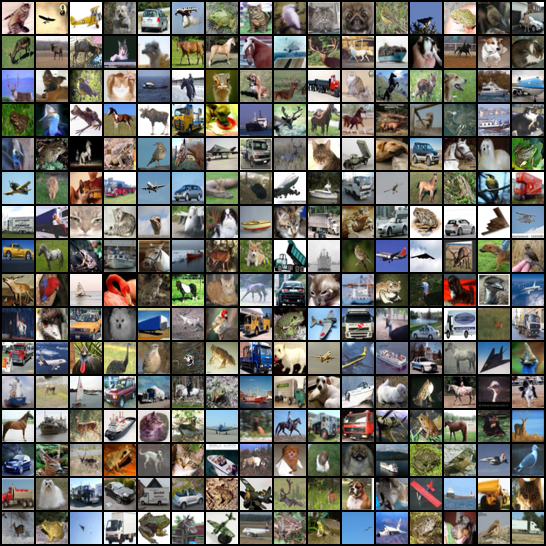}}\ 
\subfloat[\scriptsize CelebA 64x64 Data]{\includegraphics[width=0.24\linewidth]{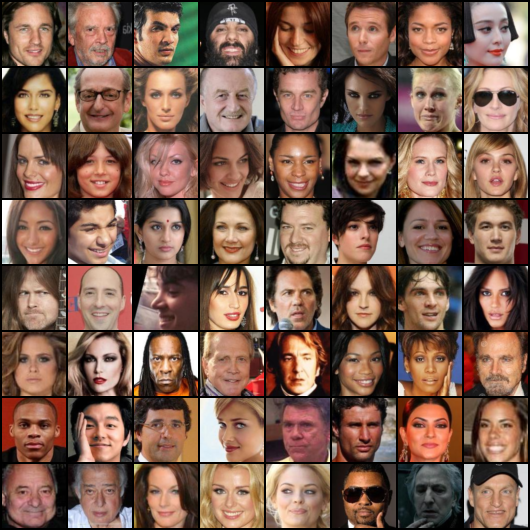}}\ 
\subfloat[\scriptsize LSUN Bedroom Data]{\includegraphics[width=0.24\linewidth]{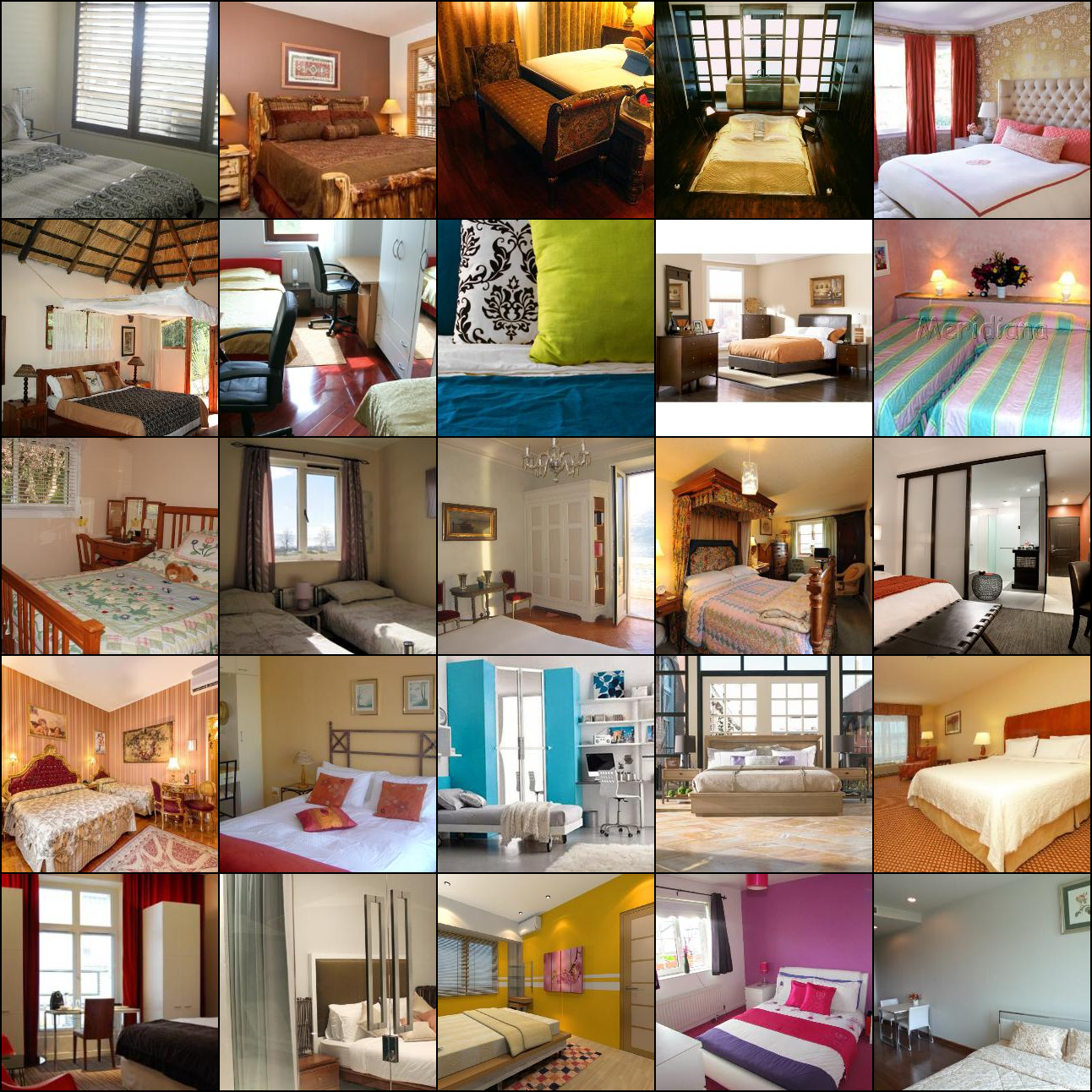}}\ 
\subfloat[\scriptsize ImageNet 256x256 Data]{\includegraphics[width=0.24\linewidth]{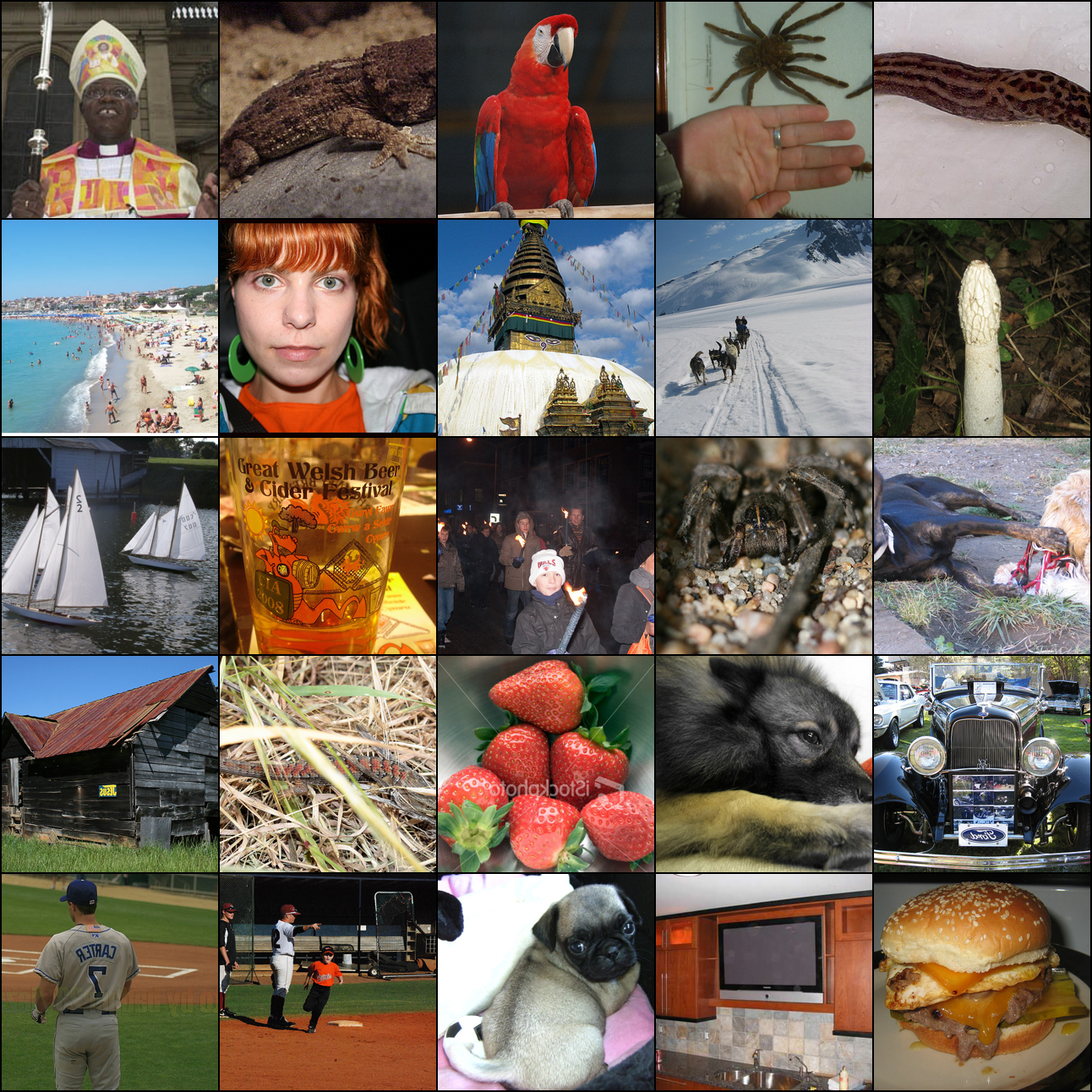}} \\
\vspace{.2cm}
\subfloat[\scriptsize CIFAR10 ($K\!=\!16$)]{\includegraphics[width=0.24\linewidth]{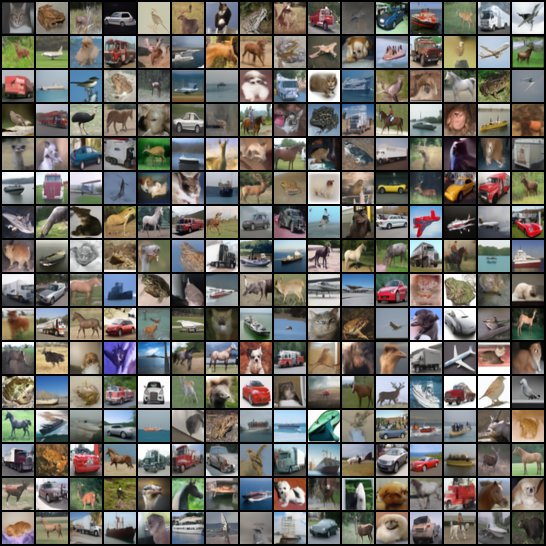}}\ 
\subfloat[\scriptsize CelebA 64x64 ($K\!=\!17$)]{\includegraphics[width=0.24\linewidth]{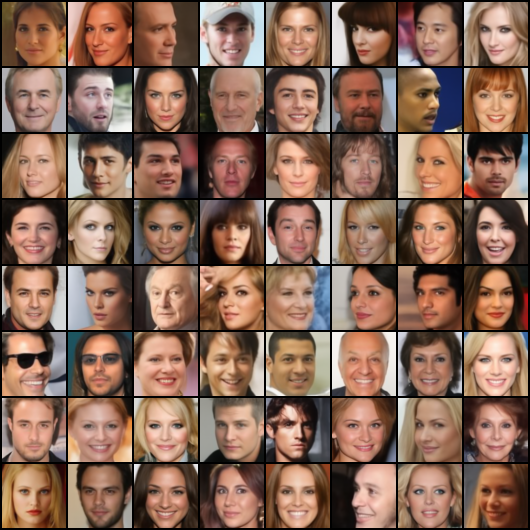}}\ 
\subfloat[\scriptsize LSUN Bedroom ($K\!=\!90$)]{\includegraphics[width=0.24\linewidth]{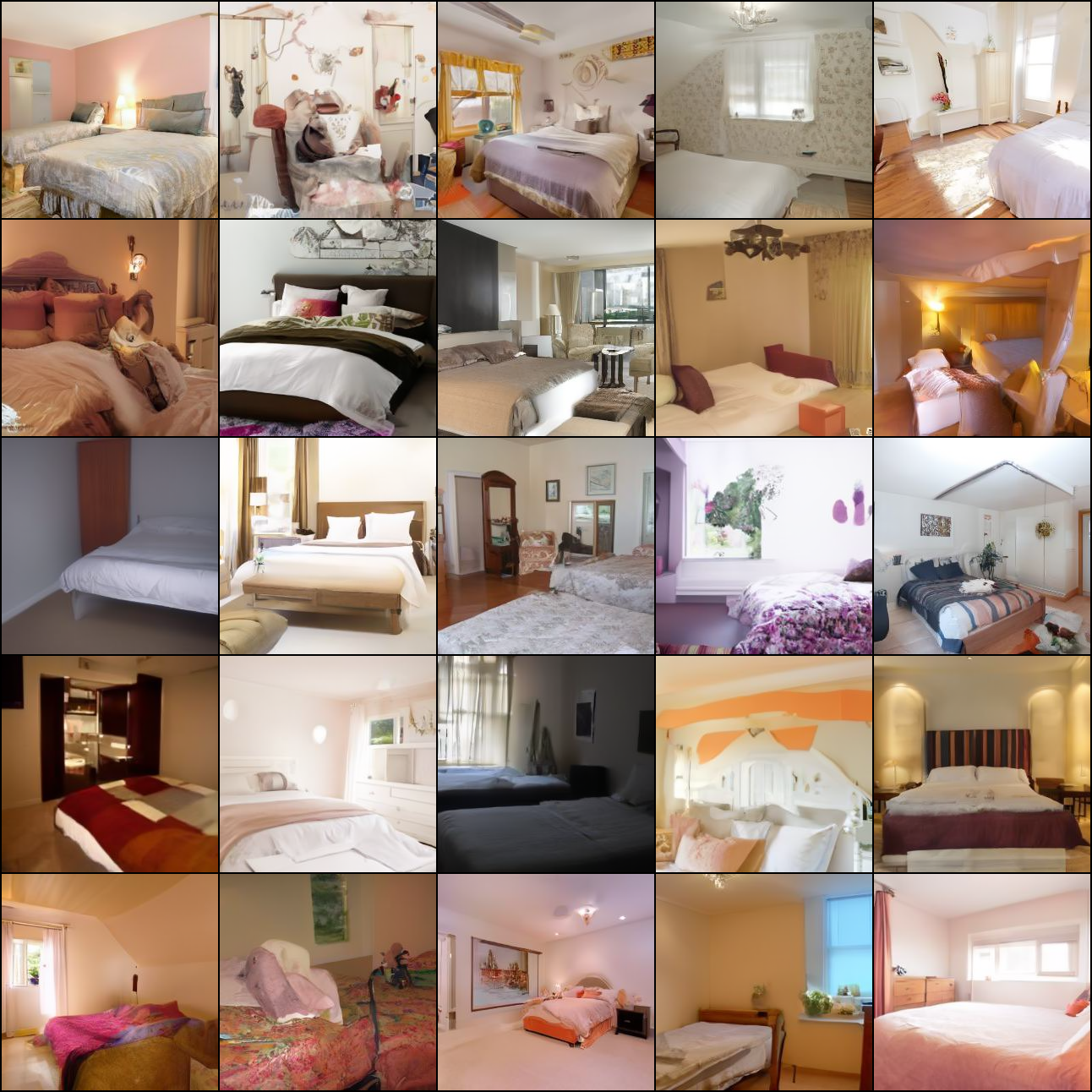}}\ 
\subfloat[\scriptsize ImageNet 256x256 ($K\!=\!10$)]{\includegraphics[width=0.24\linewidth]{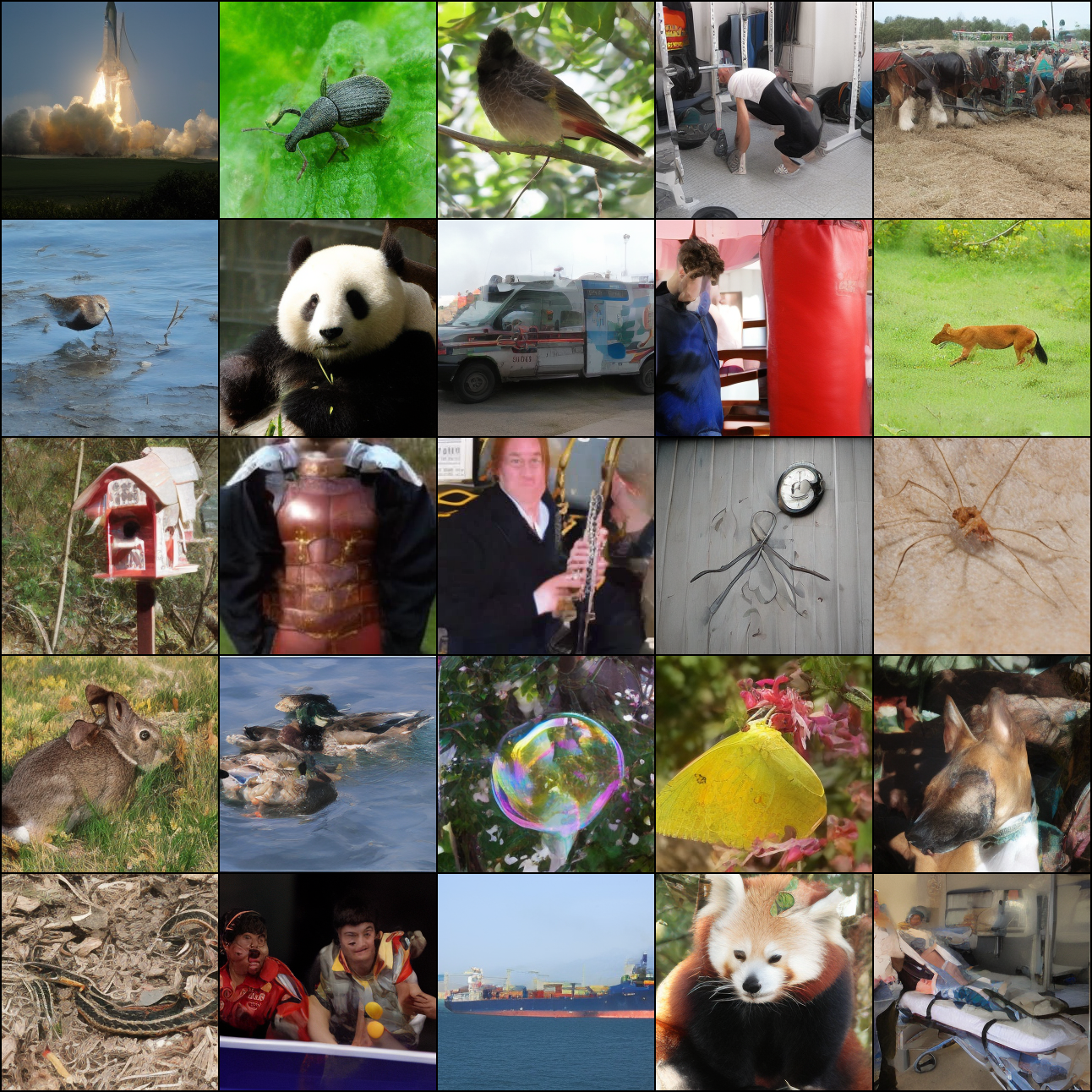}}
\vspace{-.1cm}
\caption{The training data and generated samples of OCM-DPM with minimum steps to achieve an FID around 6. (ref: \cref{tab:minimum-timesteps})}
\label{fig:diff_K_6fid}
\end{center}
\end{figure}

\begin{figure}[!t]
\centering
\begin{minipage}{\textwidth}
\begin{center}
\subfloat[\scriptsize OCM-DDPM ($K=10$)]{\includegraphics[width=0.24\linewidth]{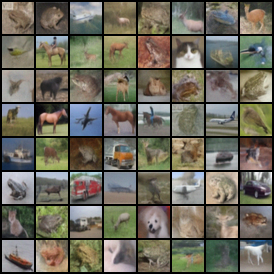}}\ 
\subfloat[\scriptsize OCM-DDPM ($K=25$)]{\includegraphics[width=0.24\linewidth]{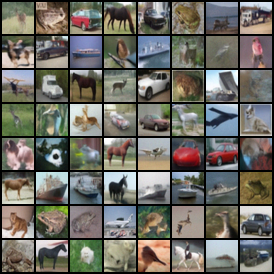}}\ 
\subfloat[\scriptsize OCM-DDPM ($K=50$)]{\includegraphics[width=0.24\linewidth]{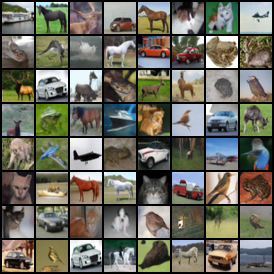}}\ 
\subfloat[\scriptsize OCM-DDPM ($K=100$)]{\includegraphics[width=0.24\linewidth]{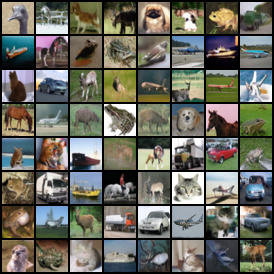}} \\
\vspace{.2cm}
\subfloat[\scriptsize OCM-DDIM ($K=10$)]{\includegraphics[width=0.24\linewidth]{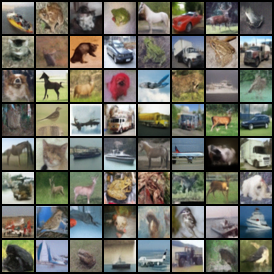}}\ 
\subfloat[\scriptsize OCM-DDIM ($K=25$)]{\includegraphics[width=0.24\linewidth]{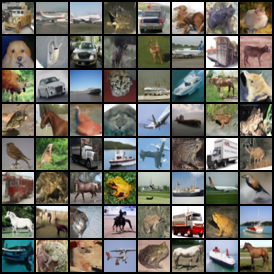}}\ 
\subfloat[\scriptsize OCM-DDIM ($K=50$)]{\includegraphics[width=0.24\linewidth]{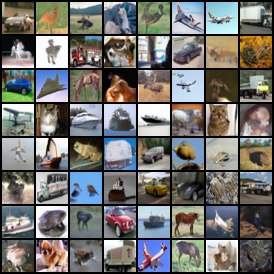}}\ 
\subfloat[\scriptsize OCM-DDIM ($K=100$)]{\includegraphics[width=0.24\linewidth]{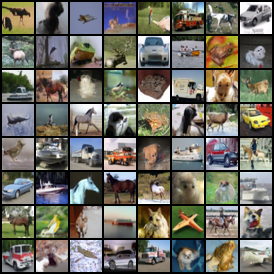}}
\vspace{-.1cm}
\caption{Generated samples with different sampling steps on CIFAR10 (LS) (ref: \cref{tab:fid}).}
\label{fig:K_cifar10_ls}
\end{center}
\end{minipage}
\end{figure}

\begin{figure}[!t]
\centering
\begin{minipage}{\textwidth}
\begin{center}
\subfloat[\scriptsize OCM-DDPM ($K=10$)]{\includegraphics[width=0.24\linewidth]{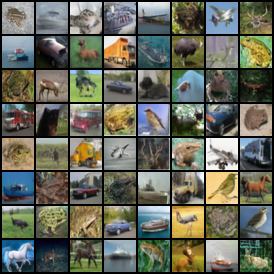}}\ 
\subfloat[\scriptsize OCM-DDPM ($K=25$)]{\includegraphics[width=0.24\linewidth]{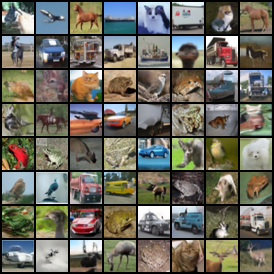}}\ 
\subfloat[\scriptsize OCM-DDPM ($K=50$)]{\includegraphics[width=0.24\linewidth]{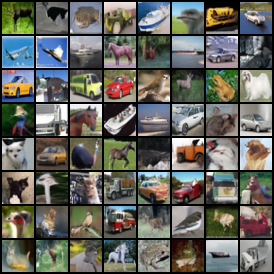}}\ 
\subfloat[\scriptsize OCM-DDPM ($K=100$)]{\includegraphics[width=0.24\linewidth]{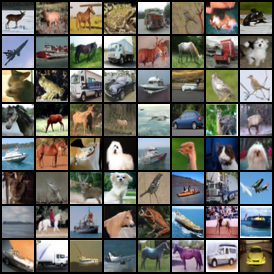}} \\
\vspace{.2cm}
\subfloat[\scriptsize OCM-DDIM ($K=10$)]{\includegraphics[width=0.24\linewidth]{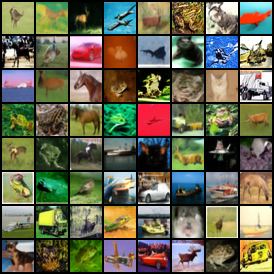}}\ 
\subfloat[\scriptsize OCM-DDIM ($K=25$)]{\includegraphics[width=0.24\linewidth]{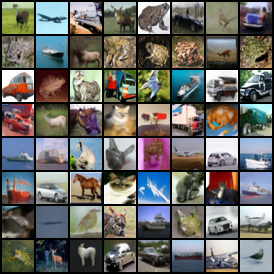}}\ 
\subfloat[\scriptsize OCM-DDIM ($K=50$)]{\includegraphics[width=0.24\linewidth]{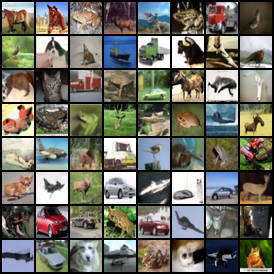}}\ 
\subfloat[\scriptsize OCM-DDIM ($K=100$)]{\includegraphics[width=0.24\linewidth]{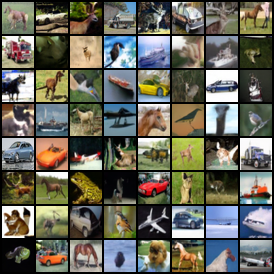}}
\vspace{-.1cm}
\caption{Generated samples with different sampling steps on CIFAR10 (CS) (ref \cref{tab:fid}).}
\label{fig:K_cifar10_cs}
\end{center}
\end{minipage}
\end{figure}

\begin{figure}[!t]
\centering
\begin{minipage}{\textwidth}
\begin{center}
\subfloat[\scriptsize OCM-DDPM ($K=10$)]{\includegraphics[width=0.24\linewidth]{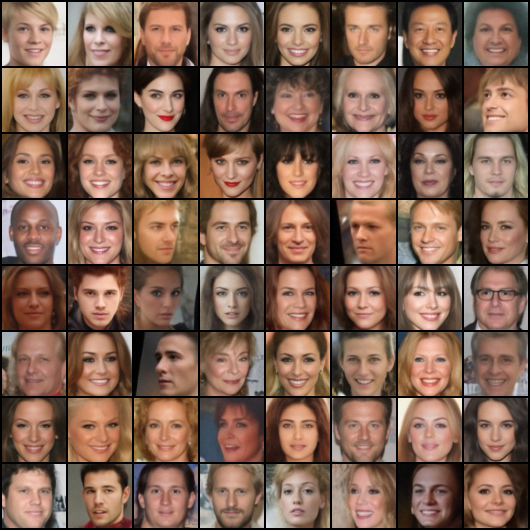}}\ 
\subfloat[\scriptsize OCM-DDPM ($K=25$)]{\includegraphics[width=0.24\linewidth]{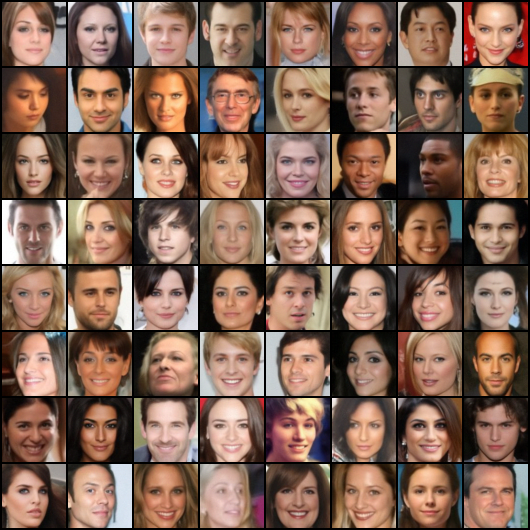}}\ 
\subfloat[\scriptsize OCM-DDPM ($K=50$)]{\includegraphics[width=0.24\linewidth]{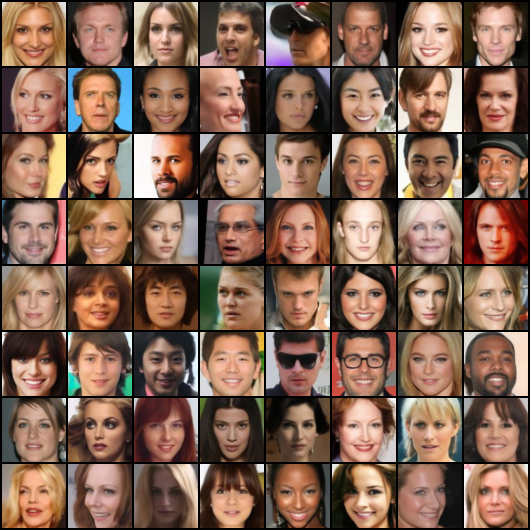}}\ 
\subfloat[\scriptsize OCM-DDPM ($K=100$)]{\includegraphics[width=0.24\linewidth]{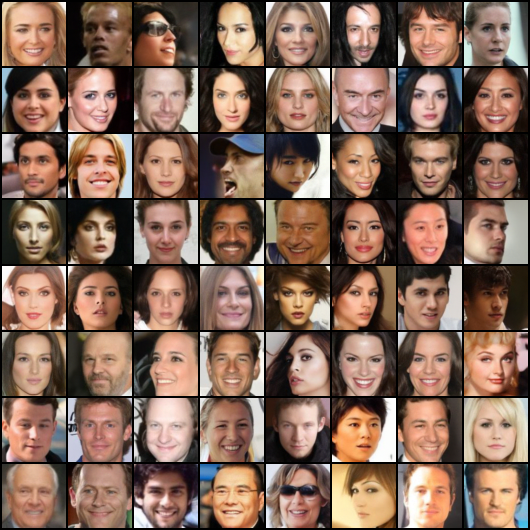}} \\
\vspace{.2cm}
\subfloat[\scriptsize OCM-DDIM ($K=10$)]{\includegraphics[width=0.24\linewidth]{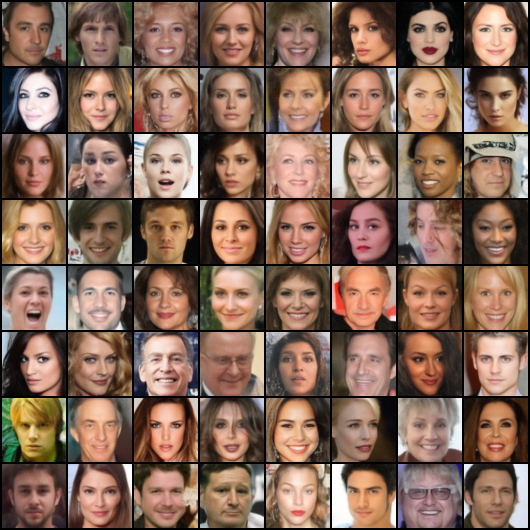}}\ 
\subfloat[\scriptsize OCM-DDIM ($K=25$)]{\includegraphics[width=0.24\linewidth]{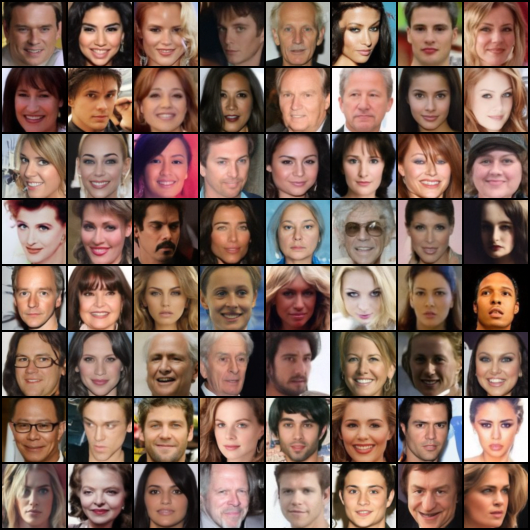}}\ 
\subfloat[\scriptsize OCM-DDIM ($K=50$)]{\includegraphics[width=0.24\linewidth]{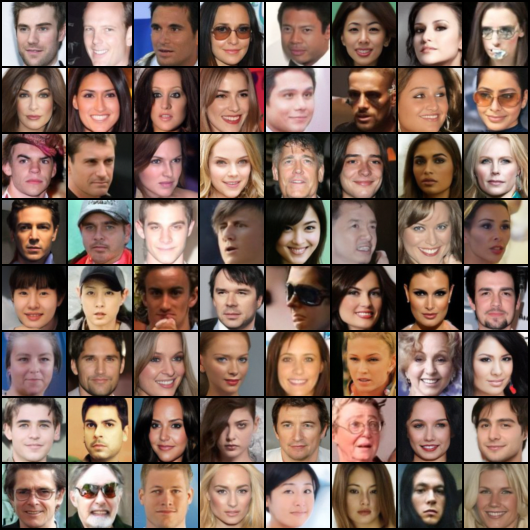}}\ 
\subfloat[\scriptsize OCM-DDIM ($K=100$)]{\includegraphics[width=0.24\linewidth]{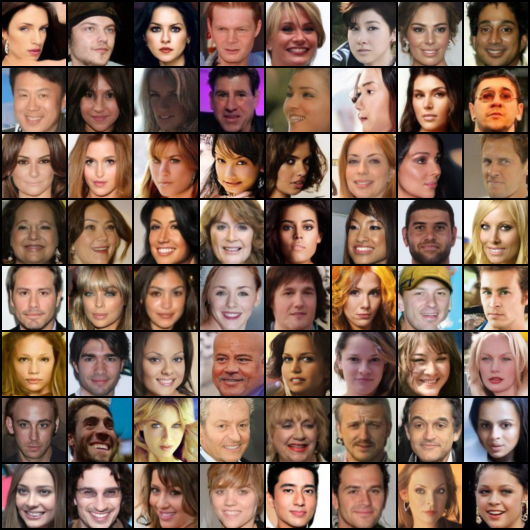}}
\vspace{-.1cm}
\caption{Generated samples with different sampling steps on CelebA 64x64 (ref: \cref{tab:fid}).}
\label{fig:K_celeba64}
\end{center}
\end{minipage}
\end{figure}

\begin{figure}[t]
\centering
\begin{minipage}{\textwidth}
\begin{center}
\subfloat[\scriptsize OCM-DDPM ($K=10$)]{\includegraphics[width=0.24\linewidth]{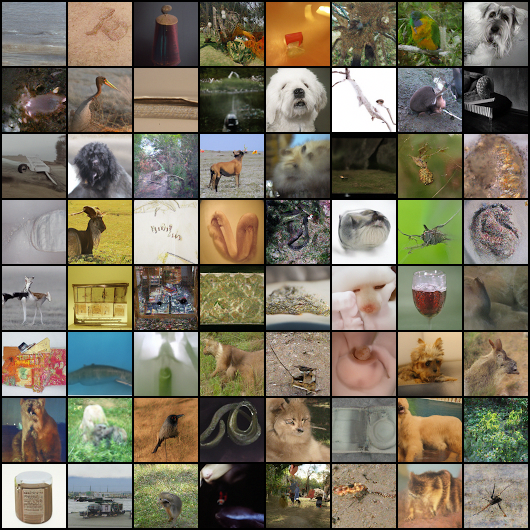}}\ 
\subfloat[\scriptsize OCM-DDPM ($K=25$)]{\includegraphics[width=0.24\linewidth]{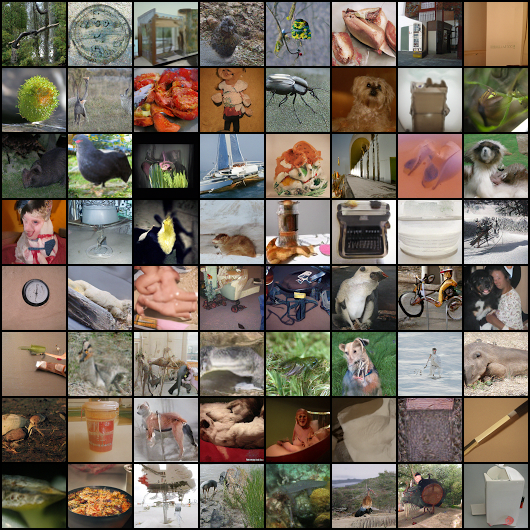}}\ 
\subfloat[\scriptsize OCM-DDPM ($K=50$)]{\includegraphics[width=0.24\linewidth]{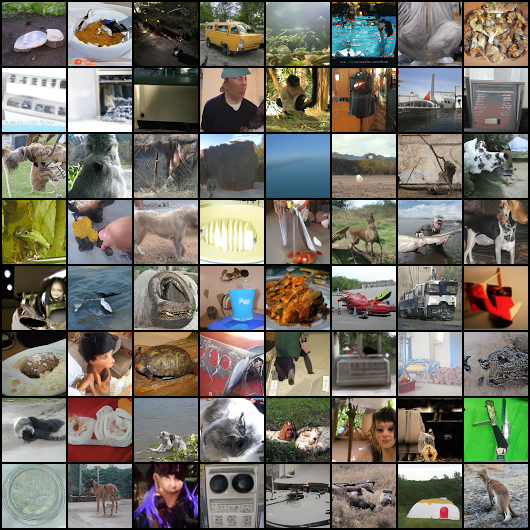}}\ 
\subfloat[\scriptsize OCM-DDPM ($K=100$)]{\includegraphics[width=0.24\linewidth]{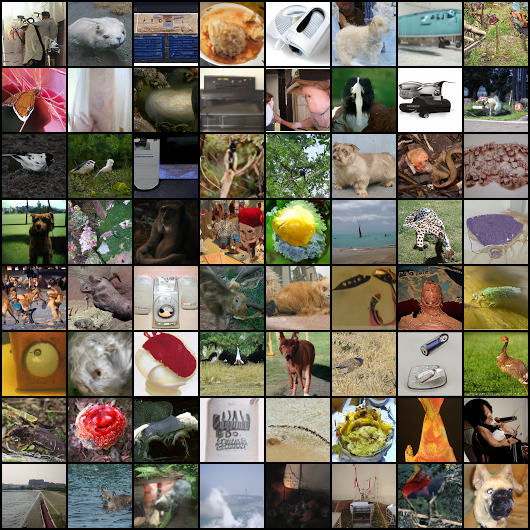}} \\
\vspace{.2cm}
\subfloat[\scriptsize OCM-DDIM ($K=10$)]{\includegraphics[width=0.24\linewidth]{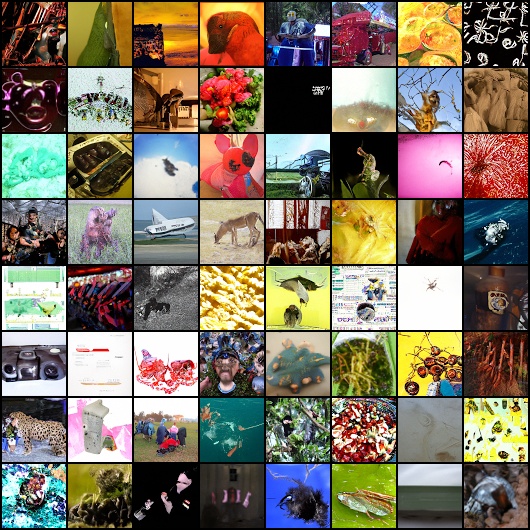}}\ 
\subfloat[\scriptsize OCM-DDIM ($K=25$)]{\includegraphics[width=0.24\linewidth]{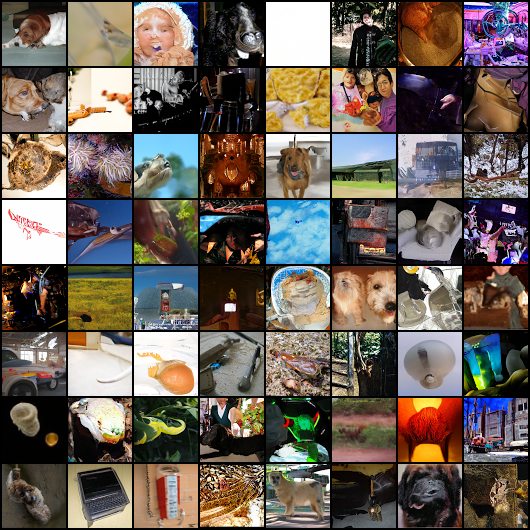}}\ 
\subfloat[\scriptsize OCM-DDIM ($K=50$)]{\includegraphics[width=0.24\linewidth]{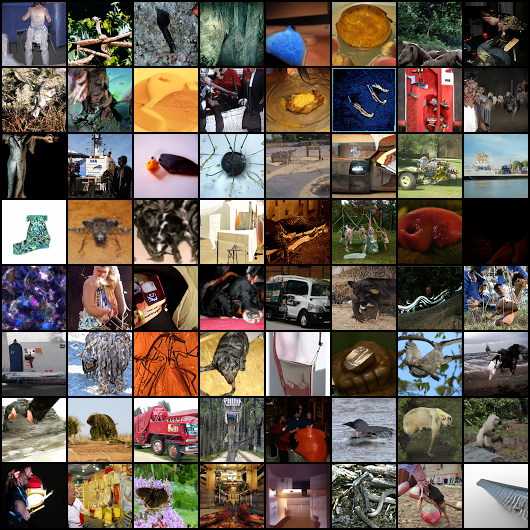}}\ 
\subfloat[\scriptsize OCM-DDIM ($K=100$)]{\includegraphics[width=0.24\linewidth]{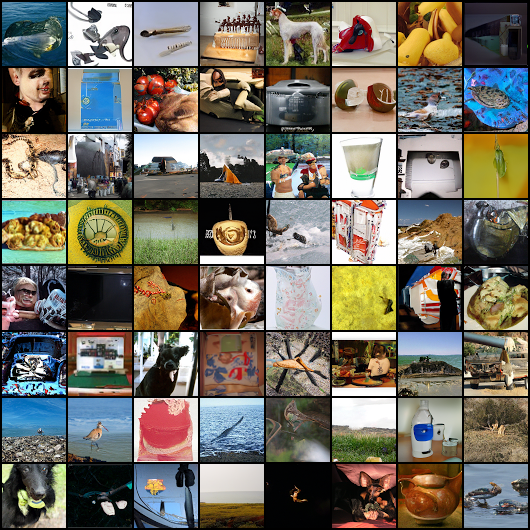}}
\vspace{-.1cm}
\caption{Generated samples with different sampling steps on ImageNet 64x64 (ref: \cref{tab:fid}).}
\label{fig:K_imagenet64}
\end{center}
\end{minipage}
\end{figure}

\begin{figure}[t]
\centering
\begin{minipage}{\textwidth}
\begin{center}
\subfloat[\scriptsize OCM-DDPM ($\text{CFG}\!=\!1.5$)]{\includegraphics[width=0.24\linewidth]{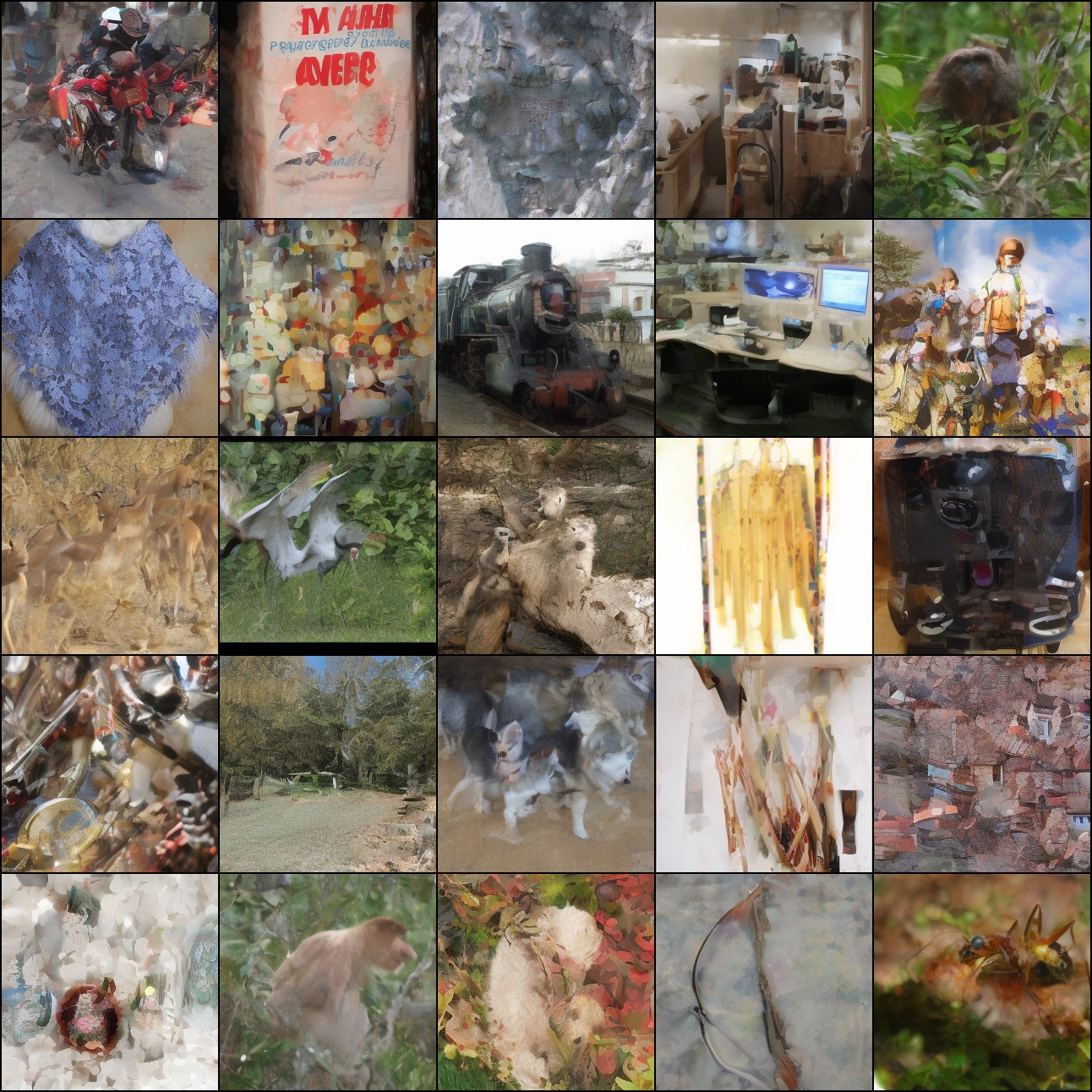}}\ 
\subfloat[\scriptsize OCM-DDPM ($\text{CFG}\!=\!1.75$)]{\includegraphics[width=0.24\linewidth]{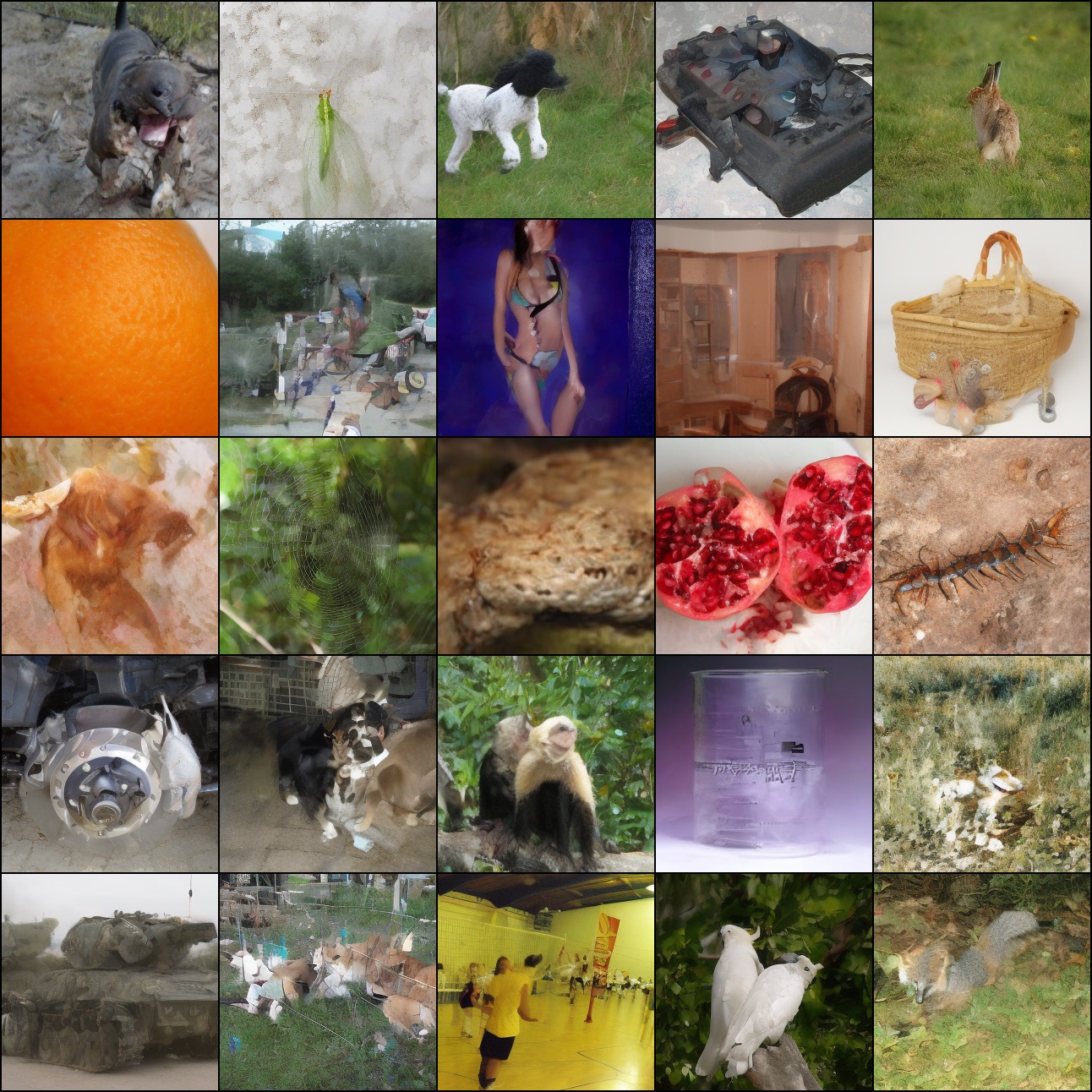}}\ 
\subfloat[\scriptsize OCM-DDPM ($\text{CFG}\!=\!2.0$)]{\includegraphics[width=0.24\linewidth]{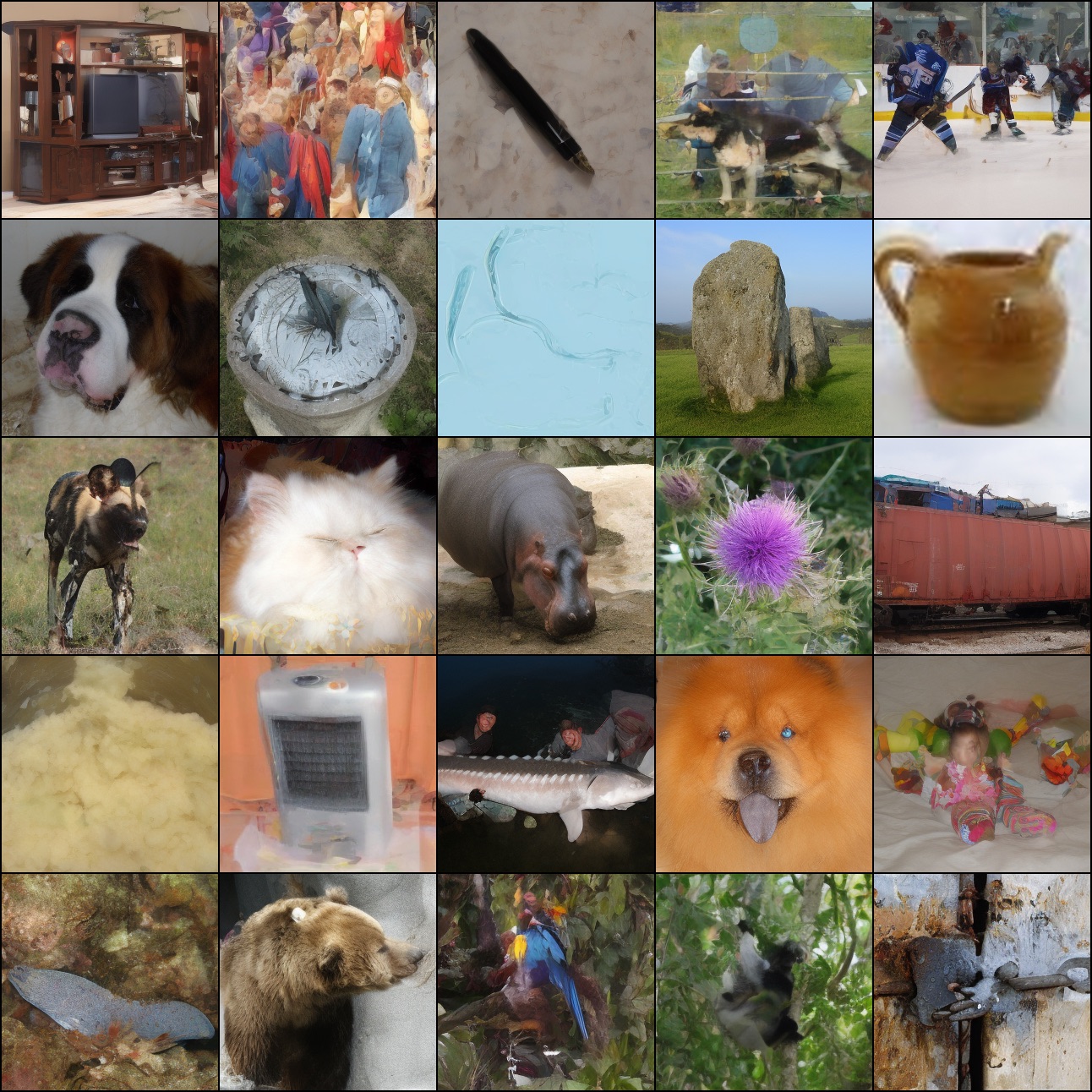}}\ 
\subfloat[\scriptsize OCM-DDPM ($\text{CFG}\!=\!3.0$)]{\includegraphics[width=0.24\linewidth]{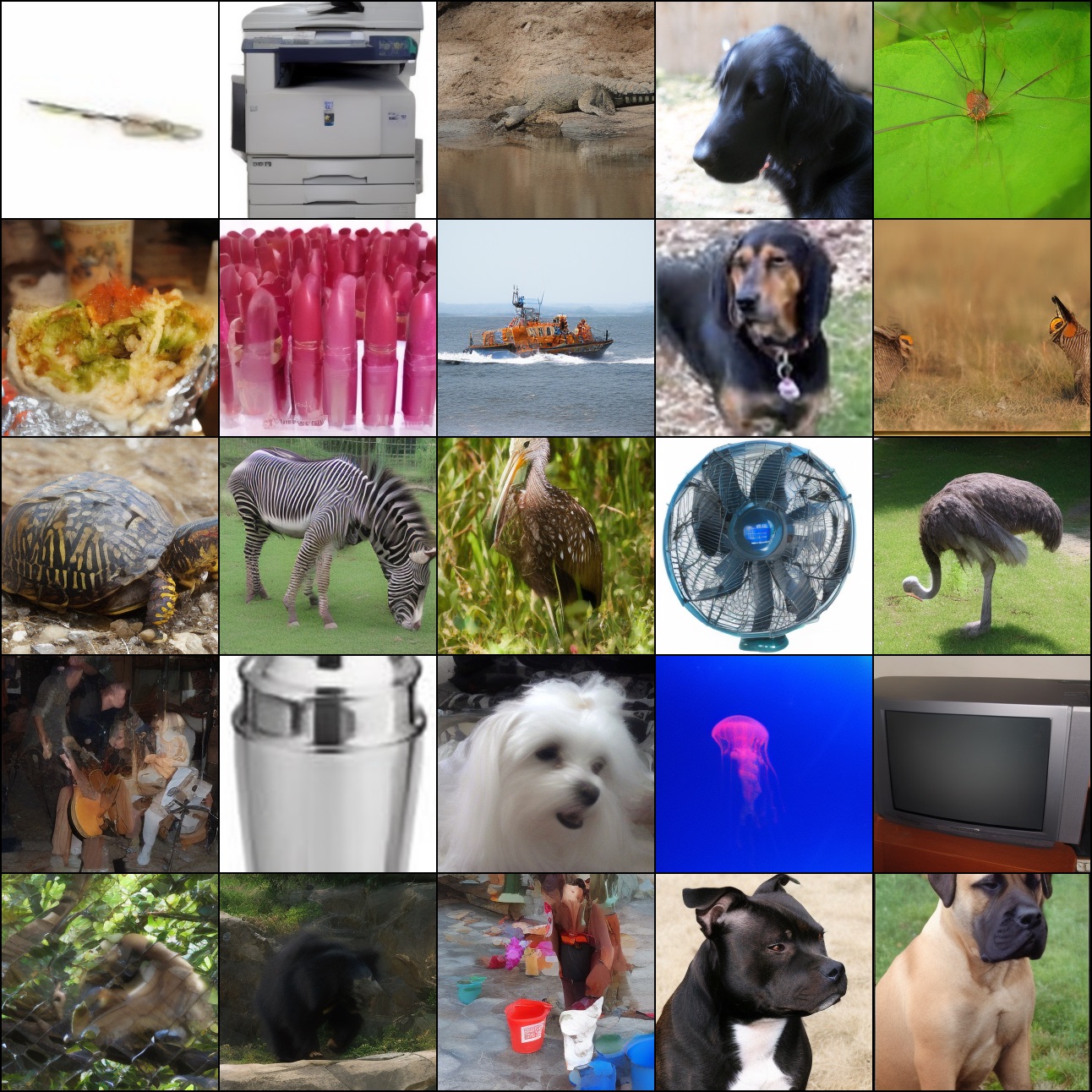}} \\
\vspace{.2cm}
\subfloat[\scriptsize OCM-DDIM ($\text{CFG}\!=\!1.5$)]{\includegraphics[width=0.24\linewidth]{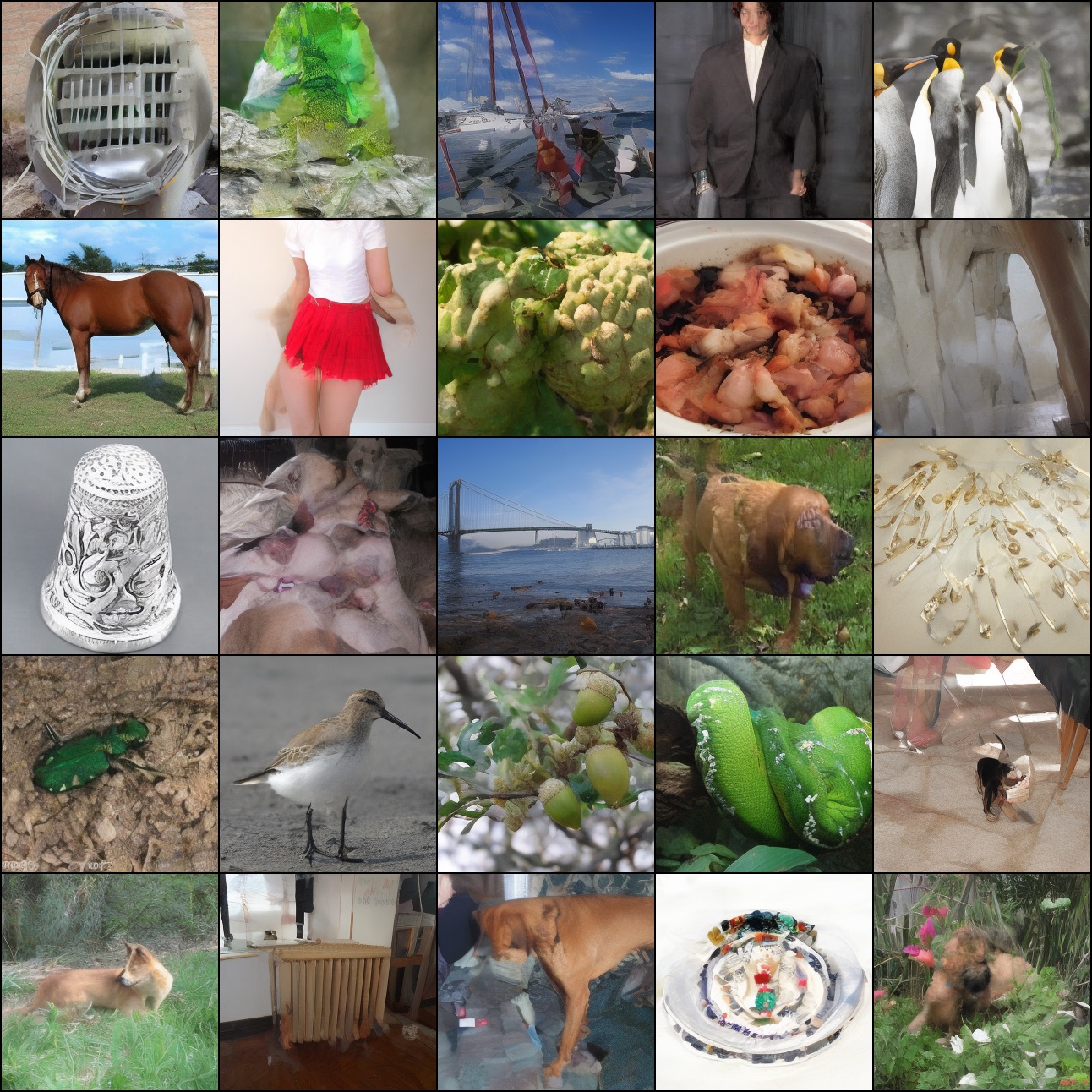}}\ 
\subfloat[\scriptsize OCM-DDIM ($\text{CFG}\!=\!1.75$)]{\includegraphics[width=0.24\linewidth]{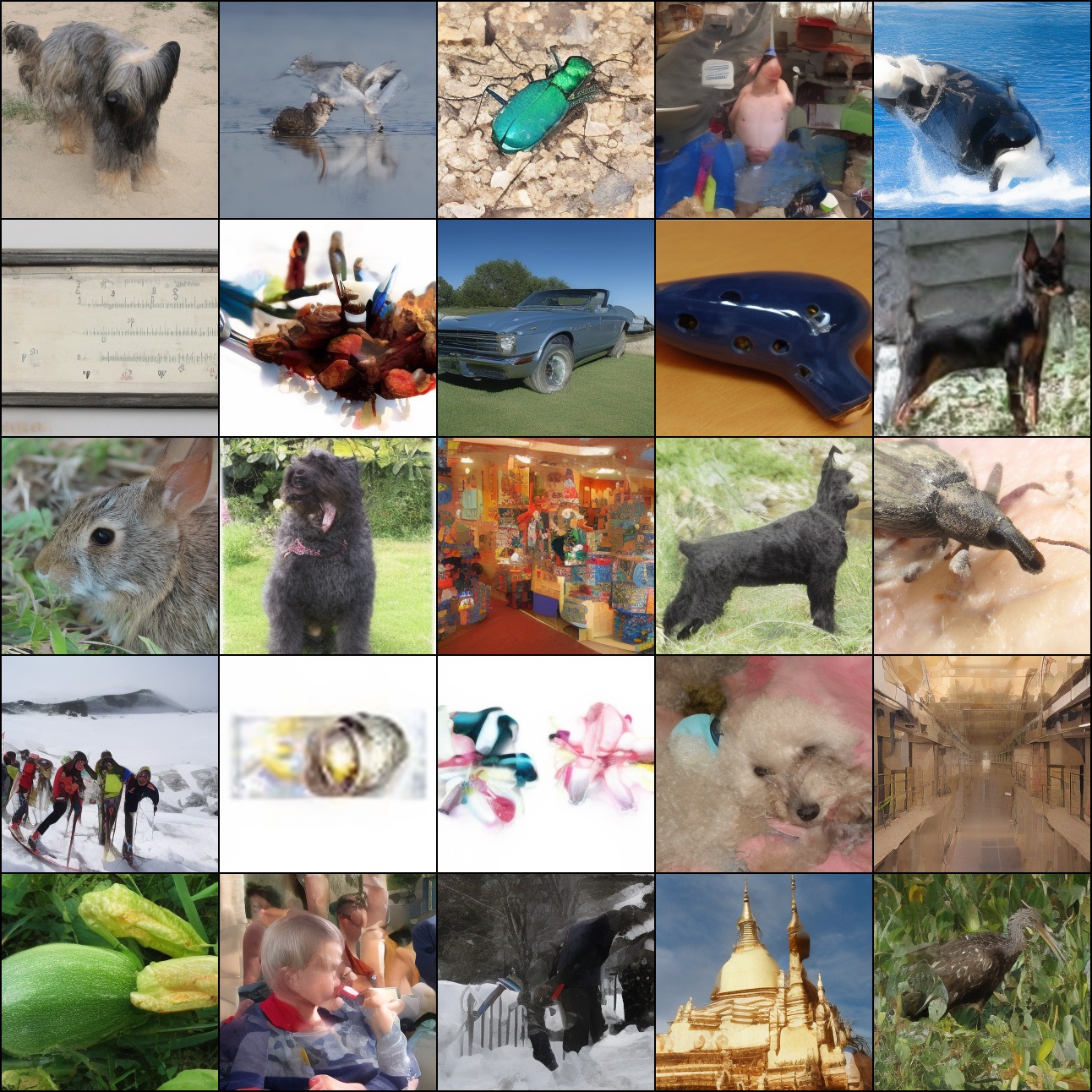}}\ 
\subfloat[\scriptsize OCM-DDIM ($\text{CFG}\!=\!2.0$)]{\includegraphics[width=0.24\linewidth]{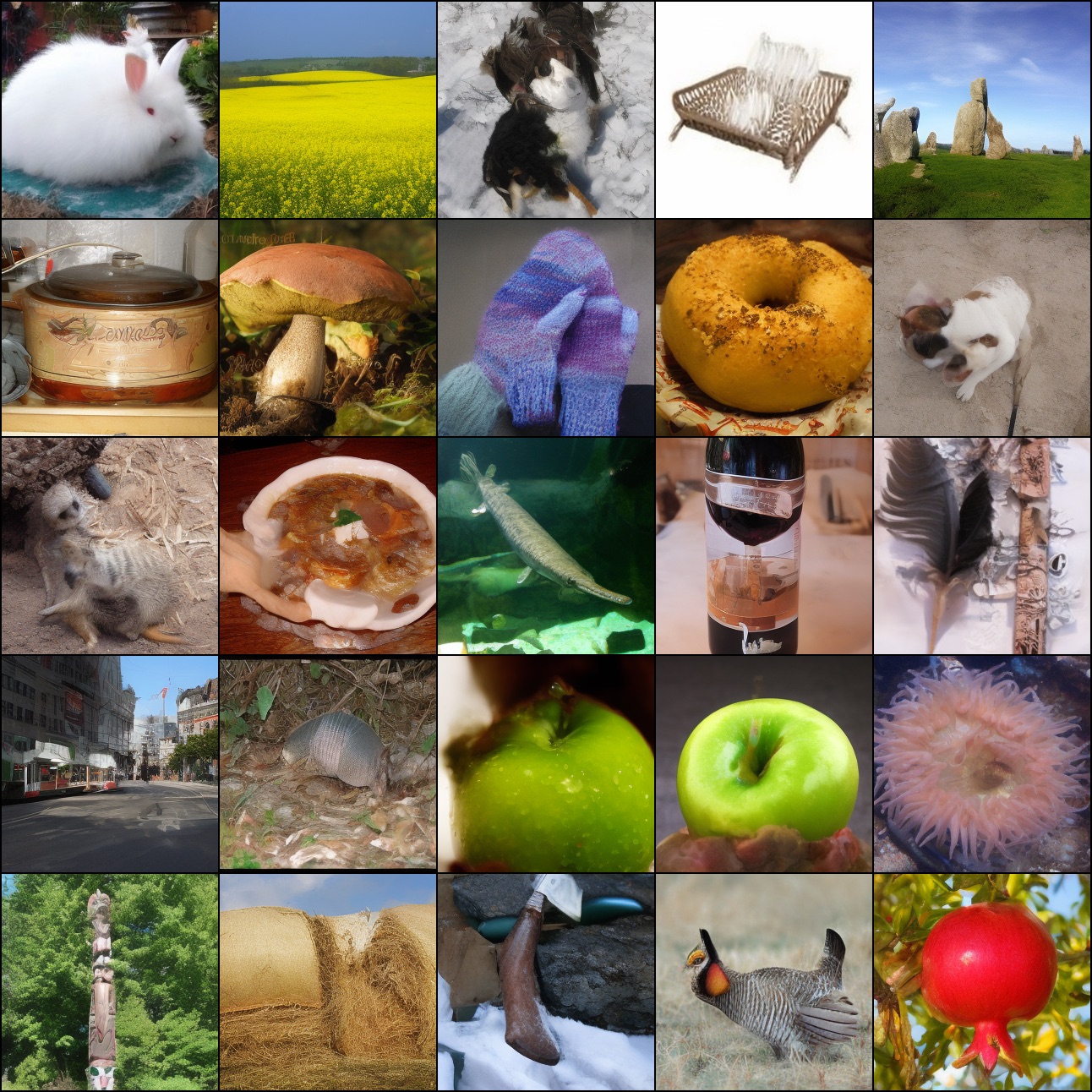}}\ 
\subfloat[\scriptsize OCM-DDIM ($\text{CFG}\!=\!3.0$)]{\includegraphics[width=0.24\linewidth]{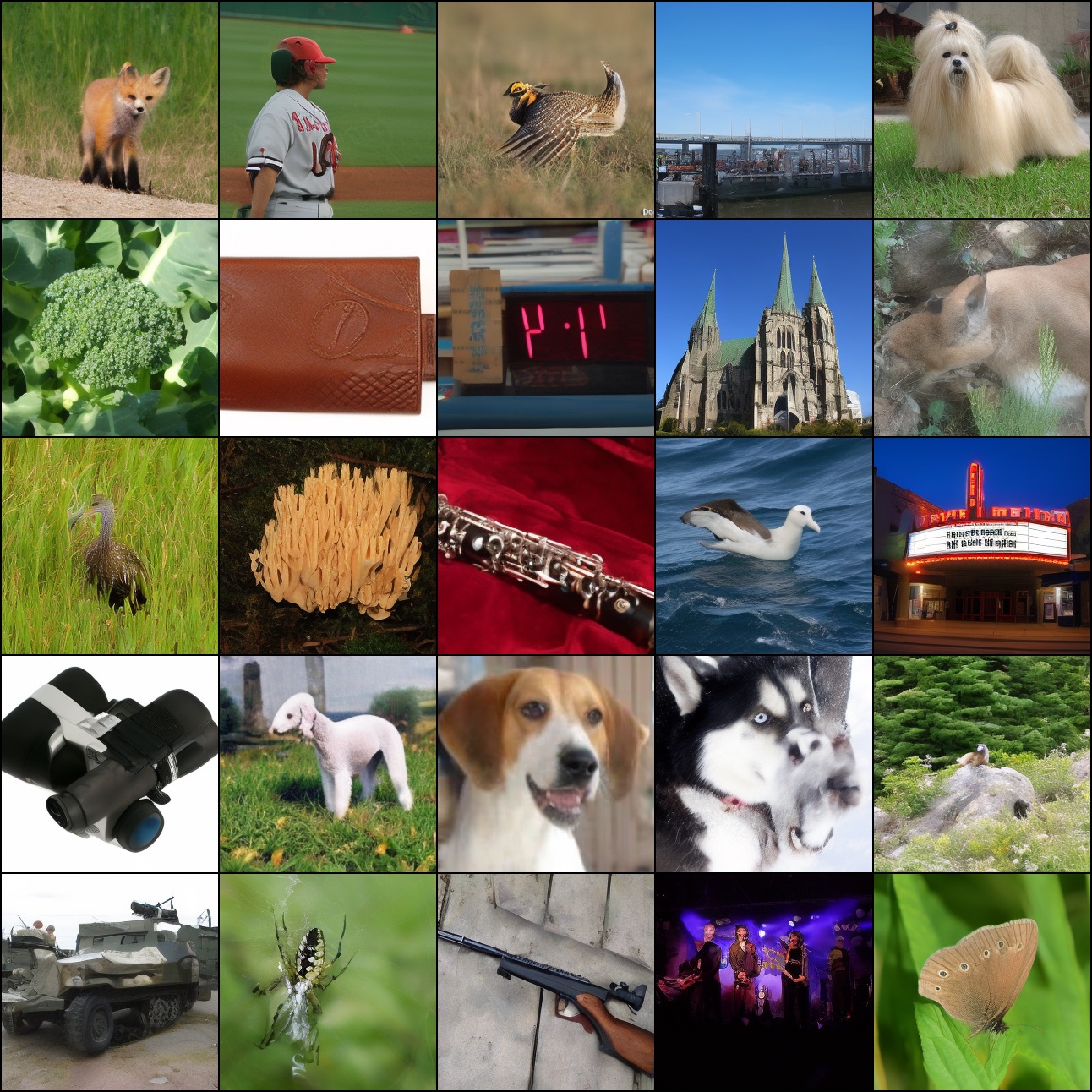}}
\vspace{-.1cm}
\caption{Generated samples with 10 sampling steps using different CFG coefficients on ImageNet 256x256 (ref: \cref{tab:dit-fid-recall})}
\label{fig:K_imagenet256_diffcfg}
\end{center}
\end{minipage}
\end{figure}

\begin{figure}[t]
\centering
\begin{minipage}{\textwidth}
\begin{center}
\subfloat[\scriptsize OCM-DDPM ($K=25$)]{\includegraphics[width=0.24\linewidth]{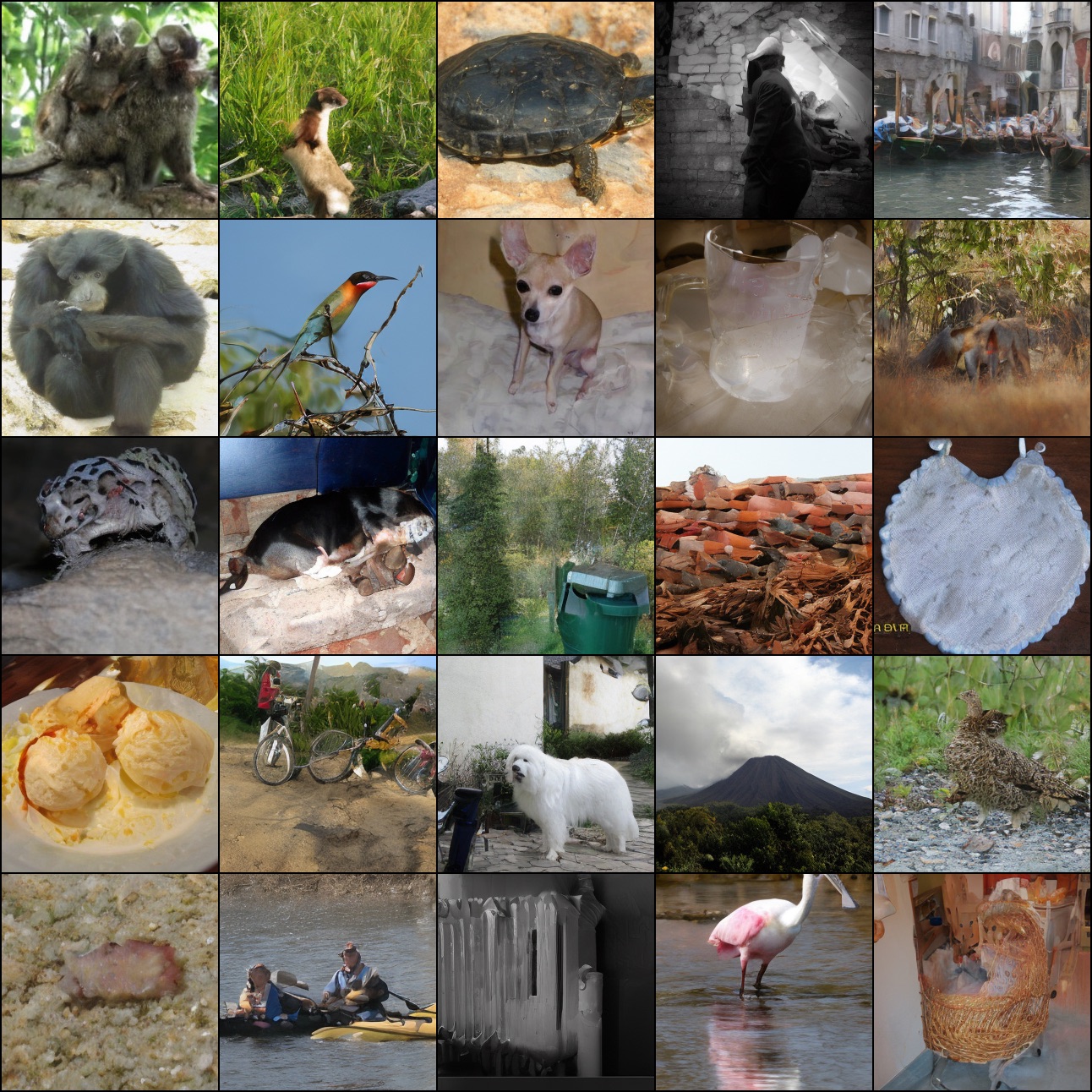}}\ 
\subfloat[\scriptsize OCM-DDPM ($K=50$)]{\includegraphics[width=0.24\linewidth]{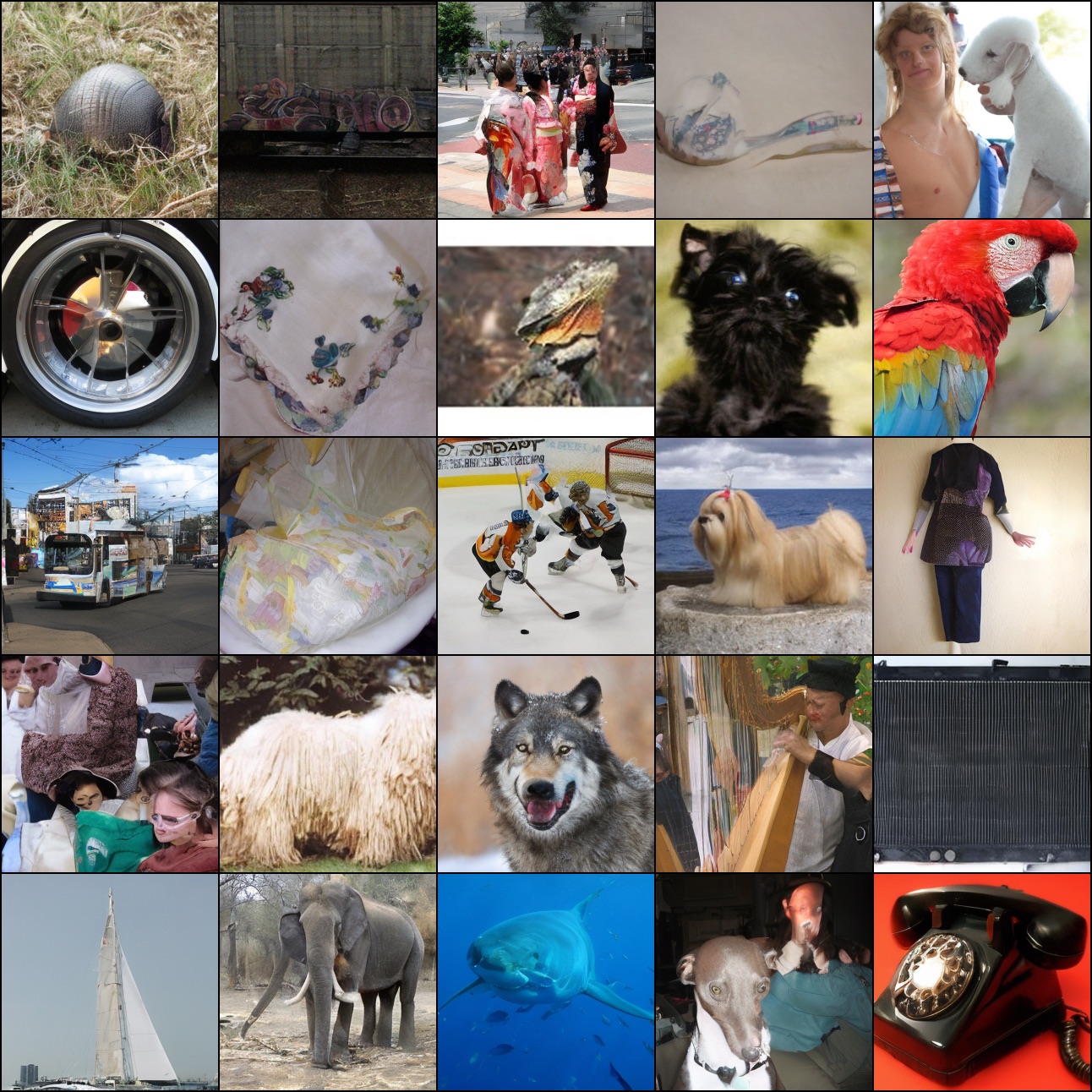}}\ 
\subfloat[\scriptsize OCM-DDPM ($K=100$)]{\includegraphics[width=0.24\linewidth]{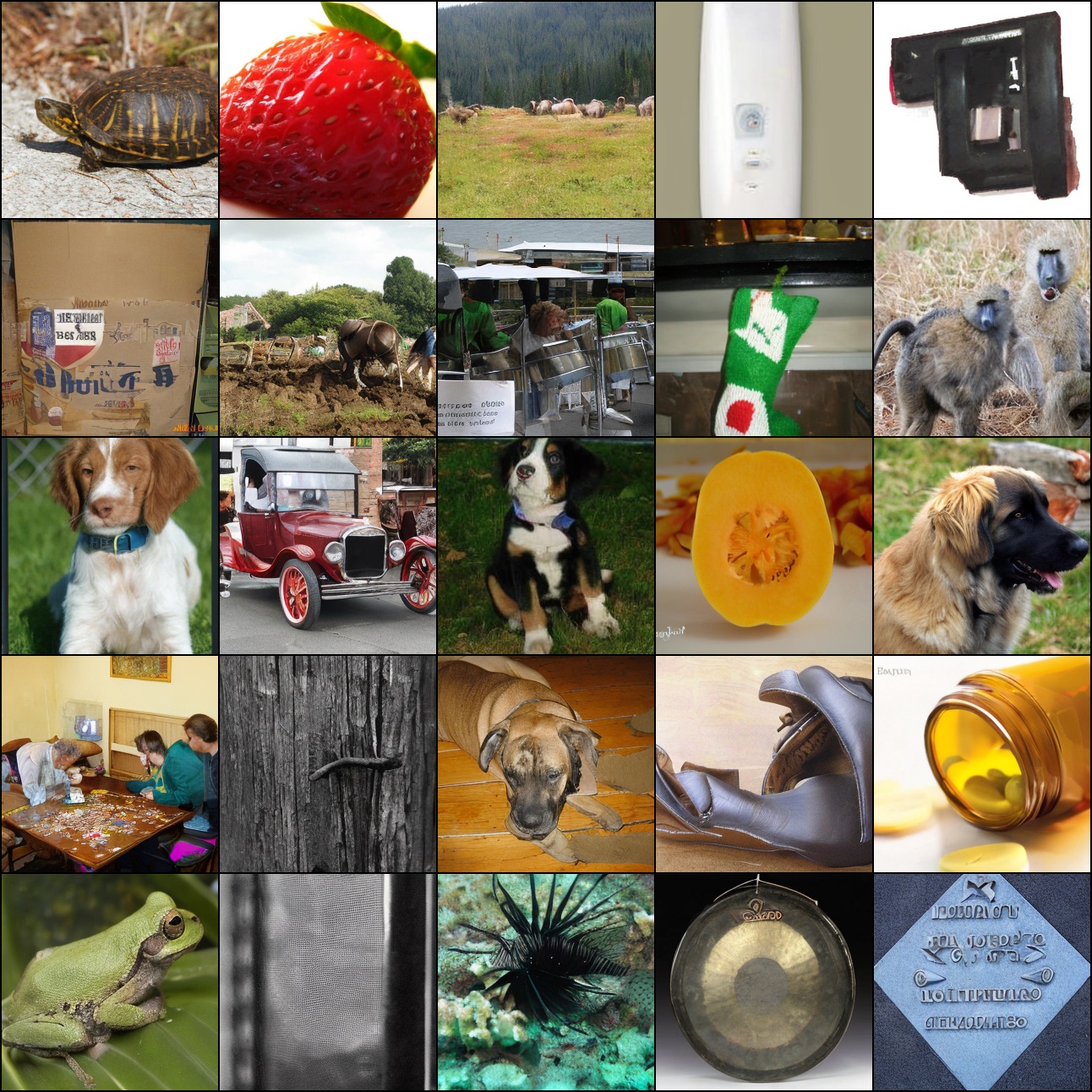}}\ 
\subfloat[\scriptsize OCM-DDPM ($K=200$)]{\includegraphics[width=0.24\linewidth]{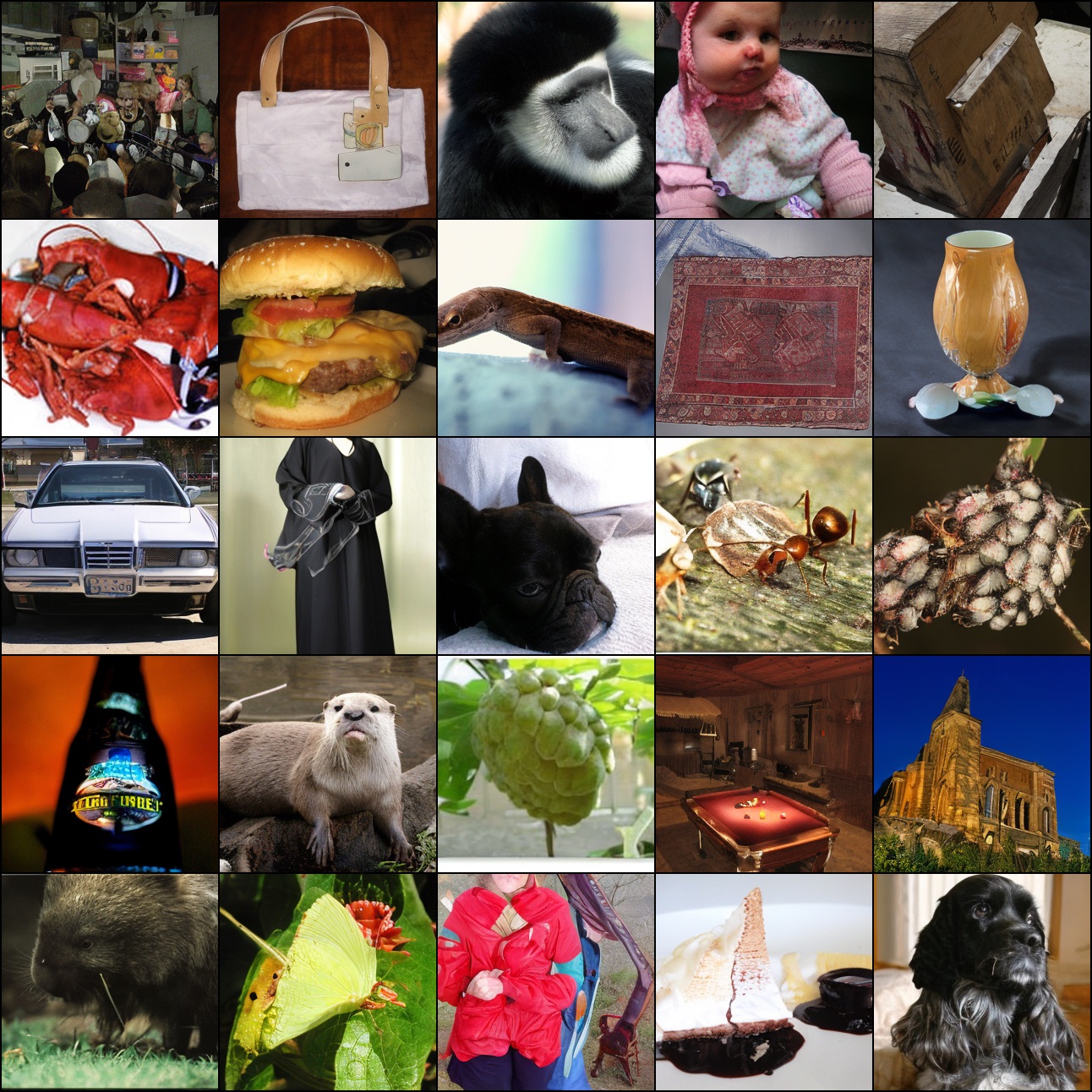}} \\
\vspace{.2cm}
\subfloat[\scriptsize OCM-DDIM ($K=25$)]{\includegraphics[width=0.24\linewidth]{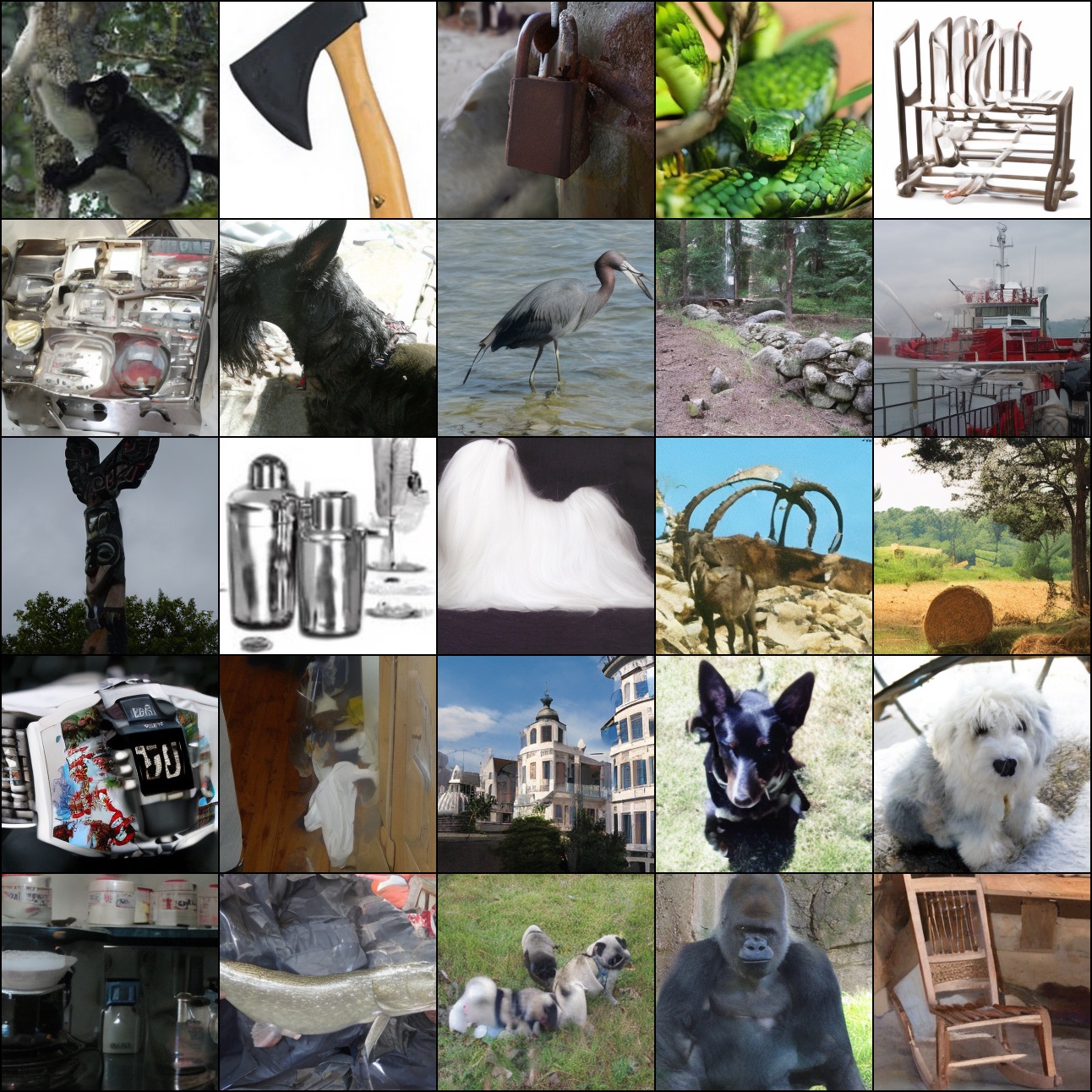}}\ 
\subfloat[\scriptsize OCM-DDIM ($K=50$)]{\includegraphics[width=0.24\linewidth]{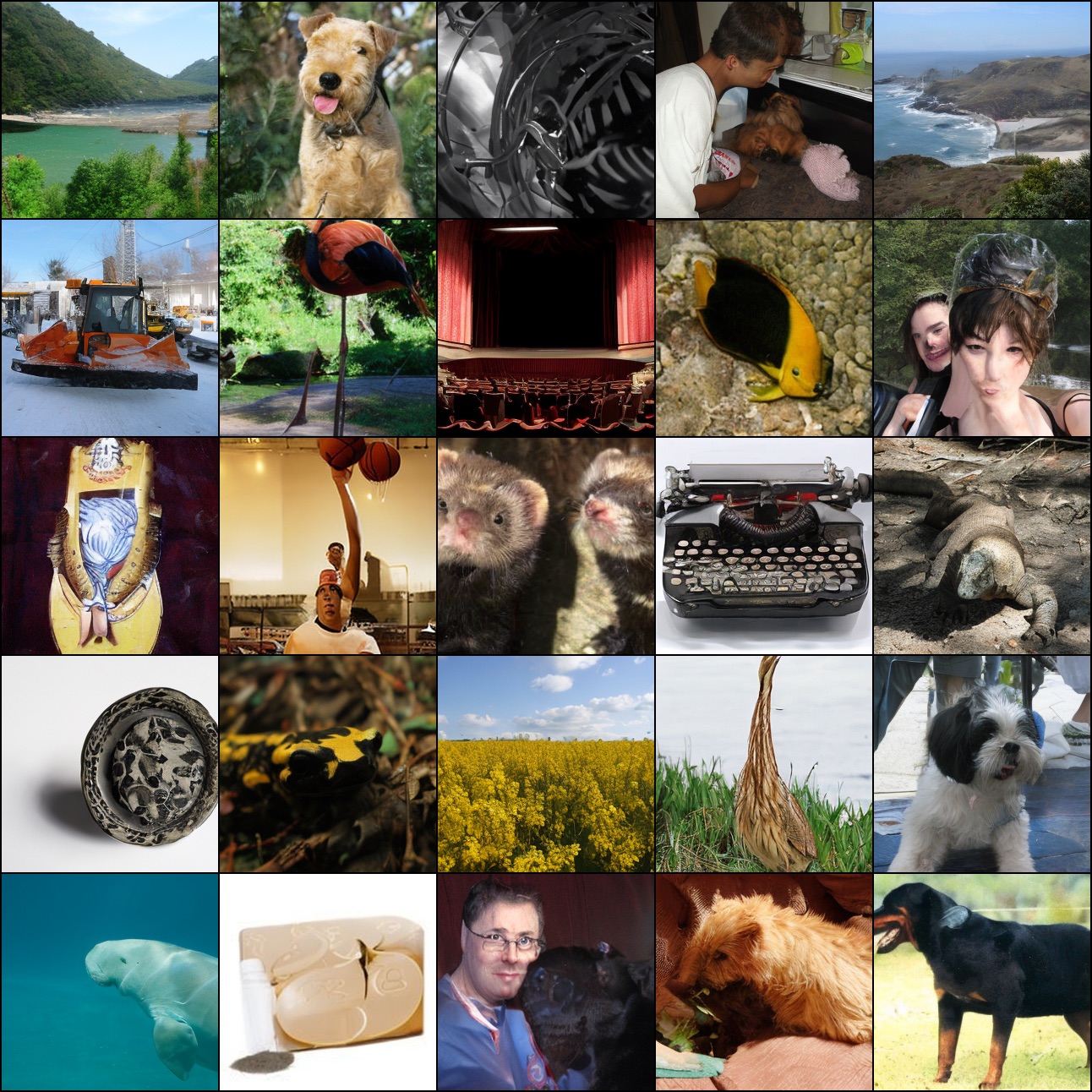}}\ 
\subfloat[\scriptsize OCM-DDIM ($K=100$)]{\includegraphics[width=0.24\linewidth]{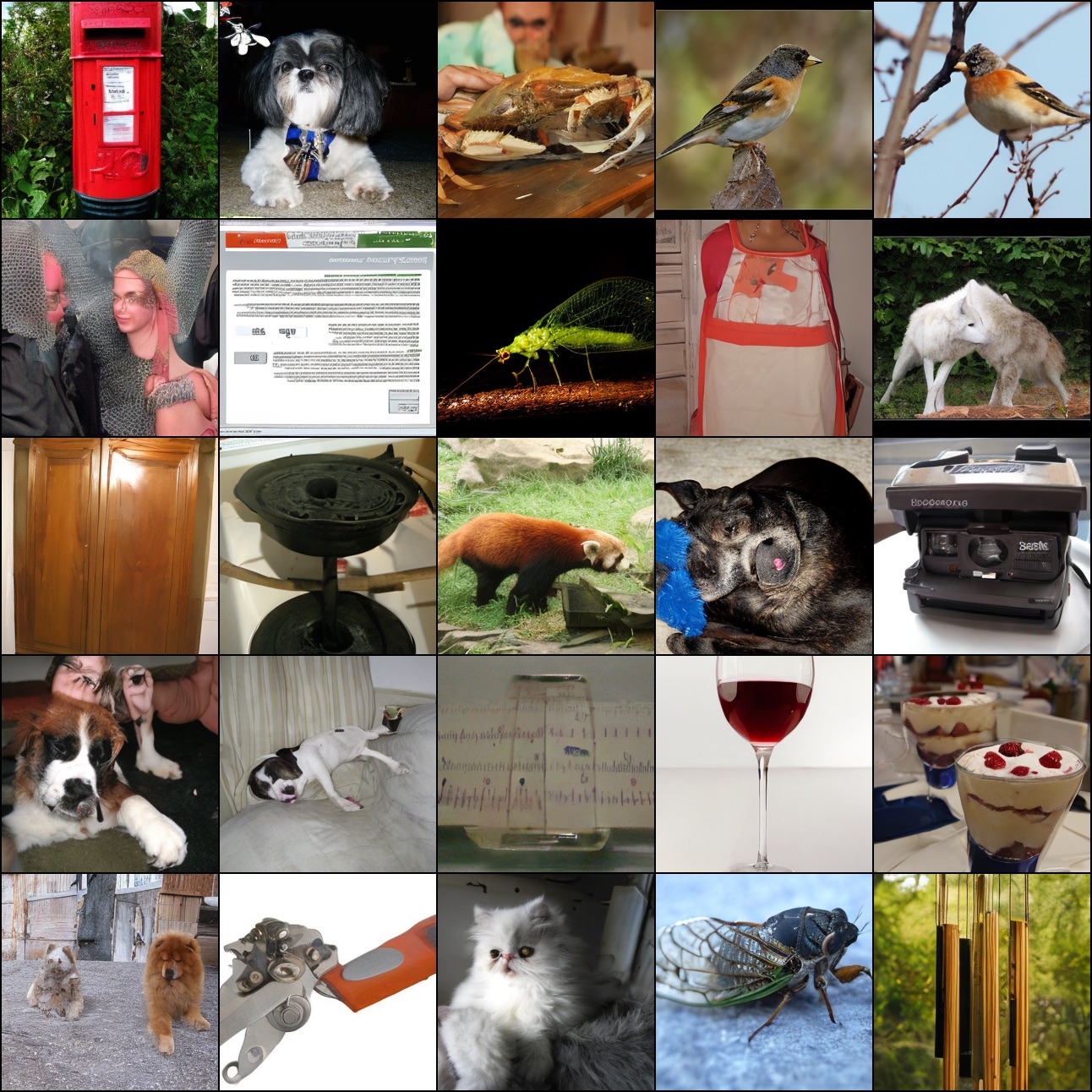}}\ 
\subfloat[\scriptsize OCM-DDIM ($K=200$)]{\includegraphics[width=0.24\linewidth]{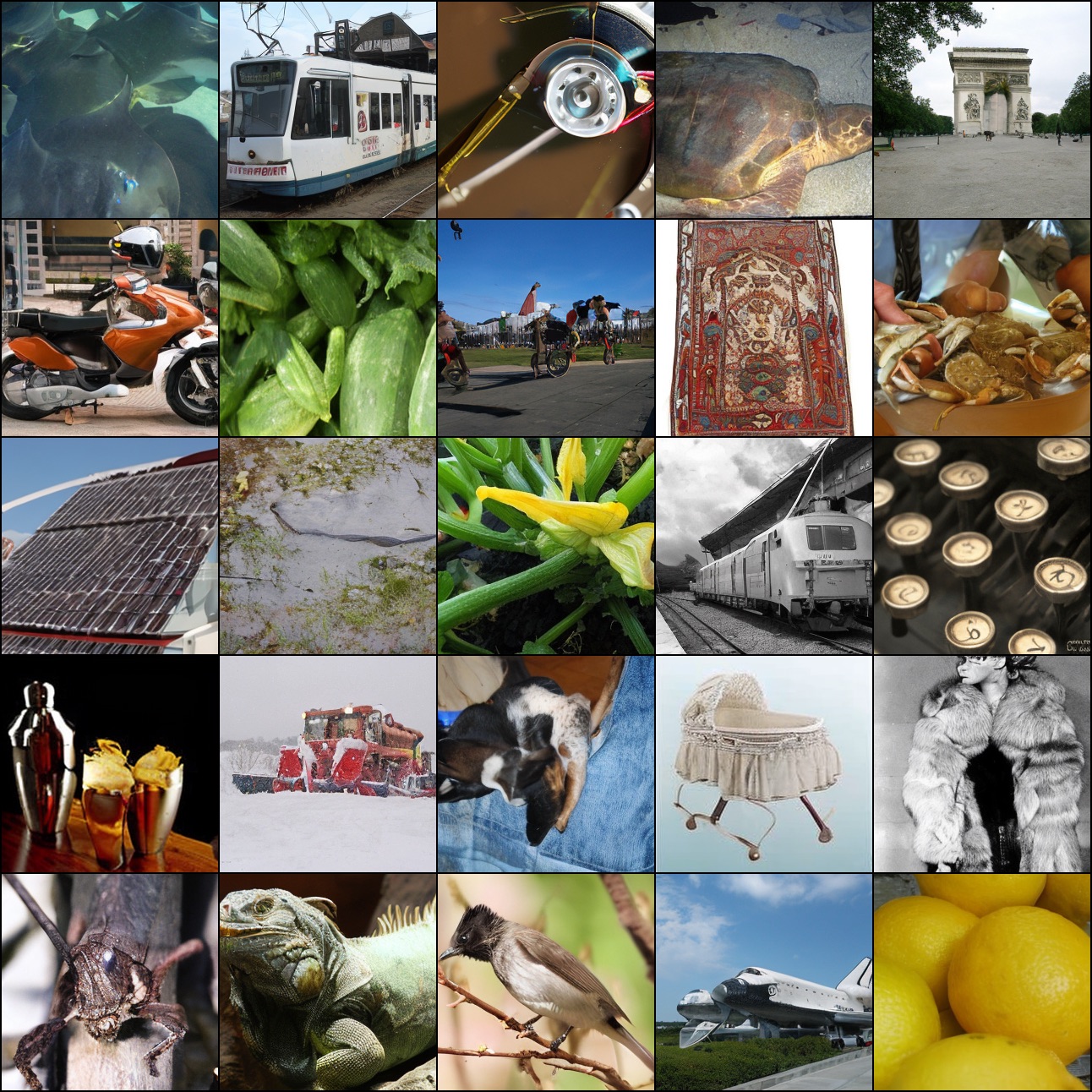}}
\vspace{-.1cm}
\caption{Generated samples with CFG=$1.5$ using different sampling steps on ImageNet 256x256 (ref: \cref{tab:dit-fid-different-steps})}
\label{fig:K_imagenet256_diffsteps}
\end{center}
\end{minipage}
\end{figure}

\end{document}